\crefname{section}{\S}{\S\S}
\Crefname{section}{\S}{\S\S}
\definecolor{mylightgray}{gray}{0.95}
\definecolor{mylightgray2}{gray}{0.85}
\newcommand{\cmark}{\textcolor{green!60!black}{\ding{51}}\xspace}
\newcommand{\xmark}{\textcolor{red!60!black}{\ding{55}}\xspace}
\definecolor{codegreen}{rgb}{0,0.6,0}
\definecolor{codegray}{rgb}{0.5,0.5,0.5}
\definecolor{codepurple}{rgb}{0.58,0,0.82}
\definecolor{backcolour}{rgb}{0.95,0.95,0.92}
\lstdefinestyle{mystyle}{
    backgroundcolor=\color{white},
    numberstyle=\tiny\color{codegray},
    stringstyle=\color{codepurple},
    basicstyle=\fontsize{7.5pt}{7.5pt}\ttfamily\selectfont,
    commentstyle=\fontsize{7.5pt}{7.5pt}\color{codegreen},
    keywordstyle=\fontsize{7.5pt}{7.5pt}\color{magenta},
    breakatwhitespace=false,
    breaklines=true,
    captionpos=b,
    keepspaces=true,
    numbers=left,
    numbersep=5pt,
    showspaces=false,
    showstringspaces=false,
    showtabs=false,
    tabsize=2
}
 \tikzset{%
             base/.style = {rectangle, draw=black,
                            minimum width=4cm, minimum height=1cm,
                            text centered, font=\sffamily},
   binary/.style = {base, minimum width=1cm},
        startstop/.style = {base, fill=red!30, minimum width=2cm},
     activityRuns/.style = {base, fill=green!30},
          process/.style = {base, minimum width=2cm, fill=gray!15,
                            font=\ttfamily},
          sum/.style      = {draw, circle, node distance = 1.5cm},
 }
\pgfplotsset{compat=1.6}
\tikzstyle{every picture}+=[font=\sffamily]
\tikzstyle{optimized} = [circle,fill=white,draw=black, dashed,inner sep=1pt, minimum size=20pt, font=\fontsize{10}{10}\selectfont, node distance=1]
\pgfplotsset{
	tick label style = {font=\sffamily},
	every axis label/.append style={font=\sffamily},
	typeset ticklabels with strut,
}
\pgfplotsset{every axis/.append style={
			every x tick label/.append style={font=\fontsize{6pt}{6pt}\sffamily, yshift=.5ex,},
			every y tick label/.append style={font=\fontsize{6pt}{6pt}\sffamily, xshift=.5ex},
			every y label/.append style={xshift=10ex, font=\sffamily},
			every x label/.append style={yshift=3ex, font=\sffamily},
			every title/.append style={font=\sffamily}
		},
}
\pgfplotsset{
  xticklabel={$\mathsf{\pgfmathprintnumber{\tick}}$},
  yticklabel={$\mathsf{\pgfmathprintnumber{\tick}}$},
}
\pgfplotsset{every axis title/.append style={yshift=-1ex}}
\newlength\figureheight
\newlength\figurewidth
\definecolor{color_binarynet}{HTML}{bcbd22}
\definecolor{color_binaryconnect}{HTML}{17becf}
\definecolor{color_xnornet}{HTML}{2ca02c}
\definecolor{color_birealnet}{HTML}{e377c2}
\definecolor{color_real2binary}{HTML}{8c564b}
\definecolor{color_reactnet}{HTML}{9467bd}
\definecolor{color_meliusnet}{HTML}{7f7f7f}
\definecolor{color_bnext}{HTML}{ff7f0e}
\definecolor{color_pokebnn}{HTML}{1f77b4}
\definecolor{color_booldl}{HTML}{d62728}
\definecolor{lightblue}{HTML}{ff7f2a}
\definecolor{lighterblue}{HTML}{ffe6d5}%
\newtcolorbox{mybox}{colback=mylightgray,colframe=mylightgray,top=1.2pt,bottom=1.2pt,right=1.8pt,left=1.8pt}
\newtcolorbox{myboxcontrib}{colback=mylightgray2,colframe=mylightgray2,top=1.2pt,bottom=1.2pt,right=1.8pt,left=1.8pt}
\definecolor{asparagus}{rgb}{0.53, 0.66, 0.42}
\definecolor{b_df_c}{rgb}{0, 0.69, 0.31}
\definecolor{b_block_c}{rgb}{0.57, 0.81, 0.31}
\definecolor{fp_df_c}{rgb}{0.78, 0, 0.04}
\definecolor{fp_block_c}{rgb}{0.78, 0, 0.04}
\definecolor{int_df_c}{rgb}{0, 0.2, 0.8}
\theoremstyle{plain}
\newtheorem{theorem}{Theorem}[section]
\newtheorem{proposition}[theorem]{Proposition}
\newtheorem{lemma}[theorem]{Lemma}
\newtheorem{definition}[theorem]{\textit{Definition}}
\newtheorem{assumption}{A.}
\theoremstyle{remark}
\newtheorem{remark}[theorem]{\textit{Remark}}%
\newtheorem{example}[theorem]{\textit{Example}}%
\newtheorem{notation}[theorem]{\textit{Notation}}%
\newcommand{\fref}[1]{Fig.~\ref{#1}}
\newcommand{\eref}[1]{(\ref{#1})}
\newcommand{\thmref}[1]{Theorem~\ref{#1}}
\newcommand{\True}{\mathrm{T}}
\newcommand{\False}{\mathrm{F}}
\newcommand{\xor}{\mathbf{xor}}
\newcommand{\xnor}{\mathbf{xnor}}
\newcommand{\maj}{\mathbf{maj}}
\newcommand{\Conv}{\mathrm{Conv}}
\newcommand{\full}{\mathrm{full}}
\newcommand{\logic}{\mathrm{logic}}
\newcommand{\MP}{\mathrm{MP}}
\def\Bb{\mathbb{B}}
\def\Db{\mathbb{D}}
\def\Ec{\mathcal{E}}
\def\Fc{\mathcal{F}}
\def\Lb{\mathbb{L}}
\def\Lm{\mathrm{L}}
\def\Loss{\mathit{Loss}}
\def\Mb{\mathbb{M}}
\def\Nb{\mathbb{N}}
\def\Nc{\mathcal{N}}
\def\Rb{\mathbb{R}}
\def\Zb{\mathbb{Z}}
\def\Loss{\mathrm{Loss}}
\def\xb{\mathbf{x}}
\def\wrt{w.r.t. }
\newcommand{\pren}[1]{\mleft(#1\mright)}
\DeclareMathOperator*{\deq}{\overset{{\rm def}}{=}}
\newcommand{\bvar}[1]{\delta#1} %
\newcommand{\one}[1]{\mathbf{1}\mleft(#1\mright)}
\newcommand{\E}[1]{\mathbb{E}\mleft[#1\mright]}
\newcommand{\Et}[1]{\mathbb{E}_t\mleft[#1\mright]}
\def\sign{\operatorname{sign}}
\def\var{\operatorname{Var}}
\def\tanh{\operatorname{tanh}}
\def\proj{\operatorname{p}}
\def\emb{\operatorname{e}}
\def\env@sqcases{%
	\let\@ifnextchar\new@ifnextchar
	\left\lbrack
	\def\arraystretch{1.2}%
	\array{@{}l@{\quad}l@{}}%
}
\newcommand \good[1]{\textcolor{blue}{#1$^\dag$}}
\newcommand \bad[1]{\textcolor{orange}{#1$^*$}}
\newacronym{KD}{kd}{knowledge distillation}
\newacronym{BNN}{bnn}{binarized neural network}
\newacronym{STE}{ste}{straight-through-estimator}
\newacronym{SE}{se}{squeeze-and-excitation}
\newacronym{CNN}{cnn}{convolutional neural network}
\newacronym{FP}{fp}{full-precision}
\newacronym{SOTA}{sota}{state-of-the-art}
\newacronym{LLM}{llm}{large language model}
\newacronym{OP}{op}{compute operation}
\newacronym{NN}{nn}{neural network}
\newacronym{BN}{bn}{batch normalization}
\newacronym{FC}{fc}{fully-connected}
\newacronym{ASPP}{aspp}{atrous pyramid pooling}
\newacronym{LGN}{lgn}{logic gate network}
\newcommand{\ours}{\textsc{b}{\fontsize{8.5pt}{8.5pt}\selectfont\raisebox{\dimexpr1.43ex-\height}{$\oplus$}}\textsc{ld}\xspace}
\newcommand{\oursfig}{\textsc{b}{\fontsize{6.5pt}{6.5pt}\selectfont\raisebox{\dimexpr1.25ex-\height}{$\oplus$}}\textsc{ld}\xspace}
\newcommand{\ie}{i.e.}
\title{BOLD: Boolean Logic Deep Learning}
\author{
	Van Minh Nguyen\thanks{Van Minh developed the mathematical principle, designed and implemented the Boolean deep learning framework first in C++ and later in PyTorch, validated the concept on the MNIST and CIFAR-10 benchmarks, and led all aspects of the project. Cristian evaluated the design using computer vision benchmarks. Aymen analyzed and evaluated the computational complexity. Louis proposed the incorporation of plasticity effects in the Boolean optimizer and developed the convergence analysis. Ba Hien contributed to the design and evaluation of Boolean attention mechanisms and enhanced the quality of the paper's presentation.} \And Cristian Ocampo \And Aymen Askri \And Louis Leconte \And Ba-Hien Tran \\
	\AND \textnormal{Mathematical and Algorithmic Sciences Laboratory},\\ Huawei Paris Research Center, France \\ \texttt{vanminh.nguyen@huawei.com}
}
\begin{document}
\addtocontents{toc}{\protect\setcounter{tocdepth}{0}}

\maketitle

\begin{abstract}
  Computational intensiveness of deep learning has motivated low-precision arithmetic designs. %
However, the current quantized/binarized training approaches are limited by: %
(1) significant performance loss due to arbitrary approximations of the latent weight gradient through its discretization/binarization function, and (2) training computational intensiveness due to the reliance on full-precision latent weights. 
This paper proposes a novel mathematical principle by introducing the notion of Boolean variation such that neurons made of Boolean weights and/or activations can be trained ---for the first time--- natively in Boolean domain instead of latent-weight gradient descent and real arithmetic.
We explore its convergence, conduct extensively experimental benchmarking, and provide consistent complexity evaluation by considering chip architecture, memory hierarchy, dataflow, and arithmetic precision.
Our approach achieves baseline full-precision accuracy in ImageNet classification and surpasses state-of-the-art results in semantic segmentation, with notable performance in image super-resolution, and natural language understanding with transformer-based models.
Moreover, it significantly reduces energy consumption during both training and inference.

\end{abstract}

\section{Introduction}\label{sec:Intro}

\vspace{-2ex}

Deep learning \citep{lecun2015deep} has become the \emph{de facto solution} to a wide range of tasks.
However, running deep learning models for \emph{inference} demands significant computational resources, yet it is just the tip of the iceberg.
\emph{Training} deep models is even much more intense. 
\if False  %
	since it involves iterative processes of storing multiple temporal variables, buffers for gradient computation and parameter optimization.
	This intensive process is then repeated for hyperparameter tuning, lasting weeks or months on specialized equipment, resulting in a significant carbon footprint and computational resource demand \citep{Strubell2019}, especially critical given the current ecological emergency and the trend of \glspl{LLM} \citep{Brown2020, touvron2023llama,openai2023gpt,team2024gemma}.
	Additionally, accuracy suffers when pre-trained models used for edge inference fail to adapt to evolving data.
	Thus, addressing the computational intensity of deep model training is crucial technologically to bring it closer to data generation points, alleviating privacy concerns and enabling new online/on-device learning capabilities.
\fi
The extensive literature on this issue can be summarized into four approaches addressing different sources of complexity. 
These include: \textit{(1)} model compression, pruning \citep{Han2015, Cheng2020, Yang2017b} and network design \citep{Sze2017,Howard2017,Tan2019} for large model dimensions;
\textit{(2)} arithmetic approximation \citep{Lav2020,Sze2017,Chen2020a} for intensive multiplication;
\textit{(3)} quantization techniques like post-training \citep{Xiao2023,Gholami2022}, quantization-aware training \citep{Gupta2015, Zhang2018, Jin2021}, and quantized training to reduce precision \citep{Chen2020,Sun2020};
and \textit{(4)} hardware design \citep{Chen2016,Sebastian2020,Williams2009,Verma2019,Grimm2022,Yu2016} to overcome %
computing bottleneck by moving computation closer to or in memory.

Aside from hardware and dataflow design, deep learning designs have primarily focused on the number of \glspl{OP}, such as \textsc{flops} or \textsc{bops}, as a complexity measure \citep{GarciaMartin2019,Qin2020} rather than the consumed energy or memory, and particularly in inference tasks.
However, it has been demonstrated that \glspl{OP} alone are inadequate and even detrimental as a measure of system complexity.
Instead, energy consumption provides a more consistent and efficient metric for computing hardware \citep{Sze2017,Sze2020a,Yang2017a,Strubell2019}.
Data movement, especially, dominates energy consumption and is closely linked to system architecture, memory hierarchy, and dataflow \citep{kwon2019understanding, sim2019energy,yang2020interstellar,Chen2016a}.
Therefore, efforts aimed solely at reducing \glspl{OP} are inefficient.

\glsreset{BNN}
\glsreset{FP}
Quantization-aware training, notably \glspl{BNN} \citep{Courbariaux2015,Courbariaux2016}, have garnered significant investigation \citep[see, e.g.][and references therein]{Qin2020,Gholami2022}.
\glspl{BNN} typically binarize weights and activations, forming principal computation blocks in binary.
They learn binary weights, $\mathbf{w}_{\texttt{bin}}$, through \emph{\gls{FP} latent weights}, $\mathbf{w}_{\texttt{fp}}$,
leading to no memory or computation savings during training.
For example, a binarized linear layer is operated as
$s = \alpha \cdot \mathbf{w}_{\texttt{bin}}^{\top} \mathbf{x}_{\texttt{bin}}$,
where $s$ is the output and $\alpha$ is a \gls{FP} scaling factor, $\mathbf{w}_{\texttt{bin}}=\operatorname{sign}(\mathbf{w}_{\texttt{fp}})$, and $\mathbf{x}_{\texttt{bin}}=\operatorname{sign}(\mathbf{x}_{\texttt{fp}})$
is the binarized inputs.  
The weights are updated via common gradient descent backpropagation,
i.e. $\mathbf{w}_{\texttt{bin}}=\operatorname{sign}(\mathbf{w}_{\texttt{fp}} - \eta \cdot \mathbf{g}_{\mathbf{w}_{\texttt{fp}}})$ with a learning rate $\eta$, and \gls{FP} gradient signal $\mathbf{g}_{\mathbf{w}_{\texttt{fp}}}$.
Gradient approximation of binarized variables often employs a differentiable proxy of the binarization function $\operatorname{sign}$, commonly the identity proxy.
Various approaches treat \gls{BNN} training as a constrained optimization problem \citep{bai2018proxquant, ajanthan2019proximal, ajanthan2021mirror, lin2020rotated}, exploring methods to derive binary weights from real-valued latent ones.
\glspl{BNN} commonly suffers notable accuracy drops due to reduced network capacity and the use of proxy \gls{FP} optimizers \citep{Li2017} instead of operating directly in binary domain  \citep{Qin2020, Nie2022, Guo2022}. 
Recent works mitigate this by incorporating multiple \gls{FP} components in the network, retaining only a few binary dataflows \citep{Liu2020}.
Thus, while binarization aids in reducing inference complexity, it increases network training complexity and memory usage.

In contrast to binarizing \gls{FP} models like \glspl{BNN}, designing native binary models not relying on \gls{FP} latent weight has been explored. 
For example, Expectation Backpropagation \citep{Soudry2014}, although operating on full-precision training, was proposed for this purpose.
Statistical physics-inspired \citep{Baldassi2009,Baldassi2015} and Belief Propagation \citep{Baldassi2015a} algorithms utilize integer latent weights, mainly applied to single perceptrons, with unclear applicability to deep models.
Evolutionary algorithms \citep{Morse2016,Ito2010} are also an alternative but encounter performance and scalability challenges.

\begin{mybox}
	\textbf{Summary:}
	No scalable and efficient algorithm currently exists for \emph{natively} training deep models in binary.
	The challenge of significantly reducing the training complexity while maintaining high performance of deep learning models remains open.
	
\end{mybox}

\begin{myboxcontrib}
\paragraph{Contributions.}
For the aforementioned challenge, we propose a novel framework --- \emph{\textbf{Bo}olean \textbf{L}ogic \textbf{D}eep Learning} (\ours) --- which relies on Boolean notions to define models and training:
\end{myboxcontrib}

{
\setlist[itemize]{leftmargin=5mm}
\begin{itemize}
	\item We introduce the notion of variation to the Boolean logic and develop a new mathematical framework of function variation (see \cref{sec:Foundation}). One of the noticeable properties is that Boolean variation has the chain rule (see \cref{thm:MainRules}) similar to the continuous gradient. 

	\vspace{-0.5ex}
		
	\item Based on the proposed framework, we develop a novel Boolean backpropagation and optimization method allowing for a deep model to support native Boolean components operated solely with Boolean logic and trained directly in Boolean domain, eliminating the need for gradient descent and \gls{FP} latent weights (see \cref{sec:Backprop}).
	This drastically cuts down memory footprint and energy consumption during \emph{both training and inference} (see, e.g., \cref{fig:cifar10_vgg}).

	\vspace{-0.5ex}

	\item We provide a theoretical analysis of the convergence of our training algorithm (see \cref{thm:Convegence}). %

	\vspace{-0.5ex}
	
	\item We conduct an extensive experimental campaign using modern network architectures such as \glspl{CNN} and Transformers \citep{Vaswani2017} 
	on a wide range of challenging tasks including image classification, segmentation, super-resolution and natural language understanding (see \cref{sec:experiments}).
	We rigorously evaluate analytically the complexity of \ours and \glspl{BNN}.
	We demonstrate the superior performance of our method  in terms of both accuracy and complexity compared to the state-of-the-art (see, e.g.,  \cref{tab:seg_cs_val}, \cref{table:imagenet_resntricks}, \cref{table:nlp}). %
\end{itemize}
}

\if False
The remaining of the paper is organized as follows.
Section \ref{sec:Sota} presents the related works and details the current problems of \glspl{BNN}.
Section \ref{sec:Method} introduces the proposed methodology.
The experiments and complexity are reported in Section \ref{sec:experiments}.
Final remarks are given in Section \ref{sec:Conclusions}.
\fi

\section{Are Current Binarized Neural Networks Really Efficient?}\label{sec:Sota}

\vspace{-2ex}

\glsreset{BNN}
Our work is closely related with the line of research on \glspl{BNN}.
The concept of \glspl{BNN} traces back to early efforts to reduce the complexity of deep learning models.
\textsc{binaryconnect} \citep{Courbariaux2015} is one of the pioneering works that introduced the idea of binarizing \gls{FP} weights during training, effectively reducing memory footprint and computational cost.
Similarly, \textsc{binarynet} \citep{Courbariaux2016}, \textsc{xnor-net} \citep{Rastegari2016} extended this approach to binarize both weights and activations, further enhancing the efficiency of neural network inference.
However, these early \glspl{BNN} struggled with maintaining accuracy comparable to their full-precision counterparts.
To address this issue, significant advances have been made on \glspl{BNN} \cite[see, e.g.,][and references therein]{Qin2020,qin23b}, which can be categorized into three main aspects.

\paragraph{\raisebox{.5pt}{\textcircled{\raisebox{-.9pt} {{\small{1}}}}} Binarization strategy.}
The binarization strategy aims to efficiently convert real-valued data such as activations and weights into binary form $\{-1, 1\}$.
The $\operatorname{sign}$ function is commonly used for this purpose, sometimes with additional constraints such as clipping bounds in \gls{SOTA} methods like \textsc{pokebnn} \citep{zhang2022a} or \textsc{bnext} \citep{Guo2022}. %
\textsc{reactnet} \citep{Liu2020} introduces $\operatorname{rsign}$ as a more general alternative to the $\operatorname{sign}$ function, addressing potential shifts in the distribution of activations and weights.
Another approach \citep{Tu2022} explores the use of other two real values instead of strict binary to enhance the representational capability of \glspl{BNN}.

\vspace{-0.5ex}

\paragraph{\raisebox{.5pt}{\textcircled{\raisebox{-.9pt} {{\small{2}}}}} Optimization and training strategy.}
\glsreset{STE}
\gls{BNN} optimization relies 
totally on latent-weight training,  
necessitating a differentiable proxy %
for the backpropagation. 
\if False
	A popular choice is \gls{STE} \cite{ste2012}, which enables training \glspl{BNN} using the same gradient descent methods as the ordinary full-precision models.
	However, this technique commonly leads to unstable training. 
	To mitigate this problem, various alternatives to \gls{STE}, such as piece-wise polynomials and hyper-parameterized $\operatorname{tanh}$, have been explored \citep{Nie2022}.
\fi
Moreover, latent-weight based training methods have to store binary and real parameters during training and often requires multiple sequential training stages, where one starts with training a \gls{FP} model and only later enables binarization \citep{Guo2022, zhang2022a, Xing2022}.
This further increases the training costs.
Furthermore, \gls{KD} has emerged as a method to narrow the performance gap by transferring knowledge from a full-precision teacher model to a binary model \citep{Liu2020, Xing2022, zhang2022a, Lee2022}.
While a single teacher model can sufficiently improve the accuracy of the student model, recent advancements like multi-\gls{KD} with multiple teachers, such as \textsc{bnext} \citep{Guo2022}, have achieved unprecedented performance.
However, the \gls{KD} approach often treats network binarization as an add-on to full-precision models.
In addition, this training approach relies on specialized teachers for specific tasks, limiting adaptability on new data.
Lastly, \cite{Wang2021d, Helwegen2019} proposed some heuristics and improvements to the classic \gls{BNN} latent-weight based optimizer.

\begin{wrapfigure}[17]{r}{0.28\textwidth}
        \vspace{-2ex}
        \tikzexternaldisable
        \centering
        \scriptsize
        \setlength{\figurewidth}{4.6cm}
        \setlength{\figureheight}{4.6cm}
        \begin{tikzpicture}

\definecolor{crimson2143940}{RGB}{214,39,40}
\definecolor{darkgray176}{RGB}{176,176,176}
\definecolor{darkturquoise23190207}{RGB}{23,190,207}
\definecolor{forestgreen4416044}{RGB}{44,160,44}
\definecolor{goldenrod18818934}{RGB}{188,189,34}
\definecolor{gray}{RGB}{128,128,128}

\begin{axis}[
height=\figureheight,
major tick length=1ex,
tick align=outside,
tick pos=left,
width=\figurewidth,
x grid style={darkgray176},
xlabel={\hspace{-6ex}Energy Consum. w.r.t. FP (\%) (\(\displaystyle \leftarrow\))},
xmin=-20, xmax=130,
xtick style={color=black},
y grid style={darkgray176},
ylabel={Accuracy (\%) (\(\displaystyle \rightarrow\))},
ymin=89.4, ymax=94.3,
ytick style={color=black},
yticklabel style={/pgf/number format/fixed, /pgf/number format/precision=3}
]
\addplot [semithick, gray, dashed]
table {%
-20 93.8
100 93.8
};
\addplot [semithick, gray, dashed]
table {%
100 89.4
100 93.8
};
\addplot [semithick, gray, dashed]
table {%
-20 92.37
3.71 92.37
};
\addplot [semithick, gray, dashed]
table {%
3.71 89.4
3.71 92.37
};
\addplot [semithick, gray, dashed]
table {%
-20 90.29
2.78 90.29
};
\addplot [semithick, gray, dashed]
table {%
2.78 89.4
2.78 90.29
};
\addplot [semithick, black, mark=*, mark size=2, mark options={solid}]
table {%
100 93.8
};
\addplot [semithick, darkturquoise23190207, mark=*, mark size=2, mark options={solid}]
table {%
48.49 90.1
};
\addplot [semithick, forestgreen4416044, mark=*, mark size=2, mark options={solid}]
table {%
45.68 89.83
};
\addplot [semithick, goldenrod18818934, mark=*, mark size=2, mark options={solid}]
table {%
43.61 89.85
};
\addplot [semithick, crimson2143940, mark=triangle*, mark size=2, mark options={solid}]
table {%
2.78 90.29
};
\addplot [semithick, crimson2143940, mark=*, mark size=2, mark options={solid}]
table {%
3.71 92.37
};
\draw (axis cs:100,93.8) node[
  scale=0.85,
  anchor=base west,
  text=black,
  rotate=0.0
]{\textsc{fp}};
\draw (axis cs:3.71,92.37) node[
  scale=0.85,
  anchor=base west,
  text=black,
  rotate=0.0
]{\oursfig \textsc{with batch-norm}};
\draw (axis cs:2.28,90.49) node[
  scale=0.85,
  anchor=base west,
  text=black,
  rotate=0.0
]{\oursfig \textsc{w/o batch-norm}};
\draw (axis cs:48.49,90.1) node[
  scale=0.85,
  anchor=base west,
  text=black,
  rotate=0.0
]{\textsc{binaryconnect}};
\draw (axis cs:45.68,89.83) node[
  scale=0.85,
  anchor=base west,
  text=black,
  rotate=0.0
]{\textsc{xnor-net}};
\draw (axis cs:43.61,89.55) node[
  scale=0.85,
  anchor=base west,
  text=black,
  rotate=0.0
]{\textsc{binarynet}};
\draw (axis cs:-20,93.85) node[
  scale=0.65,
  anchor=base west,
  text=black,
  rotate=0.0
]{93.80};
\draw (axis cs:-20,90.34) node[
  scale=0.65,
  anchor=base west,
  text=black,
  rotate=0.0
]{90.29};
\draw (axis cs:-20,92.42) node[
  scale=0.65,
  anchor=base west,
  text=black,
  rotate=0.0
]{92.37};
\draw (axis cs:2.78,89.45) node[
  scale=0.65,
  anchor=base west,
  text=black,
  rotate=0.0
]{2.78};
\end{axis}

\end{tikzpicture}
        \tikzexternalenable
        \vspace{-4ex}
        \definecolor{crimson2143940}{RGB}{214,39,40}
        \caption{\small{{\color{crimson2143940}Our method} against notable \glspl{BNN} on \textsc{cifar10} using  \textsc{vgg-small}. The energy is analytically evaluated considering a hypothetical V100 equivalence with native 1-bit support; cf \cref{sec:experiments} for details.}}
        \label{fig:cifar10_vgg}
\end{wrapfigure}

\begin{table}[t!]
    \centering
    \caption{A summary of \gls{SOTA} \glspl{BNN} compared to our method.
    The notation \xmark indicates the nonexistence of a specific requirement (column) within a given method (row).
    The colors denoting the methods shall be used consistently throughout the paper.
    }
    \resizebox{0.99\textwidth}{!}{
        \begin{tabular}{lccccccc}
            \toprule
            \multirow{2}{4em}{Method}                   & Bitwidth      & Specialized  & Mandatory     & Multi-stage or & Weight                 & \multirow{2}{4em}{Backprop} & Training   \\
                                                         & (Weight-Act.) & Architecture & FP Components & KD Training    & Updates                &                             & Arithmetic \\ [.5ex]
            \midrule
            \midrule
            \tikzexternaldisable ({\protect\tikz[baseline=-.65ex]\protect\draw[thick, color=color_binarynet, fill=color_binarynet, mark=*, mark size=2pt, line width=1.25pt] plot[] (-.0, 0)--(-0,0);}) \tikzexternalenable \textsc{binarynet} \citep{Courbariaux2016}   & 1-1           & \xmark       & \xmark        & \xmark         & FP latent-weights      & Gradient                    & FP         \\
            \tikzexternaldisable ({\protect\tikz[baseline=-.65ex]\protect\draw[thick, color=color_binaryconnect, fill=color_binaryconnect, mark=*, mark size=2pt, line width=1.25pt] plot[] (-.0, 0)--(-0,0);}) \tikzexternalenable \textsc{binaryconnect} \citep{Courbariaux2015}   & 1-32           & \xmark       & \xmark        & \xmark         & FP latent-weights      & Gradient                    & FP         \\
            \tikzexternaldisable ({\protect\tikz[baseline=-.65ex]\protect\draw[thick, color=color_xnornet, fill=color_xnornet, mark=*, mark size=2pt, line width=1.25pt] plot[] (-.0, 0)--(-0,0);}) \tikzexternalenable \textsc{xnor-net} \citep{Rastegari2016}      & 1-1           & \xmark       & \xmark        & \xmark         & FP latent-weights      & Gradient                    & FP         \\
            \tikzexternaldisable ({\protect\tikz[baseline=-.65ex]\protect\draw[thick, color=color_birealnet, fill=color_birealnet, mark=*, mark size=2pt, line width=1.25pt] plot[] (-.0, 0)--(-0,0);}) \tikzexternalenable\textsc{bi-realnet} \citep{Liu2018}          & 1-1           & \cmark       & \cmark        & \cmark         & FP latent-weights      & Gradient                    & FP         \\
            \tikzexternaldisable ({\protect\tikz[baseline=-.65ex]\protect\draw[thick, color=color_real2binary, fill=color_real2binary, mark=*, mark size=2pt, line width=1.25pt] plot[] (-.0, 0)--(-0,0);}) \tikzexternalenable \textsc{real2binary} \citep{Martinez2020}    & 1-1           & \cmark       & \cmark        & \cmark         & FP latent-weights      & Gradient                    & FP         \\
            \tikzexternaldisable ({\protect\tikz[baseline=-.65ex]\protect\draw[thick, color=color_reactnet, fill=color_reactnet, mark=*, mark size=2pt, line width=1.25pt] plot[] (-.0, 0)--(-0,0);}) \tikzexternalenable \textsc{reactnet} \cite{Liu2020}             & 1-1           & \cmark       & \cmark        & \cmark         & FP latent-weights      & Gradient                    & FP         \\
            \tikzexternaldisable ({\protect\tikz[baseline=-.65ex]\protect\draw[thick, color=color_meliusnet, fill=color_meliusnet, mark=*, mark size=2pt, line width=1.25pt] plot[] (-.0, 0)--(-0,0);}) \tikzexternalenable \textsc{melius-net} \citep{Bethge_2021_WACV} & 1-1           & \cmark       & \cmark        & \cmark         & FP latent-weights      & Gradient                    & FP         \\
            \tikzexternaldisable ({\protect\tikz[baseline=-.65ex]\protect\draw[thick, color=color_bnext, fill=color_bnext, mark=*, mark size=2pt, line width=1.25pt] plot[] (-.0, 0)--(-0,0);}) \tikzexternalenable \textsc{bnext} \cite{Guo2022}                & 1-1           & \cmark       & \cmark        & \cmark         & FP latent-weights      & Gradient                    & FP         \\
            \tikzexternaldisable ({\protect\tikz[baseline=-.65ex]\protect\draw[thick, color=color_pokebnn, fill=color_pokebnn, mark=*, mark size=2pt, line width=1.25pt] plot[] (-.0, 0)--(-0,0);}) \tikzexternalenable \textsc{pokebnn} \cite{zhang2022a}           & 1-1           & \cmark       & \cmark        & \cmark         & FP latent-weights      & Gradient                    & FP         \\
            \midrule
            \tikzexternaldisable ({\protect\tikz[baseline=-.65ex]\protect\draw[thick, color=color_booldl, fill=color_booldl, mark=*, mark size=2pt, line width=1.25pt] plot[] (-.0, 0)--(-0,0);}) \tikzexternalenable \ours [\textbf{Ours}]             & 1-1           & \xmark       & \xmark        & \xmark         & Native Boolean weights & Logic                       & Logic      \\ [.5ex]
            \bottomrule
        \end{tabular}
    }
    \label{tab:method_summary}
    \vspace{-3ex}
\end{table}

\vspace{-0.5ex}

\paragraph{\raisebox{.5pt}{\textcircled{\raisebox{-.9pt} {{\small{3}}}}} Architecture design.} Architecture design in \glspl{BNN} commonly utilizes \textsc{resnet} \citep{Liu2018, Liu2020, Bethge_2021_WACV, Guo2022} and \textsc{mobilenet} \citep{Liu2020, Guo2022} layouts.
These methodologies often rely on heavily modified basic blocks, including additional shortcuts, automatic channel scaling via \gls{SE} \citep{zhang2022a}, block duplication with concatenation in the channel domain \citep{Liu2020, Guo2022}.
Recent approaches introduce initial modules to enhance input adaptation for binary dataflow \citep{Xing2022} or replace convolutions with lighter point-wise convolutions \citep{Liu2022c}.

Another related line of research involves \glspl{LGN} \citep{Petersen2022}. In \glspl{LGN}, each neuron functions as a binary logic gate and, consequently, has only two inputs. Unlike traditional neural networks, \glspl{LGN} do not utilize weights; instead, they are parameterized by selecting a specific logic gate for each neuron, which can be learned. Compared to the standard neural networks or our proposed method, \glspl{LGN} are sparse because each neuron receives only two inputs rather than multiple inputs. Recently, \citep{Bacellar2024} expanded on \glspl{LGN} by incorporating flexible and differentiable lookup tables. While these advancements show promises, adapting them to modern neural network architectures such as \gls{CNN} or Transformers is challenging. Furthermore, these approaches have not been validated on large-scale datasets like \textsc{imagenet} or on tasks that require high precision, such as image segmentation or super-resolution, as demonstrated in our work.

\vspace{-0.5ex}

\paragraph{Summary.}
\cref{tab:method_summary} shows key characteristics of \gls{SOTA} \gls{BNN} methods.
These methods will be considered in our experiments.
Notice that all these techniques indeed have to involve operations on \gls{FP} latent weights during training, whereas our proposed method works directly on native Boolean weights.
In addition, most of \gls{BNN} methods incorporate \gls{FP} data and modules as mandatory components.
As a result, existing \glspl{BNN} consume much more training energy compared to our \ours method. An example is shown in \cref{fig:cifar10_vgg}, where we consider the \textsc{vgg-small} architecture \citep{Courbariaux2015,Simonyan2014} on \textsc{cifar10} dataset.
In \cref{sec:experiments} we will consider much larger datasets and networks on more challenging tasks.
We can see that our method achieves $36\times$ and more than $15\times$ energy reduction compared to the \gls{FP} baseline and \textsc{binarynet}, respectively, while yielding better accuracy than \glspl{BNN}.
Furthermore, \glspl{BNN} are commonly tied to specialized network architecture and have to employ costly multi-stage or \gls{KD} training.
Meanwhile, our Boolean framework is completely orthogonal to these \gls{BNN} methods.
It is generic and applicable for a wide range of network architectures, and its training procedure purely relies on Boolean logic from scratch.
Nevertheless, we stress that it is not obligatory to use all Boolean components in our proposed framework as it is flexible and can be extended to architectures comprised of a mix of Boolean and \gls{FP} modules.
This feature further improves the superior performance of our method as can be seen in \cref{fig:cifar10_vgg}, where we integrate \gls{BN} \citep{ioffe15} into our Boolean model, and we will demonstrate extensively in our experiments.
 
\section{Proposed Method}\label{sec:Method}

\subsection{Neuron Design}

\paragraph{Boolean Neuron.} For the sake of simplicity, we consider a linear layer for presenting the design. 
Let $w_0$, $(w_1, \ldots, w_m)$, and $(x_1, \ldots, x_m)$ be the bias, weights, and inputs of a neuron of input size $m \geq 1$. 
In \emph{the core use case} of our interest, these variables are all Boolean numbers. Let $\Lm$ be a logic gate such as \textsc{and}, \textsc{or}, \textsc{xor}, \textsc{xnor}.
The neuron's pre-activation output is given as follows:
\begin{equation}\label{eq:Preactivation}
	s = w_0 + \sum_{i=1}^m \Lm(w_i, x_i),
\end{equation}
where the summation is understood as the counting of \textsc{true}s. 

\paragraph{Mixed Boolean-Real Neuron.} To allow for flexible use and co-existence of this Boolean design with real-valued parts of a deep model, two cases of mixed-type data are considered including Boolean weights with real-valued inputs, and real-valued weights with Boolean inputs.
These two cases can be addressed by the following extension of Boolean logic to mixed-type data.
To this end, we first introduce the essential notations and definitions.
Specifically, we denote $\Bb := \{\True, \False\}$ equipped with the Boolean logic.
Here, $\True$ and $\False$ indicate \textsc{true} and \textsc{false}, respectively. 

\begin{definition}[Three-valued logic]\label{def:ThreeValueLogic}
	Define $\Mb \deq \Bb \cup \{0\}$ 
	with logic connectives defined according to those of Boolean logic as follows. 
	First, the negation is: $\neg \True = \False$, $\neg \False = \True$, and $\neg 0 = 0$. 
	Second, let $\Lm$ be a logic connective, denote by $\Lm_{\Mb}$ and $\Lm_{\Bb}$ when it is in $\Mb$ and in $\Bb$, respectively, then $\Lm_{\Mb}(a,b) = \Lm_{\Bb}(a,b)$ for $a, b \in \Bb$ and $\Lm_{\Mb}(a,b) = 0$ otherwise.
\end{definition}

\begin{notation}
	Denote by $\Lb$ a logic set (e.g., $\Bb$ or $\Mb$), $\Rb$ the real set, $\Zb$ the set of integers, $\Nb$ a numeric set (e.g., $\Rb$ or $\Zb$), and $\Db$ a certain set of $\Lb$ or $\Nb$.
\end{notation}

\begin{definition}\label{def:Real2Bool}
	For $x \in \Nb$, its logic value denoted by $x_{\logic}$ is given as $x_{\logic} = \True \Leftrightarrow x > 0$, $x_{\logic} = \False \Leftrightarrow x < 0$, and $x_{\logic} = 0 \Leftrightarrow x = 0$.
\end{definition}

\begin{definition}
	The magnitude of a variable $x$, denoted $|x|$, is defined as its usual absolute value if $x \in \Nb$. And for $x \in \Lb$: $|x| = 0$ if $x = 0$, and $|x| = 1$ otherwise.
\end{definition}
\begin{definition}[Mixed-type logic]\label{def:MixedLogic}
	For $\Lm$ a logic connective of $\Lb$ and variables $a$, $b$, operation $c = \Lm(a, b)$ is defined such that $|c| = |a||b|$ and $c_{\logic} = \Lm(a_{\logic}, b_{\logic})$. 
\end{definition}
Using \cref{def:MixedLogic}, neuron formulation \cref{eq:Preactivation} directly applies to the mixed Boolean-real neurons. 

\vspace{-2ex}

\paragraph{Forward Activation.} 
It is clear that there can only be one unique family of binary activation functions, which is the threshold function. 
Let $\tau$ be a scalar, which can be fixed or learned, the forward Boolean activation is given as: $y = \True$ if $s \geq \tau$ and $y = \False$ otherwise where $s$ is the preactivation.
The backpropagation throughout this activation will be described in \cref{app:scaling_activation}.

\subsection{Mathematical Foundation}\label{sec:Foundation}

In this section we describe the mathematical foundation for our method to train Boolean weights directly in the Boolean domain without relying on \gls{FP} latent weights. 
Due to the space limitation, essential notions necessary for presenting the main results are presented here while a comprehensive treatment is provided in \cref{appendix:booleanvariation}.

\begin{definition}\label{def:BoolOrder}
	Order relations `$<$' and `$>$' in $\Bb$ are defined as follows: $\False < \True$, and $\True > \False$.
\end{definition}

\begin{definition}\label{def:BoolVariation}
	For $a, b \in \Bb$, the variation from $a$ to $b$, denoted $\bvar(a \to b)$, is defined as: $\bvar(a \to b) \deq \True$ if $b > a$, $\deq 0$ if $b = a$, and $\deq \False$ if $b < a$.
\end{definition}

Throughout the paper, $\Fc(\mathbb{S},\mathbb{T})$ denotes the set of all functions from source $\mathbb{S}$ to image $\mathbb{T}$.

\begin{definition}\label{def:BoolFuncVar}
	For $f \in \Fc(\Bb, \Db)$, $\forall x \in \Bb$, write $\bvar f(x \to \neg x) := \bvar(f(x) \to f(\neg x))$. The variation of $f$ \wrt $x$, denoted $f'(x)$, is defined as: $f'(x) \deq \xnor(\bvar(x \to \neg x), \bvar f(x \to \neg x))$.
\end{definition}
\begin{remark}
	The usual notation of continuous derivative $f'$ is intentionally adopted here for Boolean variation for convenience and notation unification. Its underlying meaning, i.e., continuous derivative or Boolean variation, can be understood directly from the context where function $f$ is defined.  
\end{remark}
Intuitively, the variation of $f$ \wrt $x$ is $\True$ if $f$ varies in the same direction with $x$. 
\begin{example}\label{ex:XORVariation}
	Let $a \in \Bb$, $f(x) = \xor(x,a)$ for $x \in \Bb$, the variation of $f$ \wrt $x$ can be derived by establishing a truth table (see \cref{tab:VariationXOR} in \cref{appendix:Boolean}) from which we obtain $f'(x) = \neg a$.
\end{example}

\if False %
	\begin{proposition}\label{prop:B2BVariation}
		For $f, g \in \Fc(\Bb, \Bb)$, $\forall x, y \in \Bb$ the following properties hold:
		\begin{enumerate}
			\item $\bvar f(x \to y) = \xnor(\bvar(x \to y), f'(x)).$
			\item $(\neg f(x))' = \neg f'(x)$.
			\item $(g \circ f)'(x) = \xnor(g'(f(x)), f'(x))$.
		\end{enumerate}
	\end{proposition}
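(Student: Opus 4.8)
The plan is to reduce all three identities to a handful of elementary algebraic facts about the three‑valued $\xnor$, together with one observation about how $\bvar$ interacts with negation. First I would record the auxiliary lemma that for $a,b\in\Bb$ one has $\bvar(\neg a\to\neg b)=\neg\,\bvar(a\to b)$; this is immediate from \cref{def:BoolVariation} and \cref{def:BoolOrder}, since negation reverses the order on $\Bb$, and it also covers the case $a=b$ because $\neg 0=0$. Next I would collect the properties of $\xnor$ on $\Mb$ that get used repeatedly: it is commutative and associative; for $a\in\Bb$ the map $b\mapsto\xnor(a,b)$ is the identity on $\Mb$ (if $a=\True$) or negation on $\Mb$ (if $a=\False$), hence an involution, so $\xnor(a,\xnor(a,b))=b$; and $\xnor(a,\neg b)=\neg\,\xnor(a,b)$. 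The rule $\xnor_{\Mb}(0,\cdot)=0$ from \cref{def:ThreeValueLogic} is what makes associativity and the involution property survive the presence of the third value.

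For item (1), since $y\in\Bb$ there are only two cases. If $y=x$, both sides equal $0$: the left side because $\bvar f(x\to x)=\bvar(f(x)\to f(x))=0$ by \cref{def:BoolVariation}, the right side because $\bvar(x\to x)=0$ and $\xnor(0,f'(x))=0$. If $y=\neg x$, then \cref{def:BoolFuncVar} gives $f'(x)=\xnor(\bvar(x\to\neg x),\bvar f(x\to\neg x))$, and applying the involution property of $\xnor(\bvar(x\to\neg x),\cdot)$ — legitimate since $\bvar(x\to\neg x)\in\Bb$ — yields $\xnor(\bvar(x\to\neg x),f'(x))=\bvar f(x\to\neg x)$, which is the claim.

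For item (2), put $h(x)=\neg f(x)$. The auxiliary lemma gives $\bvar h(x\to\neg x)=\bvar(\neg f(x)\to\neg f(\neg x))=\neg\,\bvar f(x\to\neg x)$, so by \cref{def:BoolFuncVar} and $\xnor(a,\neg b)=\neg\,\xnor(a,b)$ we get $h'(x)=\xnor(\bvar(x\to\neg x),\neg\,\bvar f(x\to\neg x))=\neg\,\xnor(\bvar(x\to\neg x),\bvar f(x\to\neg x))=\neg f'(x)$. For item (3), put $h=g\circ f$; applying item (1) to $g$ with the transition $f(x)\to f(\neg x)$ gives $\bvar h(x\to\neg x)=\bvar(g(f(x))\to g(f(\neg x)))=\xnor(\bvar f(x\to\neg x),g'(f(x)))$. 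Substituting into \cref{def:BoolFuncVar} and using associativity and commutativity of $\xnor$,
\[
(g\circ f)'(x)=\xnor\!\big(\bvar(x\to\neg x),\xnor(\bvar f(x\to\neg x),g'(f(x)))\big)=\xnor\!\big(f'(x),g'(f(x))\big)=\xnor\!\big(g'(f(x)),f'(x)\big),
\]
where the middle equality just regroups the definition of $f'(x)$. The only delicate point anywhere is the bookkeeping of the value $0\in\Mb$ — verifying that involution, associativity, and the negation identity for $\xnor$ really hold on $\Mb$ and not merely on $\Bb$ — which is the step I would treat most carefully, though it is routine; everything else is a short case check or a direct appeal to item (1).
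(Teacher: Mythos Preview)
Your proof is correct and follows essentially the same route as the paper's own argument: for (1) you split on $y=x$ versus $y=\neg x$ and use the involution property of $\xnor(a,\cdot)$ with $a\in\Bb$; for (2) you use the same auxiliary fact $\bvar(\neg a\to\neg b)=\neg\,\bvar(a\to b)$ together with $\xnor(a,\neg b)=\neg\,\xnor(a,b)$; and for (3) you apply (1) to $g$ and regroup via associativity of $\xnor$, exactly as the paper does. Your explicit verification that the needed $\xnor$ identities survive on $\Mb$ (handling the value $0$) is a welcome addition that the paper leaves implicit.
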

	
	\begin{proposition}\label{prop:B2NVariation}
		For $f \in \Fc(\Bb, \Nb)$, the following properties hold:
		\begin{enumerate}
			\item $x, y \in \Bb$: $\bvar f(x \to y) = \xnor(\bvar(x \to y), f'(x))$.
			\item $\alpha \in \Nb$: $(\alpha f)'(x) = \alpha f'(x)$.
			\item $g \in \Fc(\Bb, \Nb)$: $(f + g)'(x) = f'(x) + g'(x)$.
		\end{enumerate}
	\end{proposition}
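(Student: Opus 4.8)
The plan is to funnel all three parts through a single structural fact about the mixed‑type \textsc{xnor}. Write $b := \bvar(x \to \neg x)$; since $x \neq \neg x$ we always have $b \in \Bb$, with $b = \True$ when $x = \False$ and $b = \False$ when $x = \True$. Reading off \cref{def:MixedLogic} and \cref{def:Real2Bool}, for any numeric $c$ the value $\xnor(b, c)$ has magnitude $|b|\,|c| = |c|$ and logic value $\xnor(b_{\logic}, c_{\logic})$, hence $\xnor(\True, c) = c$ and $\xnor(\False, c) = -c$; that is, $\xnor(b, \cdot)$ acts on $\Nb$ either as the identity or as negation. First I would record this, together with two companion facts obtained the same way: (i) \textsc{xnor} is associative on $\Mb$ and $\xnor(b, b) = \True$ for $b \in \Bb$; (ii) $\xnor(b, \alpha c) = \alpha\,\xnor(b, c)$ for every $\alpha \in \Nb$, checked by the three sign cases of $\alpha$ (the degenerate cases $\alpha = 0$ or $c = 0$ both collapse to $0$). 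I would also note that for $f \in \Fc(\Bb, \Nb)$ the numeric variation unwinds to $\bvar f(x \to y) = f(y) - f(x)$, the convention from \cref{appendix:booleanvariation}.

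For Part 1, I would split on $y$. If $y = x$ then $\bvar(x \to y) = 0$, so the right‑hand side has magnitude $0$ and equals $0 = f(x) - f(x) = \bvar f(x \to x)$. If $y = \neg x$, substitute $f'(x) = \xnor\bigl(b, \bvar f(x \to \neg x)\bigr)$ into the right‑hand side to obtain $\xnor\bigl(b, \xnor(b, \bvar f(x \to \neg x))\bigr)$; by associativity and $\xnor(b, b) = \True$ this collapses to $\xnor\bigl(\True, \bvar f(x \to \neg x)\bigr) = \bvar f(x \to \neg x) = \bvar f(x \to y)$.

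For Parts 2 and 3, both start from linearity of the plain difference: $\bvar(\alpha f)(x \to \neg x) = \alpha\,\bvar f(x \to \neg x)$ and $\bvar(f + g)(x \to \neg x) = \bvar f(x \to \neg x) + \bvar g(x \to \neg x)$. Part 2 then follows from the scalar pull‑out fact (ii): $(\alpha f)'(x) = \xnor\bigl(b, \alpha\,\bvar f(x \to \neg x)\bigr) = \alpha\,\xnor\bigl(b, \bvar f(x \to \neg x)\bigr) = \alpha f'(x)$. For Part 3, since $\xnor(b, \cdot)$ is either the identity or negation and both are additive maps on $\Nb$, we get $(f + g)'(x) = \xnor\bigl(b, \bvar f(x \to \neg x) + \bvar g(x \to \neg x)\bigr) = \xnor\bigl(b, \bvar f(x \to \neg x)\bigr) + \xnor\bigl(b, \bvar g(x \to \neg x)\bigr) = f'(x) + g'(x)$.

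The main obstacle is not any genuinely hard step but the bookkeeping around the third value $0$ in $\Mb$: the three facts of the preliminary lemma have to be stated so that they survive the degeneracies ($c = 0$, $\alpha = 0$, $f(x) = f(\neg x)$), because \cref{def:MixedLogic} routes every mixed operation through magnitudes and logic values, and those degeneracies are exactly where a magnitude collapses to $0$ and a naive sign‑only argument would stall. Once that lemma is in place, each of the three parts is a two‑line case analysis on $x$ (and, for Part 2, on the sign of $\alpha$).
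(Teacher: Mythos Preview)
Your proposal is correct and follows essentially the same route as the paper: both arguments first isolate the algebraic behaviour of mixed-type $\xnor$ (the paper packages this as \cref{prop:XNORAlgebra}, you as a preliminary lemma giving scalar pull-out and additivity of $\xnor(b,\cdot)$), and then Parts~2 and~3 are identical applications of those facts. The only cosmetic difference is in Part~1, where the paper inverts $f'(x) = \xnor(b,\bvar f)$ by decomposing into magnitude and logic projection, whereas you invert it by noting that $\xnor(b,\cdot)$ is an involution on $\Nb$; your phrasing ``associative on $\Mb$'' is slightly loose since the inner argument $\bvar f(x\to\neg x)$ lives in $\Nb$, but the involution observation you already recorded is exactly what is needed and the argument goes through.
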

	
	The proof is given in the supplementary. \cref{def:BoolFuncVar} is essentially different than Akers' one \cite{Akers1959} in that the later only expresses the sole variation of $f$ without relating it to the variable's variation. One of the consequences is that when knowing $f'$, our formulation allows to obtain $f$ from the variation of $x$ as given by \cref{prop:B2BVariation}-(i) and \cref{prop:B2NVariation}-(i) whereas Akers' formulation cannot.
\fi

For $f \in \Fc(\Zb, \Nb)$, its derivative, also known in terms of \emph{finite differences}, has been defined in the literature as $f'(x) = f(x+1) - f(x)$, see e.g. \citep{Jordan1950}. With the logic variation as introduced above, we can make this definition more generic as follows. 
\begin{definition}
	For $f \in \Fc(\Zb, \Db)$, the variation of $f$ \wrt $x \in \Zb$ is defined as $f'(x) \deq \bvar f(x \to x+1)$, 
	where $\bvar f$ is in the sense of the variation defined in $\Db$.
\end{definition}
\begin{theorem}%
	\label{thm:MainRules}
	The following properties hold:
	\begin{enumerate}
		\item For $f \in \Fc(\Bb, \Bb)$: $(\neg f)'(x) = \neg f'(x)$, $\forall x \in \Bb$.
		\item For $f \in \Fc(\Bb, \Nb)$, $\alpha \in \Nb$: $(\alpha f)'(x) = \alpha f'(x)$, $\forall x \in \Bb$.
		\item For $f, g \in \Fc(\Bb, \Nb)$: $(f + g)'(x) = f'(x) + g'(x)$, $\forall x \in \Bb$.
		\item For $\Bb \overset{f}{\to} \Bb \overset{g}{\to} \Db$: $(g \circ f)'(x) = \xnor(g'(f(x)), f'(x))$, $\forall x \in \Bb$.
		\item For $\Bb \overset{f}{\to} \Zb \overset{g}{\to} \Db$, $x \in \Bb$, if $|f'(x)| \leq 1$ and $g'(f(x)) =g'(f(x)-1)$, then:
		\begin{equation*}
			(g \circ f)'(x) = \xnor(g'(f(x)), f'(x)).
		\end{equation*}
	\end{enumerate}
\end{theorem}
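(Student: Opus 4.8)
The five items all follow, once \cref{def:BoolVariation,def:BoolFuncVar} and the integer‑variation definition are unwound, from a short list of algebraic facts about the connective $\xnor$ (in its three‑valued and mixed‑type forms) together with a single \emph{transfer identity}. First I would record the elementary facts: for $x\in\Bb$, $\bvar(x\to\neg x)=\neg x$, which is never $0$; for $a,b$ in any $\Db$, negating the endpoints of a variation, or reversing them, negates it, so $\bvar(\neg a\to\neg b)=\bvar(b\to a)=\neg\,\bvar(a\to b)$, with $\neg$ read as logical negation on $\Bb$/$\Mb$ and as the additive inverse on $\Nb$, consistently with \cref{def:MixedLogic}; the connective $\xnor$ on $\Mb$ is commutative and associative, with $\xnor(\True,c)=c$ and $\xnor(a,a)=\True$ for $a\in\Bb$; and mixed‑type $\xnor$ is additive and homogeneous in a numeric argument, $\xnor(a,\alpha n+\beta m)=\alpha\,\xnor(a,n)+\beta\,\xnor(a,m)$, with $\xnor(a,\neg n)=\neg\,\xnor(a,n)$. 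The transfer identity is: for $f\in\Fc(\Bb,\Db)$ and $u,v\in\Bb$,
\[
	\bvar(f(u)\to f(v))=\xnor\bigl(\bvar(u\to v),\,f'(u)\bigr),
\]
which is trivial for $v=u$ and, for $v=\neg u$, follows from \cref{def:BoolFuncVar} after composing with $\xnor(\bvar(u\to\neg u),\cdot)$ and using $\xnor(\bvar(u\to\neg u),\bvar(u\to\neg u))=\True$.

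For items (1)--(3) I would write, for $f$ Boolean‑ or numeric‑valued, $f'(x)=\xnor(\bvar(x\to\neg x),\Delta_f(x))$ with $\Delta_f(x)=\bvar(f(x)\to f(\neg x))$ in the Boolean case and $\Delta_f(x)=f(\neg x)-f(x)$ in the numeric case. The three rules then reduce to $\Delta_{\neg f}=\neg\Delta_f$, $\Delta_{\alpha f}=\alpha\Delta_f$ and $\Delta_{f+g}=\Delta_f+\Delta_g$ --- each immediate from the definitions --- combined with the negation, homogeneity and additivity rules for $\xnor$ in its second argument listed above.

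For item (4) I would apply the transfer identity twice: once to $g$ at the point $f(x)$, giving $\bvar(g(f(x))\to g(f(\neg x)))=\xnor(\bvar(f(x)\to f(\neg x)),g'(f(x)))$, and once to $f$ at $x$ to rewrite $\bvar(f(x)\to f(\neg x))=\xnor(\bvar(x\to\neg x),f'(x))$. Substituting both into \cref{def:BoolFuncVar} applied to $g\circ f$ and simplifying with commutativity/associativity of $\xnor$ and $\xnor(\bvar(x\to\neg x),\bvar(x\to\neg x))=\True$ collapses the expression to $\xnor(g'(f(x)),f'(x))$.

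Item (5) is the delicate part and the main obstacle. I would set $\delta:=f(\neg x)-f(x)=\xnor(\bvar(x\to\neg x),f'(x))$; since $|f'(x)|\leq 1$ and $|\bvar(x\to\neg x)|=1$, the magnitude rule of \cref{def:MixedLogic} forces $\delta\in\{-1,0,1\}$, i.e. $f$ moves by at most one integer step between $x$ and $\neg x$, which is exactly what lets a single value of $g'$ appear. If $\delta=0$ then $f'(x)=0$ and $g(f(\neg x))=g(f(x))$, so both sides are $0$. If $\delta=1$ then $\bvar(g(f(x))\to g(f(\neg x)))=\bvar(g(f(x))\to g(f(x)+1))=g'(f(x))$ straight from the integer‑variation definition, and one checks $\xnor(\bvar(x\to\neg x),g'(f(x)))=\xnor(g'(f(x)),f'(x))$ because $f'(x)$ has unit magnitude and logic value $\bvar(x\to\neg x)$. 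If $\delta=-1$, the composite variation is the backward step $\bvar(g(f(x))\to g(f(x)-1))$, which by the reversal fact equals $\neg\,g'(f(x)-1)$; this is precisely where the hypothesis $g'(f(x))=g'(f(x)-1)$ is used, turning it into $\neg\,g'(f(x))$, after which the negation rule for $\xnor$ together with the same unit‑magnitude bookkeeping (now $f'(x)$ has logic value $\neg\bvar(x\to\neg x)$) yields the claim. The only genuine care required throughout is tracking the $0$‑valued degenerate cases in $\Mb$ and the dual reading of $\neg$ on $\Nb$ versus $\Bb$/$\Mb$; these checks are routine once the toolbox identities are in place.
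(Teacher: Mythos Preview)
Your proposal is correct and follows essentially the same approach as the paper's own proof. The paper likewise isolates the transfer identity $\bvar f(x\to y)=\xnor(\bvar(x\to y),f'(x))$ (its Propositions~\ref{prop:B2BVariation}(1) and~\ref{prop:B2NVariation}(1)) and a toolbox of $\xnor$ identities (its Proposition~\ref{prop:XNORAlgebra}), then handles item~(5) by the same three-case split on $\xnor(\bvar(x\to\neg x),f'(x))\in\{-1,0,1\}$, invoking the hypothesis $g'(f(x))=g'(f(x)-1)$ in the $-1$ case exactly as you do.
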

The proof is provided in \cref{appendix:Boolean}. These results are extended to the multivariate case in a straightforward manner. For instance, for multivariate Boolean functions it is as follows. 
\begin{definition}\label{def:BoolFuncVar0}
	For $\xb = (x_1, \ldots, x_n) \in \Bb^n$, denote $\xb_{\neg i} := (x_1, \ldots, x_{i-1}, \neg x_i, x_{i+1}, \ldots, x_n)$ for $n \ge 1$ and $1 \leq i \leq n$. 
	For $f \in \Fc(\Bb^n, \Bb)$, the (partial) variation of $f$ \wrt $x_i$, denoted $f'_{i}(\xb)$ or $\bvar f(\xb)/\bvar x_i$, is defined as: $f'_{i}(\xb) \equiv \bvar f(\xb)/\bvar x_i \deq \xnor(\bvar(x_i \to \neg x_i), \bvar f(\xb \to \xb_{\neg i}))$.
\end{definition}
\begin{proposition}\label{prop:MultiVariate}
	Let $f \in \Fc(\Bb^n, \Bb)$, $n \geq 1$, and $g \in \Fc(\Bb, \Bb)$. For $1 \le i \le n$:
	\begin{equation}
		(g \circ f)'_i(\xb) = \xnor(g'(f(\xb)), f'_i(\xb)), \quad \forall \xb \in \Bb^n.
	\end{equation}
\end{proposition}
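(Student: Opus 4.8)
The plan is to reduce the multivariate statement to the univariate chain rule already recorded in \cref{thm:MainRules}(4). Fix $\xb = (x_1, \ldots, x_n) \in \Bb^n$ and an index $i$ with $1 \le i \le n$, and introduce the univariate restriction $\tilde f \in \Fc(\Bb, \Bb)$ defined by $\tilde f(t) \deq f(x_1, \ldots, x_{i-1}, t, x_{i+1}, \ldots, x_n)$; in particular $\tilde f(x_i) = f(\xb)$ and $\tilde f(\neg x_i) = f(\xb_{\neg i})$.

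First I would check that the partial variation of $f$ at $\xb$ coincides with the univariate variation of $\tilde f$ at $x_i$. Indeed $\bvar f(\xb \to \xb_{\neg i}) = \bvar(f(\xb) \to f(\xb_{\neg i})) = \bvar(\tilde f(x_i) \to \tilde f(\neg x_i)) = \bvar \tilde f(x_i \to \neg x_i)$, so by \cref{def:BoolFuncVar0} and \cref{def:BoolFuncVar},
\[
	f'_i(\xb) = \xnor\bigl(\bvar(x_i \to \neg x_i),\, \bvar \tilde f(x_i \to \neg x_i)\bigr) = \tilde f'(x_i).
\]
Next I would observe that composition commutes with this restriction: since $(g \circ f)(x_1, \ldots, x_{i-1}, t, x_{i+1}, \ldots, x_n) = g(\tilde f(t))$, the $i$-th univariate restriction of $g \circ f$ at $\xb$ is exactly $g \circ \tilde f \in \Fc(\Bb, \Bb)$. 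Applying the computation of the previous paragraph to $g \circ f$ in place of $f$ gives $(g \circ f)'_i(\xb) = (g \circ \tilde f)'(x_i)$.

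Finally, I would invoke the univariate chain rule \cref{thm:MainRules}(4) with the pair $\Bb \overset{\tilde f}{\to} \Bb \overset{g}{\to} \Bb$, which yields $(g \circ \tilde f)'(x_i) = \xnor(g'(\tilde f(x_i)), \tilde f'(x_i))$. Substituting $\tilde f(x_i) = f(\xb)$ and $\tilde f'(x_i) = f'_i(\xb)$ completes the argument, and since $\xb$ and $i$ were arbitrary the identity holds for all $\xb \in \Bb^n$ and all $1 \le i \le n$.

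There is no genuine obstacle here; the only point requiring care is the bookkeeping in \cref{def:BoolFuncVar0}, namely verifying that the defining $\xnor$ with $\bvar(x_i \to \neg x_i)$ has precisely the same form as the univariate \cref{def:BoolFuncVar}, so that the reduction to $\tilde f$ is literal rather than merely analogous. If one wanted the statement for $g \in \Fc(\Bb, \Db)$ with a general codomain, the same argument would go through verbatim using the $\Db$-valued instance of \cref{thm:MainRules}(4).
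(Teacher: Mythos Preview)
Your proof is correct. The paper does not supply an explicit proof of \cref{prop:MultiVariate}; it only remarks that the univariate results ``are extended to the multivariate case in a straightforward manner,'' and in the appendix the univariate chain rule is proved directly from the definitions (\cref{prop:B2BVariation}(3)). Your reduction via the restriction $\tilde f$ makes that straightforward extension precise and is exactly the argument one would expect: once you identify $f'_i(\xb)$ with $\tilde f'(x_i)$ and $(g\circ f)'_i(\xb)$ with $(g\circ\tilde f)'(x_i)$, the result is literally \cref{thm:MainRules}(4). An equally valid alternative, closer in spirit to the appendix proof, would be to rerun the four-line $\xnor$ computation of \cref{prop:B2BVariation}(3) with $x_i$ and $\xb_{\neg i}$ in place of $x$ and $\neg x$; the two arguments are interchangeable.
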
 
\begin{example}\label{ex:XNORVariation}
	From \cref{ex:XORVariation}, we have $\bvar{\xor(x,a)}/\bvar{x} = \neg a$ for $a, x \in \Bb$. Using \cref{thm:MainRules}-(1) we have: $\bvar{\xnor(x,a)}/\bvar{x} = a$ since $\xnor(x,a) = \neg \xor(x,a)$.
\end{example}
\begin{example}\label{ex:PreActVariation}
	Apply \cref{thm:MainRules}-(3) to $s$ from \cref{eq:Preactivation}: $\bvar{s}/\bvar{w_i} = \bvar{\Lm(w_i, x_i)}/\bvar{w_i}$ and $\bvar{s}/\bvar{x_i} = \bvar{\Lm(w_i, x_i)}/\bvar{x_i}$. Then, for $\Lm = \xnor$ as an example, we have: $\bvar{s}/\bvar{w_i} = x_i$ and $\bvar{s}/\bvar{x_i} = w_i$.
\end{example}

\subsection{BackPropagation}\label{sec:Backprop}

With the notions introduced in \cref{sec:Foundation}, we can write signals involved in the backpropagation process as shown in \cref{fig:BpSignals}. Therein, layer $l$ is a Boolean layer of consideration. For the sake of presentation simplicity, layer $l$ is assumed a fully-connected layer, and:
\begin{equation}\label{eq:Forward}
	x_{k, j}^{l+1} = w_{0, j}^l + \sum_{i=1}^m \Lm\pren{x_{k, i}^l, w_{i, j}^l}, \quad 1 \leq j \leq n,
\end{equation}
where $\Lm$ is the utilized Boolean logic, $k$ denotes sample index in the batch, $m$ and $n$ are the usual layer input and output sizes. Layer $l$ is connected to layer $l+1$ that can be an activation layer, a batch normalization, an arithmetic layer, or any others. The nature of $\bvar{\Loss}/\bvar{x_{k,j}^{l+1}}$ depends on the property of layer $l+1$. It can be the usual gradient if layer $l+1$ is a real-valued input layer, or a Boolean variation if layer $l+1$ is a Boolean-input layer. 
Given $\bvar{\Loss}/\bvar{x_{k,j}^{l+1}}$, layer $l$ needs to optimize its Boolean weights and compute signal $\bvar{\Loss}/\bvar{x_{k,i}^{l}}$ for the upstream.  
Hereafter, we consider $\Lm = \xnor$ when showing concrete illustrations of the method.

\begin{figure}[!t]
	\centering
	\includegraphics[width=0.63\columnwidth]{./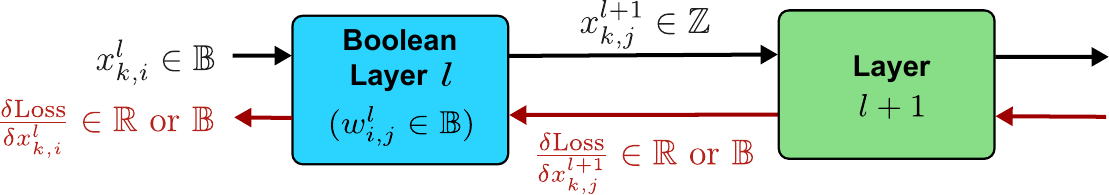}
	\caption{Illustration of {\color{BrickRed}backpropagation signals} with a {\color{Cerulean} Boolean linear layer}.
		Notice that the {\color{Green} subsequent layer} can be any \gls{FP}/Boolean layers or activation functions.
	}
	\label{fig:BpSignals}
	\vspace{-3ex}
\end{figure} 

\paragraph{Atomic Variation.} First, using \cref{thm:MainRules} and its extension to the multivariate case by \cref{prop:MultiVariate} in the same manner as shown in \cref{ex:PreActVariation}, we have:

\begin{equation}
	\frac{\bvar{x_{k,j}^{l+1}}}{\bvar{w_{i,j}^l}} = \frac{\bvar{ \Lm(x_{k,i}^l, w_{i, j}^l) }}{\bvar{w_{i,j}^l}} \overset{\Lm =\xnor}{=} x_{k,i}^l, \quad \frac{\bvar{x_{k,j}^{l+1}}}{\bvar{x_{k,i}^l}} = \frac{\bvar{ \Lm(x_{k,i}^l, w_{i, j}^l) }}{\bvar{x_{k,i}^l}} \overset{\Lm =\xnor}{=} w_{i, j}^l. 
\end{equation}

Using the chain rules given by \cref{thm:MainRules}--(4 \& 5), 
we have:

\begin{align}
	q_{i,j,k}^l := \frac{\bvar{\Loss}}{\bvar{w_{i,j}^{l}}}|_k & = \xnor(\frac{\bvar{\Loss}}{\bvar{x_{k,j}^{l+1}}}, \frac{\bvar{x_{k,j}^{l+1}}}{\bvar{w_{i,j}^l}}) \overset{\Lm =\xnor}{=}  \xnor(\frac{\bvar{\Loss}}{\bvar{x_{k,j}^{l+1}}}, x_{k,i}^l), \label{eq:AtomicWeightVariation}\\
	g_{k,i,j}^l := \frac{\bvar{\Loss}}{\bvar{x_{k,i}^{l}}}|_j & = \xnor(\frac{\bvar{\Loss}}{\bvar{x_{k,j}^{l+1}}}, \frac{\bvar{x_{k,j}^{l+1}}}{\bvar{x_{k,i}^l}}) \overset{\Lm =\xnor}{=} \xnor(\frac{\bvar{\Loss}}{\bvar{x_{k,j}^{l+1}}}, w_{i, j}^l). \label{eq:AtomicBprop}
\end{align}

\paragraph{Aggregation.} Atomic variation $q_{i,j,k}^l$ is aggregated over batch dimension $k$ while $g_{k,i,j}^l$ is aggregated over output dimension $j$. 
Let $\one{\cdot}$ be the indicator function. For $b \in \Bb$ and variable $x$, define: $\one{x = b} = 1$ if $x_{\logic} = b$ and $\one{x = b} = 0$ otherwise. 
Atomic variations are aggregated as:
\begin{align}
	q_{i,j}^l := \frac{\bvar{\Loss}}{\bvar{w_{i,j}^{l}}} & = \sum_k \one{q_{i,j,k}^l = \True}|q_{i,j,k}^l| - \sum_k \one{q_{i,j,k}^l = \False}|q_{i,j,k}^l|, \label{eq:AggrWeightVariation}\\
	g_{k,i}^l := \frac{\bvar{\Loss}}{\bvar{x_{k,i}^{l}}} & = \sum_j \one{g_{k,i,j}^l = \True}|g_{k,i,j}^l| - \sum_j \one{g_{k,i,j}^l = \False}|g_{k,i,j}^l|.\label{eq:AggrBprop}
\end{align}

\paragraph{Boolean Optimizer.} With $q_{i,j}^l$ obtained in \cref{eq:AggrWeightVariation}, the rule for optimizing $w_{i,j}^{l}$ subjected to making the loss decreased is simply given according to its definition as:
\begin{equation}\label{eq:OptimLogic}
	\boxed{w_{i,j}^l = \neg w_{i,j}^l  \textrm{ if } \xnor\pren{q_{i,j}^l, w_{i,j}^l} = \True.}
\end{equation}
\cref{eq:OptimLogic} is the core optimization logic based on which more sophisticated forms of optimizer can be developed in the same manner as different methods such as Adam, Adaptive Adam, etc. have been developed from the basic gradient descent principle. For instance, the following is an optimizer that accumulates $q_{i,j}^l$ over training iterations. Denote by $q_{i,j}^{l,t}$ the optimization signal at iteration $t$, and by $m_{i,j}^{l,t}$ its accumulator with $m_{i,j}^{l,0} := 0$ and: 
	\begin{equation}\label{eq:Accum}
		m_{i,j}^{l,t+1} = \beta^t m_{i,j}^{l,t} + \eta^t q_{i,j}^{l,t+1},
	\end{equation}
	where $\eta^t$ is an accumulation factor that can be tuned as a hyper-parameter, and $\beta^t$ is an auto-regularizing factor that expresses the system's state at time $t$. Its usage is linked to brain plasticity \citep{Fuchs2014} and Hebbian theory \citep{Hebb2005} 
	forcing weights to adapt to their neighborhood.
	For the chosen weight's neighborhood, for instance, neuron, layer, or network level, $\beta^t$ is given as:
	\begin{equation}
		\beta^t = \frac{\textrm{Number of unchanged weights at } t}{\textrm{Total number of weights}}.
	\end{equation}
	In the experiments presented later, $\beta^t$ is set to per-layer basis. %
	Finally, the learning process is as described in \cref{algo:BooleanTraining}.
	We encourage the readers to check the detailed implementations, practical considerations, and example codes of our proposed method, available in \cref{appendix:implementation} and \cref{appendix:train_regu}.

\begin{minipage}[][][t]{0.5\textwidth}
\paragraph{Convergence Analysis.} The following result describes how the iterative logic optimization based on \cref{eq:OptimLogic} minimizes a predefined loss $f$, under the standard non-convex assumption. The technical assumptions and the proof are given in \cref{sec:cvg}.
\begin{theorem}\label{thm:Convegence}
	Under the specified assumptions, Boolean optimization logic converges at:
	\begin{equation}
		\frac{1}{T}\sum_{t=0}^{T-1} \E{\left\|\nabla f\left(w_{t}\right)\right\|^{2}} \le  \frac{A^*}{T\eta} + B^* \eta + C^* \eta^2 + Lr_d,
	\end{equation}
	where $A^* = {2(f(w_0)-f_*)}$ with $f_*$ being uniform lower bound assumed exists, $B^*= 2L\sigma^2$, $C^*= 4L^2\sigma^2 \frac{\gamma}{(1-\gamma)^2}$ in which $L$ is the assumed Lipschitz constant, and $r_d = \nicefrac{d\kappa}{2}$.
\end{theorem}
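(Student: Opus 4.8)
The plan is to recast the Boolean optimizer of \cref{eq:OptimLogic}--\cref{eq:Accum} as a momentum-type stochastic scheme on a smooth surrogate loss $f$ and then run the classical non-convex analysis, paying an extra additive price for the Boolean relaxation. Concretely, I would first fix an embedding of the Boolean weights into $\{-1,+1\}^d \subset \Rb^d$ (identifying $\True \equiv +1$, $\False \equiv -1$), regard the accumulator $m_{i,j}^{l,t}$ as a latent real momentum buffer, and interpret the flip rule ``$w \gets \neg w$ when $\xnor(m,w)=\True$'' as a lazy/projected descent step: a coordinate moves by $\pm 2$ exactly when the accumulated signal points consistently downhill. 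The aggregated variation $q^t = \bvar{\Loss}/\bvar w$ from \cref{eq:AggrWeightVariation} plays the role of the stochastic gradient, with batch averaging supplying the variance bound $\sigma^2$ and unbiasedness up to a coordinate-wise Boolean discretization bias bounded by $\kappa$; summing this bias over the $d$ coordinates and pushing it through the smoothness inequality is what produces the irreducible term $Lr_d = L d\kappa/2$.

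Then I would carry out the following steps in order. (1) Apply the $L$-smoothness descent lemma to $f$ along the embedded trajectory, $f(w_{t+1}) \le f(w_t) + \langle \nabla f(w_t), w_{t+1}-w_t\rangle + \tfrac{L}{2}\|w_{t+1}-w_t\|^2$, and replace $w_{t+1}-w_t$ by its expression in terms of the momentum buffer. (2) Unroll the recursion $m^{t+1} = \beta^t m^t + \eta q^{t+1}$ to write $m^t$ as a decaying sum $\sum_{s\le t}\bigl(\prod_r \beta^r\bigr)\eta\, q^s$; using a uniform bound $\beta^t \le \gamma < 1$, the tail of this geometric weighting is controlled by $\sum_k \gamma^k = (1-\gamma)^{-1}$ and $\sum_k k\gamma^k = \gamma(1-\gamma)^{-2}$, which is exactly where the coefficient $C^* = 4L^2\sigma^2\gamma/(1-\gamma)^2$ of the $\eta^2$ term comes from. (3) Take conditional expectations, split $q^s$ into its mean (close to $\nabla f$, up to the $\kappa$ bias) plus a martingale-difference noise term, bound the cross terms with Cauchy--Schwarz and Young's inequality, and collect: the $\langle \nabla f, \nabla f\rangle$ part gives the $-\tfrac{\eta}{2}\|\nabla f(w_t)\|^2$ descent, the noise second moments give $B^*\eta = 2L\sigma^2\eta$, and the momentum history gives $C^*\eta^2$. (4) Telescope over $t=0,\dots,T-1$, use the uniform lower bound $f_*$ so the left side of the telescope is at most $f(w_0)-f_* = A^*/2$, divide by $T\eta$, and rearrange to isolate $\tfrac{1}{T}\sum_t \E{\|\nabla f(w_t)\|^2}$, which yields the claimed bound.

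The main obstacle I expect is steps (1)--(2): unlike ordinary SGD, the update is a \emph{discrete} flip rather than $w_{t+1} = w_t - \eta q_t$, so $w_{t+1}-w_t$ is not proportional to the stochastic gradient and the naive descent lemma does not close by itself. The fix is a Lyapunov/potential function that augments $f(w_t)$ with a term measuring the ``charge'' stored in the accumulators $m^t$, so that iterations which merely build up $m$ toward a future flip are still accounted for, or equivalently an amortized argument showing that over a window the net displacement matches a continuous descent step in expectation; reconciling the reset $m \gets 0$ that occurs on every flip with this potential is the delicate bookkeeping. A secondary difficulty is making the variation--gradient link rigorous: one must show that the Boolean chain rule of \cref{thm:MainRules}, composed through the network, yields a $q^t$ whose conditional expectation agrees with $\nabla f(w_t)$ up to the stated $\kappa$-bias, which relies on the hypotheses $|f'(x)|\le 1$ and the matching condition $g'(f(x))=g'(f(x)-1)$ of \cref{thm:MainRules}-(5) holding along the trajectory.
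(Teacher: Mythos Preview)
Your proposal misreads the role of $\gamma$ and therefore the mechanism that produces the $C^*\eta^2$ term. In the paper's assumptions, $\gamma$ is \emph{not} a uniform bound on the momentum decay $\beta^t$ from \cref{eq:Accum}; it is the contraction factor of a compressor $Q_1$ (their Assumption~\ref{assum:alwaysflip}: $\|Q_1(v,w)-v\|^2 \le \gamma\|v\|^2$). The proof does not unroll the momentum buffer at all. Instead, it recasts \cref{algo:BooleanTraining} as an \emph{error-feedback compressed SGD} scheme (their \cref{alg:EquivOptim}): $m_t = \eta\tilde\nabla f(w_t) + e_t$, $\Delta_t = Q_1(m_t,w_t)$, $w_{t+1}=Q_0(w_t-\Delta_t)$, $e_{t+1}=m_t-\Delta_t$, where $Q_0=\sign$ and $Q_1$ encodes the flip-and-reset rule. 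The key device is the \emph{virtual sequence} $x_t := w_t - e_t$, which satisfies $\E{x_{t+1}\mid w_t} = x_t - \eta\nabla f(w_t)$ exactly, so the descent lemma applies cleanly to $x_t$ rather than to $w_t$. The gap between $x_t$ and $w_t$ is $e_t$, and a recursive bound on the compression residual (\cref{lem:boundederror}) gives $\E{\|e_t\|^2}\le \tfrac{2\gamma}{(1-\gamma)^2}\eta^2\sigma^2$; pushed through $L$-smoothness, this is the source of $C^*$. The geometric factor $\gamma/(1-\gamma)^2$ thus arises from error-feedback residuals, not from momentum tails.

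Similarly, the $Lr_d$ term is not a bias in the gradient estimate $q^t$: it is the variance of the outer quantizer $Q_0$. Under the stochastic-rounding assumption (\ref{assum:stoflip}) and the bounded-accumulator assumption (\ref{assum:boundedaccumulator}: $|a_t|_i\le\kappa$), the paper shows $\E{\|h_t\|^2}\le \eta d\kappa$ for $h_t = Q_0(w_t-\Delta_t)-(w_t-\Delta_t)$ (\cref{lem:boundedht}), and this enters the smoothness inequality as an additive $\tfrac{L}{2}d\kappa$ after normalization. Your step (3), which posits that the conditional mean of $q^t$ equals $\nabla f(w_t)$ up to a $\kappa$-bias, is not how $\kappa$ is used; and the secondary difficulty you raise about the Boolean chain rule of \cref{thm:MainRules} is sidestepped entirely---the proof simply \emph{assumes} (\ref{assum:boundedvariance}) that $\tilde\nabla f$ is an unbiased bounded-variance estimator and never touches the network structure.

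Your instinct that a potential augmenting $f(w_t)$ is needed to handle the discrete flips is correct in spirit---the virtual sequence $x_t = w_t - e_t$ is precisely such a potential, with the stored ``charge'' being the compression residual $e_t$---but your plan leaves this as the main open obstacle, whereas for the paper it \emph{is} the proof. Without the error-feedback reformulation (or an equally explicit Lyapunov), your steps (1)--(2) will not close, as you already suspected.
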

The bound in \cref{thm:Convegence} contains four terms. The first is typical for a general non-convex target and expresses how initialization affects the convergence. The second and third terms depend on the fluctuation of the minibatch gradients.
There is an ``error bound'' of $2Ld\kappa$ independent of $T$. %
This error bound is the cost of using discrete weights as part of the optimization algorithm. Previous work with quantized models also includes such error bounds \cite{Li2017,Li2019b}.
\end{minipage}
\hfill
\begin{minipage}[][][t]{0.45\textwidth}
	\begin{algorithm}[H]\small
	\caption{Illustration with a FC layer.}
	\label{algo:BooleanTraining}
	\SetKwInOut{Input}{Input}
	\SetKwInOut{Output}{Output}
	\SetKwBlock{Loop}{Loop}{end}
	\SetKwInOut{Initialize}{Initialize}
	\SetKwFor{When}{When}{do}{end}
	\SetKwFunction{Wait}{Wait}
	\SetAlgoLined
	\SetNoFillComment
	\Input{
		Learning rate $\eta$, nb iterations $T$\;
	}
	\Initialize{	
		$m_{i,j}^{l,0} = 0$; $\beta^0 = 1$\;
	}
	\For{$t = 0,\dots, T-1$}{
		\tcc{\textbf{1. Forward}}
		Compute $x^{l+1,t}$ following \cref{eq:Forward}\; %
		\tcc{\textbf{2. Backward}}
		Receive $\frac{\bvar{\Loss}}{\bvar{x_{k,j}^{l+1,t}}}$ from downstream layer\;
		\tcc{\textbf{2.1  Backpropagation}}
		Compute and backpropagate $g^{l,t}$ of \cref{eq:AggrBprop}\; %
		\tcc{\textbf{2.2  Weight update process}}
		$N_{\textrm{tot}} := 0$, $N_{\textrm{unchanged}} := 0$\;
		\ForEach{$w_{i,j}^{l}$}{
			Compute $q_{i,j}^{l,t+1}$ following \cref{eq:AggrWeightVariation}\;  %
			Update $m_{i,j}^{l,t+1} = \beta^t m_{i,j}^{l,t} + \eta^t q_{i,j}^{l,t+1}$\;
			$N_{\textrm{tot}} \gets N_{\textrm{tot}} + 1$\;
			\eIf{$\xnor(m_{i,j}^{l,t+1}, w_{i,j}^{l,t}) = \True$}{ %
				$w_{i,j}^{l,t+1} = \neg w_{i,j}^{l,t}$ \hspace{-2.7ex} \tcc*[r]{invert}
				$m_{i,j}^{l,t+1} = 0$\;
			}{
				$w_{i,j}^{l,t+1} = w_{i,j}^{l,t}$  \tcc*[r]{keep}
				$N_{\textrm{unchanged}} \gets N_{\textrm{unchanged}} + 1$\;
			}%
		}
		Update $\eta^{t+1}$, $\beta^{t+1} = N_{\textrm{unchanged}}/N_{\textrm{tot}}$ \;
	}
\end{algorithm}

\end{minipage}

\paragraph{Regularization.} Exploding and vanishing gradients are two well-known issues when it comes to train deep neural networks. During Boolean Logic training, our preliminary experiments indicated that the backpropagation signal also experiences similar phenomenon, resulting in unstable training. 
Our idea is to scale the backpropagation signals so as to match their variance. Thanks to the Boolean structure, assumptions on the involved signals can be made so that the scaling factor can be analytically derived in closed-form without need of learning or batch statistics computation. Precisely, through linear layers of output size $m$, the backpropagation signal is scaled with $\sqrt{\nicefrac{2}{m}}$, and for convolutional layers of output size $c_{\textrm{out}}$, stride $v$ and kernel sizes $k_x, k_y$, the backpropagation signal is scaled with $2\sqrt{\nicefrac{2v}{c_{\textrm{out}} k_x k_y}}$ if maxpooling is applied, or $\sqrt{\nicefrac{2v}{c_{\textrm{out}} k_x k_y}}$ otherwise. The full detail is presented in \cref{appendix:train_regu}.

\section{Experiments}\label{sec:experiments}

Our \ours method achieves extreme compression by using both Boolean activations and weights.
We rigorously evaluate its performance on challenging precision-demanding tasks, including \emph{image classification} on \textsc{cifar10} \citep{Krizhevsky2009} and \textsc{imagenet} \citep{Krizhevsky2012},
as well as \emph{super-resolution} on five popular datasets. %
Furthermore, recognizing the necessity of deploying efficient lightweight models for edge computing, we delve into three fine-tuning scenarios, showcasing the adaptability of our approach.
Specifically, we investigate fine-tuning Boolean models for image classification on \textsc{cifar10} and \textsc{cifar100} using \textsc{vgg-small}. %
For \emph{segmentation tasks}, we study \textsc{deeplabv3} \citep{chen2017rethinking} fine-tuned on \textsc{cityscapes} \citep{Cordts_2016_CVPR} and \textsc{pascal voc 2012} \citep{pascal-voc-2012} datasets. The backbone for such a model is our Boolean \textsc{resnet18} \citep{He2016} network trained from scratch on \textsc{imagenet}.
Finally, we consider an evaluation in the domain of \emph{natural language understanding}, fine-tuning \textsc{bert} \citep{DevlinCLT19}, a transformer-based \citep{Vaswani2017} language model, on the \textsc{glue} benchmark \citep{Wang2019d}.

\paragraph{Experimental Setup.}
\glsreset{FP}
\glsreset{BN}
To construct our \ours models, we introduce Boolean weights and activations and substitute \gls{FP} arithmetic layers with Boolean equivalents.
Throughout all benchmarks, we maintain the general network design of the chosen \gls{FP} baseline, while excluding \gls{FP}-specific components like ReLU, PReLU activations, or \gls{BN} \citep{ioffe15}, unless specified otherwise.
Consistent with the common setup in existing literature \citep[see, e.g.,][]{Rastegari2016,Chmiel2021,Bethge_2021_WACV}, only the first and last layers remain in \gls{FP} and are optimized using an Adam optimizer \citep{Kingma2014}.
Comprehensive experiment details for reproducibility are provided in \Cref{exp_setup}.

\paragraph{Complexity Evaluation.}
It has been demonstrated that relying solely on \textsc{flops} and \textsc{bops} for complexity assessment is inadequate \citep{Sze2017,Sze2020a,Yang2017a,Strubell2019}.
In fact, these metrics fail to capture the actual load caused by propagating \gls{FP} data through the \glspl{BNN}. %
Instead, energy consumption serves as a crucial indicator of efficiency.
Given the absence of native Boolean accelerators, we estimate analytically energy consumption by analyzing the arithmetic operations, data movements within storage/processing units, and the energy cost of each operation.
This approach is implemented for the Nvidia GPU (Tesla V100) and Ascend \cite{Liao2021} architectures.
Further details are available in \cref{appendix:complexity}.

\subsection{Image Classification}\label{sec:classification}

Our \ours method is tested on two network configurations: \textit{small \& compact} and \textit{large \& deep}.
In the former scenario, we utilize the \textsc{vgg-small} \citep{Simonyan2014} baseline trained on \textsc{cifar10}. 
Evaluation of our Boolean architecture is conducted both without \gls{BN} \citep{Courbariaux2015}, and with \gls{BN} including activation from \citep{Liu2018}.
These designs achieve $90.29 \small{\pm 0.09}$\% (estimated over six repetitions) and $92.37 \small{\pm 0.01}$\% (estimated over five repetitions) accuracy, respectively (see \Cref{table:cifar}).
Notably, without \gls{BN}, our results align closely with \textsc{binaryconnect} \citep{Courbariaux2015}, which employs $32$-bit activations during both inference and training. 
Furthermore, \gls{BN} brings the accuracy within $1$ point of the \gls{FP} baseline. 
Additional results are provided in the supplementary material for \textsc{vgg-small} models ending with 1 \textsc{fc} layer.

Our method requires much less energy than the \gls{FP} baseline. In particular, it consumes less than 5\% of energy for our designs with and without \gls{BN} respectively.
These results highlight the remarkable energy efficiency of our \ours method in both inference and training, surpassing latent-weight based training methods \citep{Courbariaux2015, Rastegari2016, Hubara2017} reliant on \gls{FP} weights.
Notably, despite a slight increase in energy consumption, utilizing \gls{BN} yields superior accuracy.
Even with \gls{BN}, our approach maintains superior efficiency compared to alternative methods, further emphasizing the flexibility of our approach in training networks with a blend of Boolean and \gls{FP} components.

In the \textit{large \& deep} case, we consider the \textsc{resnet18} baseline trained from scratch on \textsc{imagenet}.
We compare our approach to methods employing the same baseline, larger architectures, and additional training strategies such as \gls{KD} with a \textsc{resnet34} teacher or \gls{FP}-based shortcuts \citep{Liu2018}.
Our method consistently achieves the highest accuracy across all categories, ranging from the standard model ($51.8$\% accuracy) to larger configurations ($70.0$\% accuracy), as shown in \cref{table:imagenet_resntricks}.
Additionally, our \ours method exhibits the smallest energy consumption in most categories, with a remarkable $24.45\%$ for our large architecture with and without \gls{KD}.
Notably, our method outperforms the \gls{FP} baseline when using $4\times$ filter enlargement (base 256), providing significant energy reduction ($24.45\%$).
Furthermore, it surpasses  the \gls{SOTA} \textsc{pokebnn} \citep{zhang2022a}, utilizing \textsc{resnet50} as a teacher.

For completeness, we also  implemented neural gradient quantization, utilizing \textsc{int4} quantization with a logarithmic round-to-nearest approach \citep{Chmiel2021} and statistics-aware weight binning \citep{choi2018bridging}.
Our experiments on \textsc{imagenet} confirm that 4-bit quantization is sufficient to achieve standard \gls{FP} performances, reaching $67.53$\% accuracy in $100$ epochs (further details provided in \Cref{nn_quant}).

\begin{table}[t!]
\centering
\begin{tabular}{cc}
\begin{minipage}{0.475\textwidth}
 \begin{minipage}{1\textwidth}
 	
\caption{Results with \textsc{vgg-small} on \textsc{cifar10}.
`Cons.' is the energy consumption w.r.t. the \gls{FP} baseline, evaluated on 1 training iteration.}
\resizebox{1\textwidth}{!}{
	\begin{tabular}{ l  r@{/}l  c@{.}l  r@{.}l  r@{.}l}
		\toprule
		\multirow{2}{*}{\textbf{Method}}     & \multicolumn{2}{c}{\multirow{2}{*}{\textbf{W/A}}} & \multicolumn{2}{c}{\multirow{2}{*}{\textbf{Acc.(\%)}}} & \multicolumn{2}{c}{\textbf{Cons.(\%)}} & \multicolumn{2}{c}{\textbf{Cons. (\%)}} \\ %
		                                     & \multicolumn{2}{c}{ }     &    \multicolumn{2}{c}{ }         & \multicolumn{2}{c}{\textbf{Ascend}}    & \multicolumn{2}{c}{\textbf{Tesla V100}}                       \\
		\midrule\midrule
		Full-precision \cite{Zhang2018}      & 32                               & 32                                    & 93                                     & 80                                      & 100 & 00 & 100 & 00 \\
		\tikzexternaldisable ({\protect\tikz[baseline=-.65ex]\protect\draw[thick, color=color_binaryconnect, fill=color_binaryconnect, mark=*, mark size=2pt, line width=1.25pt] plot[] (-.0, 0)--(-0,0);}) \tikzexternalenable \textsc{binaryconnect} \citep{Courbariaux2015} & 1                                & 32                                    & 90                                     & 10                                      & 38  & 59 & 48  & 49 \\
		\tikzexternaldisable ({\protect\tikz[baseline=-.65ex]\protect\draw[thick, color=color_xnornet, fill=color_xnornet, mark=*, mark size=2pt, line width=1.25pt] plot[] (-.0, 0)--(-0,0);}) \tikzexternalenable \textsc{xnor-net} \citep{Rastegari2016}        & 1                                & 1                                     & 89                                     & 83                                      & 34  & 21 & 45  & 68 \\
		\tikzexternaldisable ({\protect\tikz[baseline=-.65ex]\protect\draw[thick, color=color_binarynet, fill=color_binarynet, mark=*, mark size=2pt, line width=1.25pt] plot[] (-.0, 0)--(-0,0);}) \tikzexternalenable \textsc{binarynet} \citep{Courbariaux2016}             & 1                                & 1                                     & 89                                     & 85                                      & 32  & 60 & 43  & 61 \\
		\tikzexternaldisable ({\protect\tikz[baseline=-.65ex]\protect\draw[thick, color=color_booldl, fill=color_booldl, mark=*, mark size=2pt, line width=1.25pt] plot[] (-.0, 0)--(-0,0);}) \tikzexternalenable \ours w/o \textsc{BN} [\textbf{Ours}]                & 1                                & 1                                     & 90                                     & 29                                      & 3   & 64 & 2   & 78 \\
		\tikzexternaldisable ({\protect\tikz[baseline=-.65ex]\protect\draw[thick, color=color_booldl, fill=color_booldl, mark=*, mark size=2pt, line width=1.25pt] plot[] (-.0, 0)--(-0,0);}) \tikzexternalenable \ours with \textsc{BN} [\textbf{Ours}]               & 1                                & 1                                     & \textbf{92}                            & \textbf{37}                             & 4   & 87 & 3   & 71 \\ [.5ex] \bottomrule
	\end{tabular}}
\label{table:cifar}

 \end{minipage}

 \vspace{0.2cm}
 
 \begin{minipage}{1\textwidth}
 	\caption{Super-resolution results measured in PSNR (dB) ($\uparrow$), using the \textsc{edsr} baseline \citep{Lim2017}.}
\resizebox{\textwidth}{!}{
	\begin{tabular}{llccccc} \toprule
		\multirow{2}{*}{\textbf{Task}} & \multirow{2}{*}{\textbf{Method}}                                                                                                                                                                                                        & \textsc{set5} & \textsc{set14} & \textsc{bsd100} & \textsc{urban100} & \textsc{div2k} \\
		                               &                                                                                                                                                                                                                                         & \citep{bevilacqua2012} & \citep{zeyde2012} & \citep{huang2015} & \citep{martin2001} & \citep{Agustsson2017, Timofte2017} \\ \midrule\midrule
		\multirow{3}{*}{$\times 2$}    & \textsc{full edsr (fp)}                                                                                                                                                                                                                 & 38.11         & 33.92          & 32.32           & 32.93             & 35.03          \\
		                               & \textsc{small edsr (fp)}                                                                                                                                                                                                                & 38.01         & 33.63          & 32.19           & 31.60             & 34.67          \\
		                               & \tikzexternaldisable ({\protect\tikz[baseline=-.65ex]\protect\draw[thick, color=color_booldl, fill=color_booldl, mark=*, mark size=2pt, line width=1.25pt] plot[] (-.0, 0)--(-0,0);}) \tikzexternalenable \ours [\textbf{Ours}] & 37.42         & 33.00          & 31.75           & 30.26             & 33.82          \\ \bottomrule
		\multirow{3}{*}{$\times 3$}    & \textsc{full edsr (fp)}                                                                                                                                                                                                                 & 34.65         & 30.52          & 29.25           & $--$              & 31.26          \\
		                               & \textsc{small edsr (fp)}                                                                                                                                                                                                                & 34.37         & 30.24          & 29.10           & $--$              & 30.93          \\
		                               & \tikzexternaldisable ({\protect\tikz[baseline=-.65ex]\protect\draw[thick, color=color_booldl, fill=color_booldl, mark=*, mark size=2pt, line width=1.25pt] plot[] (-.0, 0)--(-0,0);}) \tikzexternalenable \ours [\textbf{Ours}] & 33.56         & 29.70          & 28.72           & $--$              & 30.22          \\ \bottomrule
		\multirow{3}{*}{$\times 4$}    & \textsc{full edsr (fp)}                                                                                                                                                                                                                 & 32.46         & 28.80          & 27.71           & 26.64             & 29.25          \\
		                               & \textsc{small edsr (fp)}                                                                                                                                                                                                                & 32.17         & 28.53          & 27.62           & 26.14             & 29.04          \\
		                               & \tikzexternaldisable ({\protect\tikz[baseline=-.65ex]\protect\draw[thick, color=color_booldl, fill=color_booldl, mark=*, mark size=2pt, line width=1.25pt] plot[] (-.0, 0)--(-0,0);}) \tikzexternalenable \ours [\textbf{Ours}] & 31.23         & 27.97          & 27.24           & 25.12             & 28.36          \\ \bottomrule
	\end{tabular}}
\label{table:super}

 \end{minipage}
 
 \vspace{0.2cm}
 
 \begin{minipage}{1\textwidth}
 	\centering
 	\caption{Image segmentation results.}
\resizebox{0.9\columnwidth}{!}{
	\begin{tabular}{ccc}
		\toprule
		\textbf{Dataset}                     & \textbf{Model}                                                                                                                                                                                                                          & \textbf{mIoU (\%) ($\uparrow$)} \\ \midrule\midrule
		\multirow{3}{*}{\textsc{cityscapes}} & \textsc{fp baseline}                                                                                                                                                                                                                             & 70.7               \\
		                                     & \textsc{binary dad-net} \cite{frickenstein2020binary}                                                                                                                                                                                            & 58.1               \\
		                                     & \tikzexternaldisable ({\protect\tikz[baseline=-.65ex]\protect\draw[thick, color=color_booldl, fill=color_booldl, mark=*, mark size=2pt, line width=1.25pt] plot[] (-.0, 0)--(-0,0);}) \tikzexternalenable \ours [\textbf{Ours}] & \textbf{67.4}      \\
		\midrule
		\multirow{2}{*}{\textsc{pascal voc 2012}}     & \textsc{fp baseline}                                                                                                                                                                                                                              & 72.1               \\
		                                     & \tikzexternaldisable ({\protect\tikz[baseline=-.65ex]\protect\draw[thick, color=color_booldl, fill=color_booldl, mark=*, mark size=2pt, line width=1.25pt] plot[] (-.0, 0)--(-0,0);}) \tikzexternalenable \ours [\textbf{Ours}] & 67.3               \\
		\bottomrule
	\end{tabular}}
\label{tab:seg_cs_val}

 \end{minipage}
 \end{minipage}
 &
 \begin{minipage}{0.475\textwidth}
 \begin{minipage}{1\textwidth}
 	
\caption{
Results with \textsc{resnet18} baseline on \textsc{imagenet}. `Base' is the mapping dimension of the first layer. Energy consumption is evaluated on 1 training iteration. `Cons.' is the energy consumption w.r.t. the \gls{FP} baseline.}

\resizebox{\textwidth}{!}{

\begin{tabular}{clccc} 
	\toprule
	\textbf{Training}   & \multirow{2}{*}{\textbf{Method}}	& \textbf{Acc.}   & \textbf{Cons. (\%}) 	& \textbf{Cons. (\%)}   \\
	\textbf{Modality}   &  				& \textbf{(\%)}   & \textbf{Ascend} 	& \textbf{Tesla V100}   \\
	\midrule\midrule
	\textsc{fp baseline}							& \textsc{resnet18} \cite{He2016}             &  69.7     	& 100.00 	& 100.00\\
	\midrule
	& \tikzexternaldisable ({\protect\tikz[baseline=-.65ex]\protect\draw[thick, color=color_binarynet, fill=color_binarynet, mark=*, mark size=2pt, line width=1.25pt] plot[] (-.0, 0)--(-0,0);}) \tikzexternalenable \textsc{binarynet} \citep{Courbariaux2016}         &  42.2       	& $--$     	& $--$\\
	& \tikzexternaldisable ({\protect\tikz[baseline=-.65ex]\protect\draw[thick, color=color_xnornet, fill=color_xnornet, mark=*, mark size=2pt, line width=1.25pt] plot[] (-.0, 0)--(-0,0);}) \tikzexternalenable \textsc{xnor-net} \citep{Rastegari2016}      &  51.2        & $--$     	& $--$\\
	& \tikzexternaldisable ({\protect\tikz[baseline=-.65ex]\protect\draw[thick, color=color_booldl, fill=color_booldl, mark=*, mark size=2pt, line width=1.25pt] plot[] (-.0, 0)--(-0,0);}) \tikzexternalenable \ours + \textsc{bn}  \small{(Base 64)}    &  51.8        & 8.77  	& 3.87 \\
	\midrule
	\textsc{fp shortcut} 						& \tikzexternaldisable ({\protect\tikz[baseline=-.65ex]\protect\draw[thick, color=color_birealnet, fill=color_birealnet, mark=*, mark size=2pt, line width=1.25pt] plot[] (-.0, 0)--(-0,0);}) \tikzexternalenable\textsc{bi-realnet:18} \citep{Liu2018}       &  56.4        & 46.60 	& 32.99\\
	\midrule
	\multirow{3}{4em}{\textsc{large models}}	& \tikzexternaldisable ({\protect\tikz[baseline=-.65ex]\protect\draw[thick, color=color_birealnet, fill=color_birealnet, mark=*, mark size=2pt, line width=1.25pt] plot[] (-.0, 0)--(-0,0);}) \tikzexternalenable\textsc{bi-realnet:34} \citep{Liu2018}       &  62.2        & 80.00 	& 58.24\\
	& \tikzexternaldisable ({\protect\tikz[baseline=-.65ex]\protect\draw[thick, color=color_birealnet, fill=color_birealnet, mark=*, mark size=2pt, line width=1.25pt] plot[] (-.0, 0)--(-0,0);}) \tikzexternalenable\textsc{bi-realnet:152} \cite{Liu2018}      &  64.5        & $--$     	& $--$\\
	&\tikzexternaldisable ({\protect\tikz[baseline=-.65ex]\protect\draw[thick, color=color_meliusnet, fill=color_meliusnet, mark=*, mark size=2pt, line width=1.25pt] plot[] (-.0, 0)--(-0,0);}) \tikzexternalenable \textsc{melius-net:29} \citep{Bethge_2021_WACV}    &  65.8        & $--$     	& $--$\\
	& \tikzexternaldisable ({\protect\tikz[baseline=-.65ex]\protect\draw[thick, color=color_booldl, fill=color_booldl, mark=*, mark size=2pt, line width=1.25pt] plot[] (-.0, 0)--(-0,0);}) \tikzexternalenable \ours  \small{(Base 256)}    &  \textbf{66.9}  & 38.82    & 24.45\\
	\midrule
	\multirow{4}{4em}{\textsc{kd: resnet34}}     & \tikzexternaldisable ({\protect\tikz[baseline=-.65ex]\protect\draw[thick, color=color_real2binary, fill=color_real2binary, mark=*, mark size=2pt, line width=1.25pt] plot[] (-.0, 0)--(-0,0);}) \tikzexternalenable \textsc{real2binary} \citep{Martinez2020}    &  65.4        & $--$     	& $--$ \\
	& \tikzexternaldisable ({\protect\tikz[baseline=-.65ex]\protect\draw[thick, color=color_reactnet, fill=color_reactnet, mark=*, mark size=2pt, line width=1.25pt] plot[] (-.0, 0)--(-0,0);}) \tikzexternalenable \textsc{reactnet-resnet18} \cite{Liu2020}   &  65.5        & 45.43 	& 77.89\\
	& \tikzexternaldisable ({\protect\tikz[baseline=-.65ex]\protect\draw[thick, color=color_bnext, fill=color_bnext, mark=*, mark size=2pt, line width=1.25pt] plot[] (-.0, 0)--(-0,0);}) \tikzexternalenable \textsc{bnext:18} \cite{Guo2022}             &  68.4        & 45.48 	& 37.51\\
	&\tikzexternaldisable ({\protect\tikz[baseline=-.65ex]\protect\draw[thick, color=color_booldl, fill=color_booldl, mark=*, mark size=2pt, line width=1.25pt] plot[] (-.0, 0)--(-0,0);}) \tikzexternalenable \ours + \textsc{bn}  \small{(Base 192)})            &  65.9        & 26.91 	& 16.91\\
	& \tikzexternaldisable ({\protect\tikz[baseline=-.65ex]\protect\draw[thick, color=color_booldl, fill=color_booldl, mark=*, mark size=2pt, line width=1.25pt] plot[] (-.0, 0)--(-0,0);}) \tikzexternalenable \ours  \small{(Base 256)}    &  \textbf{70.0} & 38.82    & 24.45\\
	\midrule
	\textsc{kd: resnet50}                        & \tikzexternaldisable ({\protect\tikz[baseline=-.65ex]\protect\draw[thick, color=color_pokebnn, fill=color_pokebnn, mark=*, mark size=2pt, line width=1.25pt] plot[] (-.0, 0)--(-0,0);}) \tikzexternalenable \textsc{pokebnn-resnet18} \cite{zhang2022a} &  65.2        & $--$    & $--$ \\
	\bottomrule
\end{tabular}}
\label{table:imagenet_resntricks}

 \end{minipage}
 
 \begin{minipage}{1\textwidth}
 	\caption{Results with \textsc{vgg-small} baseline fine-tuned on \textsc{cifar10} and \textsc{cifar100}. `FT' means `Fine-Tuning'.}
\resizebox{1\columnwidth}{!}{
	\begin{tabular}{ m{1.5em}  m{6em}  m{3.25em}  m{4.75em}  m{4.5em}  m{3em}}
		\toprule
		\textbf{Ref.}                                   & \textbf{Method}                                                                                                                                                                                                                        & \textbf{Model Init.} & \textbf{Train./FT Dataset} & \textbf{Bitwidth W/A/G} & \textbf{Acc. (\%)} \\
		\midrule\midrule
		\label{config:a}\hyperref[config:a]{\textsc{a}} & \textsc{fp baseline}                                                                                                                                                                                                                   & Random               & \textsc{cifar10}           & $32/32/32$              & 95.27              \\ %
		\label{config:b}\hyperref[config:b]{\textsc{b}} & \textsc{fp baseline} \tablefootnote{\label{note}\textsc{vgg-small} with the last \gls{FP} layer mapping to 100 classes.}
		                                                & Random                                                                                                                                                                                                                                 & \textsc{cifar100}    & $32/32/32$                 & 77.27                                        \\
		\hyperref[config:f]{\textsc{c}}                 & \tikzexternaldisable ({\protect\tikz[baseline=-.65ex]\protect\draw[thick, color=color_booldl, fill=color_booldl, mark=*, mark size=2pt, line width=1.25pt] plot[] (-.0, 0)--(-0,0);}) \tikzexternalenable \ours                & Random               & \textsc{cifar10}           & $1/1/16$                & 90.29              \\
		\hyperref[config:d]{\textsc{d}}                 & \tikzexternaldisable ({\protect\tikz[baseline=-.65ex]\protect\draw[thick, color=color_booldl, fill=color_booldl, mark=*, mark size=2pt, line width=1.25pt] plot[] (-.0, 0)--(-0,0);}) \tikzexternalenable \ours\footref{note}
		                                                & Random                                                                                                                                                                                                                                 & \textsc{cifar100}    & $1/1/16$                   & 68.43                                        \\ \hline
		\label{config:e}\hyperref[config:e]{\textsc{e}} & \textsc{fp baseline}\footref{note}
		                                                & \hyperref[config:a]{\textsc{a}}                                                                                                                                                                                                                                      & \textsc{cifar100}    & $32/32/32$                 & 76.74                                        \\ %
		\hyperref[config:f]{\textsc{f}}                 & \tikzexternaldisable ({\protect\tikz[baseline=-.65ex]\protect\draw[thick, color=color_booldl, fill=color_booldl, mark=*, mark size=2pt, line width=1.25pt] plot[] (-.0, 0)--(-0,0);}) \tikzexternalenable \ours \footref{note}
		                                                & \hyperref[config:c]{\textsc{c}}                                                                                                                                                                                                                                      & \textsc{cifar100}    & $1/1/16$                   & 68.37                                        \\ \hline
		\label{config:g}\hyperref[config:g]{\textsc{g}} & \textsc{fp baseline}                                                                                                                                                                                                                   & \hyperref[config:b]{\textsc{b}}                    & \textsc{cifar10}           & $32/32/32$              & 95.77              \\ %
		\hyperref[config:h]{\textsc{h}}                 & \tikzexternaldisable ({\protect\tikz[baseline=-.65ex]\protect\draw[thick, color=color_booldl, fill=color_booldl, mark=*, mark size=2pt, line width=1.25pt] plot[] (-.0, 0)--(-0,0);}) \tikzexternalenable \ours                & \hyperref[config:d]{\textsc{d}}                    & \textsc{cifar10}           & $1/1/16$                & 92.09              \\ \hline
	\end{tabular}}
\label{table:finetune}

 \end{minipage}
 \end{minipage}
\end{tabular}
\end{table}

\subsection{Image Super-resolution}

Next, we evaluate the efficacy of our \ours method to synthesize data.
We use a compact \textsc{edsr} network \cite{Lim2017} as our baseline, referred to as \textsc{small edsr}, comprising eight residual blocks.
Our \ours model employs Boolean residual blocks without \gls{BN}.
Results, presented in \cref{table:super}, based on the official implementation and benchmark\footnote{\url{https://github.com/sanghyun-son/EDSR-PyTorch}}, reveal remarkable similarity to the \gls{FP} reference at each scale.
Particularly noteworthy are the prominent results achieved on \textsc{set14} and \textsc{bsd100} datasets.
Our method consistently delivers high PSNR for high-resolution images, such as \textsc{div2k}, and even higher for low-resolution ones, like \textsc{set5}.
However, akin to \textsc{edsr}, our approach exhibits a moderate performance reduction at scale $4\times$.
These findings highlight the capability of our method to perform adequately on detail-demanding tasks while exhibiting considerable robustness across image resolutions.

\subsection{Adaptability on New Data}

\paragraph{Image classification fine-tuning.}

We aim to assess the adaptability of our method to similar problems but different datasets, a common scenario for edge inference tasks.
We employ the \textsc{vgg-small} architecture without \gls{BN} under two training configurations.
Firstly, the \ours model is trained from scratch with random initialization on \textsc{cifar10} (\label{config:c}\hyperref[config:c]{\textsc{ref. c}}) and \textsc{cifar100} (\label{config:d}\hyperref[config:d]{\textsc{ref. d}}).
Secondly, we fine-tune the trained networks on \textsc{cifar100} (\label{config:f}\hyperref[config:f]{\textsc{ref. f}}) and \textsc{cifar10} (\label{config:h}\hyperref[config:h]{\textsc{ref. h}}), respectively.
Notably, in \Cref{table:finetune}, fine-tuning our trained model on \textsc{cifar100} (\hyperref[config:f]{\textsc{ref. f}}) results in a model almost identical to the model trained entirely from scratch (\hyperref[config:d]{\textsc{ref. d}}).
Additionally, a noteworthy result is observed with our model (\hyperref[config:h]{\textsc{ref. h}}), which achieves higher accuracy than the model trained from scratch (\hyperref[config:c]{\textsc{ref. c}}).

\paragraph{Image segmentation fine-tuning.}

Next, we expand the scope of the aforementioned fine-tuning experiment to encompass a larger network and a different task.
The baseline is the \textsc{deeplabv3} network for semantic segmentation.
It consists of our Boolean \textsc{resnet18} (without \gls{BN}) as the backbone, followed by the Boolean \gls{ASPP} module \citep{chen2017rethinking} .
We refrain from utilizing auxiliary loss or knowledge distillation techniques, as these methods introduce additional computational burdens, which are contrary to our objective of efficient on-device training.
As demonstrated in \cref{tab:seg_cs_val}, our method achieves a notable $67.4\%$ mIoU on \textsc{cityscapes} (see \cref{fig:main_cityscapes} for prediction examples).
This result surpasses the \gls{SOTA}, \textsc{binary dad-net} \citep{frickenstein2020binary}, and approaches the performance of the \gls{FP} baseline.
Likewise, on \textsc{pascal voc 2012}, our methodology nears the performance of the \gls{FP} baseline.
Importantly, these improvements are attained without the intermediate use of \gls{FP} parameters during training, highlighting the efficiency and effectiveness of our approach.
This shows that our method not only preserves the inherent lightweight advantages of highly quantized neural networks but also significantly enhances performance in complex segmentation tasks.

\paragraph{BERT fine-tuning for NLU tasks.}

Finally, we consider fine-tuning \textsc{bert} \citep{DevlinCLT19}, a transformer-based model \citep{Vaswani2017}, on the \textsc{glue} benchmark \citep{Wang2019d}.
We follow the standard experimental protocol as in \citep{DevlinCLT19, Bai2021, Qin2022}.
Our model and the chosen baselines are employed with 1-bit bitwidth for both weights and activations.
Our Boolean \textsc{bert} model is inspired by \textsc{bit} \citep{Liu2022d} for binarizing activations and incorporating \gls{KD} during training, where the \gls{FP} teacher guides the student in a layer-wise manner. We follow the experimental setup of \textsc{bit}, including using the same method for binarizing activations and backpropagation for softmax and attention in the \textsc{bert} model.
As shown in \cref{table:nlp}, all methods suffer from performance drop compared to the \gls{FP} model as extreme binarization of transformer-based model is not trivial.
Nevertheless, our method yields results comparable to \textsc{bit} \citep{Liu2022d}, the \gls{SOTA} method on this task, outperforming \textsc{binarybert} \citep{Bai2021} and \textsc{bibert} \citep{Qin2022} on average.
This is remarkable as our method natively uses Boolean weights during the training, whereas the baselines heavily rely on \gls{FP} latent weights.
These findings indicate potential for energy-efficient \glspl{LLM} using our method for both training and inference.

\begin{minipage}[][][t]{0.48\textwidth}
	\begin{figure}[H]
		\centering
		\includegraphics[width=\textwidth]{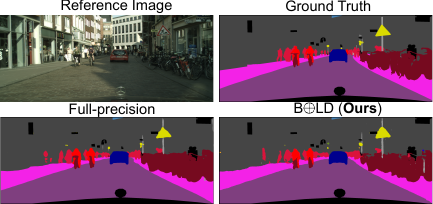}
		\caption{An example of \textsc{cityscapes}. \label{fig:main_cityscapes}}
	\end{figure}
\end{minipage}
\hfill
\begin{minipage}[][][t]{0.5\textwidth}
		\captionof{table}{\textsc{bert} models results. $^\dagger$Source code \citep{Liu2022d}.  \label{table:nlp}}
\resizebox{\columnwidth}{!}{
     \setlength\tabcolsep{1.3pt}
     \begin{tabular}{cccccccccc}  \toprule
          \multirow{2}{*}{\textbf{Method}}	&   \multicolumn{9}{c}{\textbf{GLUE Benchmark}  (Accuracy, $\uparrow$)} \\ \cmidrule{2-10}
                                                                                                                                                                                                                                            & \textsc{mnli} & \textsc{qqp} & \textsc{qnli} & \textsc{sst-2} & \textsc{cola} & \textsc{sst-b} & \textsc{mrpc} & \textsc{rte} & \textbf{Avg.} \\ \midrule\midrule
          \textsc{fp bert}	                                                                                                                                                                                                     & 84.9          & 91.4         & 92.1          & 93.2           & 59.7          & 90.1           & 86.3          & 72.2         & 83.9          \\
          \midrule
          \textsc{binarybert}                                                                                                                                                                                                      & 35.6          & 66.2         & 51.5          & 53.2           & 0.0           & 6.1            & 68.3          & 52.7         & 41.0          \\
          \textsc{bibert}                                                                                                                                                                                                           & 66.1          & 84.8         & 72.6          & 88.7           & 25.4          & 33.6           & 72.5          & 57.4         & 63.2          \\
          \textsc{bit} 	                                                                                                                                                                                                           & 77.1          & 82.9         & 85.7          & 87.7           & 25.1          & 71.1           & 79.7          & 58.8         & 71.0          \\
          \textsc{bit} (Reprod.$^\dagger$)  %
& 76.8          & 87.2         & 85.6          & 87.5           & 24.1          & 70.5           & 78.9          & 58.8         & 69.7          \\
          \tikzexternaldisable ({\protect\tikz[baseline=-.65ex]\protect\draw[thick, color=color_booldl, fill=color_booldl, mark=*, mark size=2pt, line width=1.25pt] plot[] (-.0, 0)--(-0,0);}) \tikzexternalenable \ours & 75.6          & 85.9         & 84.1          & 88.7           & 27.1          & 68.7           & 78.4          & 58.8         & 70.9          \\ \bottomrule
     \end{tabular}}

\end{minipage}

\section{Conclusions}\label{sec:Conclusions}

We introduced the notion of Boolean variation and developed a first framework of its calculus.
This novel mathematical principle enabled the development of Boolean logic backpropagation and Boolean optimization replacing gradient backpropagation and gradient descent for binary deep learning.
Deep models can be built with native Boolean weights and/or Boolean activations, and trained in Boolean natively by this principled exact Boolean optimization.
That brings a key advantage to the existing popular quantized/binarized training approach that suffers from critical bottlenecks -- \textit{(i)} performance loss due to an arbitrary approximation of the latent weight gradient through its discretization/binarization function, \textit{(ii)} training computational intensiveness due to full-precision latent weights. 
We have extensively explored its capabilities, highlighting:
\textit{(i)} both training and inference are now possible in binary;
\textit{(iv)} deep training complexity can be drastically reduced to unprecedented levels.
\textit{(iii)} Boolean models can handle finer tasks beyond classification, contrary to common belief;
\textit{(ii)} in some applications, suitablely enlarging Boolean model can recover \gls{FP} performance while still gaining significant complexity reduction.

\paragraph{Limitations.}
Due to current computing accelerators, such as GPUs, primarily designed for real arithmetic, our method could not be assessed on native Boolean accelerator.
Nevertheless, its considerable potential may inspire the development of new logic circuits and architectures utilizing Boolean logic processing. 
It also remains as an open question the approximation capacity of Boolean neural networks. A mathematical result equivent to the existing universal approximation theory of real-valued neural networks would provide a solid guarantee.

\clearpage
\newpage
\section*{Acknowledgments}

This work has been made possible through the invaluable support of various departments and colleagues within Huawei. Van Minh Nguyen would like to extend his sincere thanks to everyone involved, with special appreciation to Jean-Claude Belfiore and the former students who have participated in this project over the years: Valentin Abadie, Tom Huix, Tianyu Li, and Youssef Chaabouni.

{
\small
\bibliographystyle{abbrv}
\bibliography{IEEEabrv,bibliography.bib}
}

\clearpage
\newpage
\appendix

{\bf \Large{Appendix}}

\begin{spacing}{0.2}
    \addtocontents{toc}{\protect\setcounter{tocdepth}{2}} %
    \renewcommand{\contentsname}{\textbf{Table of Contents}\vskip3pt\hrule}
    \tableofcontents
    \vskip5pt\hrule
\end{spacing}

\section{Details of the Boolean Variation Method}
\label{appendix:booleanvariation}

\subsection{Boolean Variation Calculus}\label{appendix:Boolean}
This section provides supplementary details of the Boolean Variation method, a sole development can be found in \citep{Nguyen2024}. 
With reference to \cref{ex:XORVariation}, \cref{tab:VariationXOR} gives the variation truth table of $f(x) = \xor(a,x)$, for $a, x \in \Bb$.

{\renewcommand{\arraystretch}{1.0}
		\begin{table}[H]
			\centering
			\caption{Variation truth table of $f(x) = \xor(a,x)$, $a, x \in \Bb$.}
			\label{tab:VariationXOR}
			\vspace{1ex}
			\begin{tabular}{cccccccc}
				\toprule %
				\multirow{2}{*}{$a$} 
				& \multirow{2}{*}{$x$} 
				& \multirow{2}{*}{$\neg x$} 
				& \multirow{2}{*}{$\bvar(x \to \neg x)$} 
				& \multirow{2}{*}{$f(a,x)$}
				& \multirow{2}{*}{$f(a,\neg x)$}
				& \multirow{2}{*}{$\bvar f(x \to \neg x)$} 
				& \multirow{2}{*}{$f'(x)$}   \\
				& & & & & \\
				\midrule %
				$\True$  &  $\True$  &  $\False$  & $\False$  &  $\False$ 	& $\True$	& $\True$  & $\False$  \\
				
				$\True$  &  $\False$ &  $\True$   & $\True$   &  $\True$	& $\False$ 	& $\False$ & $\False$  \\
				
				$\False$ &  $\True$  &  $\False$  & $\False$  &  $\True$	& $\False$ 	& $\False$ & $\True$   \\
				
				$\False$ &  $\False$ &  $\True$   & $\True$   &  $\False$ 	& $\True$ 	& $\True$  & $\True$   \\
				\bottomrule
			\end{tabular}
		\end{table}
	}

We now proceed to the proof of \cref{thm:MainRules}. 

\begin{definition}[Type conversion]\label{def:Conversion}
	Define:
	\begin{align}
		\proj \colon & \Nb \to \Lb \nonumber\\
		& x \mapsto \proj(x) = \begin{cases}
			\True, & \textrm{if } x > 0,\\
			0, & \textrm{if } x = 0, \\
			\False, & \textrm{if } x < 0.
		\end{cases}\label{eq:Projector}
	\end{align}
	\begin{align}
		\emb \colon & \Lb \to \Nb \nonumber\\
		& a \mapsto \emb(a) = \begin{cases}
			+1, & \textrm{if } a = \True,\\
			0, & \textrm{if } a = 0, \\
			-1, & \textrm{if } a = \False.
		\end{cases}\label{eq:Embedding}
	\end{align}
\end{definition}
$\proj$ projects a numeric type in logic, and $\emb$ embeds a logic type in numeric. The following properties are straightforward:

\begin{proposition}\label{prop:Conversion}
	The following properties hold:
	\begin{enumerate}
		\item  $\forall x, y \in \Nb$: $\proj(xy) = \xnor(\proj(x), \proj(y))$.
		\item  $\forall a, b \in \Lb$: $\emb(\xnor(a,b)) = \emb(a)\emb(b)$.
		\item  $\forall x, y \in \Nb$: $x = y \Leftrightarrow |x| = |y| \textrm{ and } \proj(x) = \proj(y)$.
	\end{enumerate}
\end{proposition}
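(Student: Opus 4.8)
The plan is to establish all three parts by direct, finite case analysis on the signs of the numeric arguments (equivalently, on their logic values), using the three-valued extension of $\xnor$ from \cref{def:ThreeValueLogic}. For part (1), I would first dispose of the degenerate cases: if $x = 0$ or $y = 0$ then $xy = 0$, so $\proj(xy) = 0$, while $\proj(x) = 0$ or $\proj(y) = 0$, and by \cref{def:ThreeValueLogic} any $\xnor$ with a $0$ argument returns $0$; hence both sides equal $0$. If $x, y \neq 0$, then $xy \neq 0$, and $\proj(xy) = \True$ exactly when $x$ and $y$ share a sign, which is precisely the condition $\xnor(\proj(x),\proj(y)) = \True$; the two opposite-sign cases give $\False$ on both sides. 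Since $\proj$ sees only the sign, this exhausts the argument.

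For part (2), I would run the analogous $3\times 3$ check over $a, b \in \Lb$. Whenever $a = 0$ or $b = 0$ we have $\xnor(a,b) = 0$, so $\emb(\xnor(a,b)) = 0 = \emb(a)\emb(b)$, the latter because one factor is $0$. For $a, b \in \Bb$, the map $\emb$ sends $\True \mapsto +1$ and $\False \mapsto -1$, so $\emb(a)\emb(b) = +1$ iff $a = b$, which is exactly when $\xnor(a,b) = \True$, i.e.\ $\emb(\xnor(a,b)) = +1$; the mismatched cases yield $-1$ on both sides.

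For part (3), the forward implication is immediate from the definitions of $|\cdot|$ and $\proj$. For the converse I would case on the common value $\proj(x) = \proj(y)$: if it is $0$ then $x = y = 0$; if it is $\True$ then $x, y > 0$ with $|x| = |y|$, forcing $x = y$; the case $\False$ is symmetric. I do not expect any genuine obstacle here — the only point requiring care is staying faithful to the conventions of \cref{def:ThreeValueLogic} (in particular that $\xnor$ absorbs a $0$ argument), so that the zero cases in (1) and (2) are handled consistently; everything else is a routine finite verification, which is presumably why the paper labels these properties ``straightforward.''
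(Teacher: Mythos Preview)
Your proposal is correct and is precisely the kind of finite case analysis one would carry out here; the paper itself offers no proof for \cref{prop:Conversion}, merely remarking that these properties are ``straightforward,'' so your approach is entirely in line with the paper's treatment.
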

In particular, property \cref{prop:Conversion}(2) implies that by the embedding map $\emb(\cdot)$, we have:
\begin{align}
	(\{\True, \False\}, \xor) & \cong (\{\pm 1\}, -\times),\\
	(\{\True, \False\}, \xnor) & \cong (\{\pm 1\}, \times),
\end{align}
where $\cong$ and $\times$ stand for isomorphic relation, and the real multiplication, resp. A consequence is that by $\emb(\cdot)$, a computing sequence of pointwise XOR/XNOR, counting, and majority vote is equivalent to a sequence of pointwise multiplications and accumulation performed on the embedded data. This property will be used in \cref{sec:cvg,appendix:train_regu} for studying Boolean method using some results from \glspl{BNN} literature and real analysis.

\begin{proposition}\label{prop:XNORAlgebra}
	The following properties hold:
	\begin{enumerate}
		\item $a \in \Lb$, $x \in \Nb$: $\xnor(a,x) = \emb(a)x$.
		\item $x, y \in \Nb$: $\xnor(x,y) = xy$. %
		\item $x \in \{\Lb, \Nb\}$, $y, z \in \Nb$: $\xnor(x,y+z) = \xnor(x,y) + \xnor(x,z)$. %
		\item $x \in \{\Lb, \Nb\}$, $y, \lambda \in \Nb$: $\xnor(x, \lambda y) = \lambda \xnor(x,y)$.
		\item $x \in \{\Lb, \Nb\}$, $y \in \Nb$: $\xor(x,y) = -\xnor(x,y)$.
	\end{enumerate}
\end{proposition}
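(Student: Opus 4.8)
The statement to prove is \cref{prop:XNORAlgebra}, which collects five elementary algebraic identities for the extended $\xnor$ and $\xor$ operations on mixed logic/numeric operands.

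\textbf{Approach.} The plan is to reduce everything to the definitions in \cref{def:MixedLogic}, \cref{def:Conversion}, and \cref{prop:Conversion}, handling the five items in an order that lets later items reuse earlier ones. The governing principle is that for mixed-type operands $a,b$, the product $c=\Lm(a,b)$ is pinned down by its magnitude $|c|=|a||b|$ and its logic value $c_{\logic}=\Lm(a_{\logic},b_{\logic})$, so each identity is verified by checking that both sides agree in magnitude and in logic value, using \cref{prop:Conversion}(3) as the ``equality criterion'' on $\Nb$.

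\textbf{Key steps, in order.} First I would prove item 1 ($\xnor(a,x)=\emb(a)x$ for $a\in\Lb$, $x\in\Nb$): compare magnitudes, $|\xnor(a,x)|=|a|\,|x| = |\emb(a)|\,|x| = |\emb(a)x|$, and logic values, $(\xnor(a,x))_{\logic}=\xnor(a_{\logic},x_{\logic})=\xnor(a,\proj(x))$, while $(\emb(a)x)_{\logic}=\proj(\emb(a)x)=\xnor(\proj(\emb(a)),\proj(x))=\xnor(a,\proj(x))$ by \cref{prop:Conversion}(1) and the fact that $\proj\circ\emb=\mathrm{id}$ on $\Lb$; then invoke \cref{prop:Conversion}(3). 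Second, item 2 ($\xnor(x,y)=xy$ for $x,y\in\Nb$) follows the same two-checks template: magnitudes multiply by definition, and $(\xnor(x,y))_{\logic}=\xnor(\proj(x),\proj(y))=\proj(xy)$ by \cref{prop:Conversion}(1), so again \cref{prop:Conversion}(3) closes it. Third, for items 3 and 4 (distributivity over $+$ and homogeneity in a scalar $\lambda\in\Nb$), I would split on whether the first argument $x$ lies in $\Nb$ or in $\Lb$: if $x\in\Nb$, item 2 turns $\xnor$ into ordinary multiplication and the identities are just real-number distributivity/associativity; if $x\in\Lb$, item 1 turns $\xnor(x,\cdot)$ into multiplication by $\emb(x)\in\{-1,0,+1\}$, and again real arithmetic finishes. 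Fourth, item 5 ($\xor(x,y)=-\xnor(x,y)$ for $x\in\{\Lb,\Nb\}$, $y\in\Nb$): since $\xor=\neg\,\xnor$ as connectives and negation flips logic value while preserving magnitude, I compare against $-\xnor(x,y)$, whose magnitude is $|\xnor(x,y)|$ and whose logic value is $\proj(-\xnor(x,y))=\neg\,\proj(\xnor(x,y))=\neg(\xnor(x,y))_{\logic}$; matching magnitude and logic value and applying \cref{prop:Conversion}(3) (after, if needed, splitting the $x\in\Lb$ case through item 1) gives the claim.

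\textbf{Main obstacle.} None of the steps is deep; the only thing requiring care is the bookkeeping around the value $0$ in the three-valued setting — e.g. when $x=0$ or $y=0$ one must make sure magnitude-zero forces both sides to $0$ and that $\proj$, $\emb$, and $\neg$ behave consistently on $0$ — and the uniform handling of the two cases $x\in\Nb$ versus $x\in\Lb$ in items 3--5. So the ``hard part,'' such as it is, is organizing the case analysis cleanly rather than any real mathematical difficulty; \cref{prop:Conversion} does essentially all the work once items 1 and 2 are in place.
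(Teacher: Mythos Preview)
Your proposal is correct and follows essentially the same approach as the paper: verify each identity by checking that both sides agree in magnitude and in logic value via \cref{def:MixedLogic} and \cref{prop:Conversion}, establish items 1 and 2 first, then reduce items 3 and 4 to them by splitting on $x\in\Lb$ versus $x\in\Nb$, and handle item 5 through $\xor=\neg\,\xnor$. The only cosmetic difference is that the paper argues items 1 and 2 by explicit case analysis (on $a\in\{\True,\False,0\}$ and on $\sign(x),\sign(y)$ respectively) rather than invoking \cref{prop:Conversion}(1),(3) as a uniform equality criterion, but the content is the same.
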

\begin{proof} 
	The proof follows definitions \ref{def:MixedLogic} and \ref{def:Conversion}.
	\begin{itemize}
		\item Following \cref{def:ThreeValueLogic} we have $\forall t \in \Mb$, $\xnor(\True, t) = t$, $\xnor(\False, t) = \neg t$, and $\xnor(0, t) = 0$. Put $v = \xnor(a, x)$. We have $|v| = |x|$ and $\proj(v) = \xnor(a, \proj(x))$. Hence, $a = 0 \Rightarrow \proj(v) = 0 \Rightarrow v = 0$; $a = \True \Rightarrow \proj(v) = \proj(x) \Rightarrow v = x$; $a = \False \Rightarrow \proj(v) = \neg \proj(x) \Rightarrow v = -x$. Hence (1).
		\item The result is trivial if $x=0$ or $y=0$. For $x, y \neq 0$, put $v = \xnor(x,y)$, we have $|v| = |x||y|$ and $\proj(v) = \xnor(\proj(x), \proj(y))$. According to \cref{def:Conversion}, if $\sign(x)=\sign(y)$, we have $\proj(v) = \True \Rightarrow v = |x||y| = xy$. Otherwise, i.e., $\sign(x)=-\sign(y)$, $\proj(v) = \False \Rightarrow v = -|x||y| = xy$. Hence (2).
		\item (3) and (4) follow (1) for $x \in \Lb$ and follow (2) for $x \in \Nb$.
		\item For (5), write $u = \xor(x,y)$ and $v = \xnor(x,y)$, we have $|u| = |v|$ and $\proj(u) = \xor(\proj(x), \proj(y)) = \neg \xnor(\proj(x), \proj(y)) = \neg \proj(v)$. Thus, $\sign(u) = -\sign(v) \Rightarrow u = -v$. \qedhere
	\end{itemize}
\end{proof}

\begin{proposition}\label{prop:B2BVariation}
	For $f, g \in \Fc(\Bb, \Bb)$, $\forall x, y \in \Bb$ the following properties hold:
	\begin{enumerate}
		\item $\bvar f(x \to y) = \xnor(\bvar(x \to y), f'(x)).$
		\item $(\neg f)'(x) = \neg f'(x)$.
		\item $(g \circ f)'(x) = \xnor(g'(f(x)), f'(x))$.
	\end{enumerate}
\end{proposition}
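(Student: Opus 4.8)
The plan is to prove the three items in order, as (3) will follow quickly from (1), and (1) carries the only genuine case analysis. Throughout I would use the following facts about $\xnor$, all immediate from \cref{prop:Conversion} and \cref{prop:XNORAlgebra}: under the embedding $\emb$ of \cref{def:Conversion}, $\xnor$ on $\Mb$ is just multiplication on $\{-1,0,1\}$, so it is associative and commutative, satisfies $\xnor(a,\neg b)=\neg\xnor(a,b)$, and obeys the involution $\xnor(a,\xnor(a,b))=b$ for every $a\in\Bb$ (since there $a^2=1$).

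For item (1) I would fix $x$ and note that $y\in\Bb$ is either $x$ or $\neg x$, splitting into these two cases. If $y=x$, then $\bvar(x\to y)=0$ by \cref{def:BoolVariation}, hence $\xnor(\bvar(x\to y),f'(x))=0$ by the three-valued rule of \cref{def:ThreeValueLogic}; on the other side $\bvar f(x\to x)=\bvar(f(x)\to f(x))=0$, so both sides vanish. If $y=\neg x$, then $\bvar(x\to y)=\bvar(x\to\neg x)\in\Bb$ (never $0$, since $x\neq\neg x$), and substituting $f'(x)=\xnor(\bvar(x\to\neg x),\bvar f(x\to\neg x))$ from \cref{def:BoolFuncVar} turns the claim into the involution identity with $a=\bvar(x\to\neg x)$ and $b=\bvar f(x\to\neg x)$, which holds by the above.

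For item (2) I would expand $(\neg f)'(x)=\xnor(\bvar(x\to\neg x),\bvar{(\neg f)}(x\to\neg x))$ and use that negation reverses the order of \cref{def:BoolOrder}, so $\bvar(\neg a\to\neg b)=\neg\bvar(a\to b)$ for $a,b\in\Bb$; with $a=f(x)$, $b=f(\neg x)$ this gives $\bvar{(\neg f)}(x\to\neg x)=\neg\bvar f(x\to\neg x)$, and then $\xnor(c,\neg d)=\neg\xnor(c,d)$ finishes it. For item (3) I would apply item (1) to $g$ along the transition $u:=f(x)\to v:=f(\neg x)$, obtaining $\bvar g(u\to v)=\xnor(\bvar(u\to v),g'(u))$; since $\bvar(u\to v)=\bvar f(x\to\neg x)$ and $\bvar{(g\circ f)}(x\to\neg x)=\bvar g(u\to v)$ by definition, substituting into $(g\circ f)'(x)=\xnor(\bvar(x\to\neg x),\bvar{(g\circ f)}(x\to\neg x))$ and then reassociating and commuting $\xnor$ to collect the two variation factors into $\xnor(\bvar(x\to\neg x),\bvar f(x\to\neg x))=f'(x)$ leaves $\xnor(g'(f(x)),f'(x))$, as claimed.

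I do not expect a real obstacle here; the only care needed is bookkeeping in the three-valued logic — isolating the $0$-case, which arises only for $y=x$ in item (1) and never in items (2)--(3) because $x\neq\neg x$ forces $\bvar(x\to\neg x)\in\Bb$, and confirming that $\xnor$ genuinely inherits associativity, commutativity and the sign rule from real multiplication via the embedding $\emb$. A fully mechanical alternative would be to verify all three claims by enumerating the four maps $\Bb\to\Bb$ in a truth table, but the structural argument above is shorter and makes transparent why composition behaves like a chain rule.
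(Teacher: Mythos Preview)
Your proposal is correct and follows essentially the same route as the paper: the same two-case split for (1), the same ``negation reverses variation, then pull the negation through $\xnor$'' argument for (2), and the same ``apply (1) to $g$ along $f(x)\to f(\neg x)$ and reassociate'' argument for (3). Your write-up is slightly more explicit about why the $\xnor$ algebra (associativity, involution, sign rule) is available --- via the embedding $\emb$ --- whereas the paper simply invokes associativity and a truth-table check, but the substance is identical.
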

\begin{proof} The proof is by definition:
	\begin{enumerate}
		\item $\forall x, y \in \Bb$, there are two cases. If $y = x$, then the result is trivial. Otherwise, i.e., $y = \neg x$, by definition we have:
		\begin{align*}
			f'(x) & = \xnor(\bvar(x \to \neg x), \bvar f(x \to \neg x)) \\
			\Leftrightarrow \quad \bvar f(x \to \neg x) & = \xnor(\bvar(x \to \neg x), f'(x)).
		\end{align*}
		Hence the result.
		\item $\forall x, y \in \Bb$, it is easy to verify by truth table that $\bvar(\neg f(x \to y)) = \neg \bvar{f(x \to y)}$. Hence, by definition,
		\begin{align*}
			(\neg f)'(x) & = \xnor(\bvar(x \to \neg x), \bvar(\neg f(x \to \neg x))) \\
			& = \xnor(\bvar(x \to \neg x), \neg \bvar f(x \to \neg x)) \\
			& = \neg \xnor(\bvar(x \to \neg x), \bvar f(x \to \neg x)) \\
			& = \neg f'(x).
		\end{align*}
		\item Using definition, property (i), and associativity of $\xnor$, $\forall x \in \Bb$ we have:
		\begin{align*}
			(g\circ f)'(x) & = \xnor(\bvar(x \to \neg x), \bvar g(f(x) \to f(\neg x))) \\
			& = \xnor\pren{\bvar(x \to \neg x), \xnor\pren{\bvar f(x \to \neg x), g'\pren{f(x)} }} \\
			& = \xnor\pren{g'(f(x)), \xnor\pren{\bvar(x \to \neg x), \bvar f(x \to \neg x) } } \\
			& = \xnor(g'(f(x)), f'(x) ).
		\end{align*} \qedhere
	\end{enumerate}
\end{proof}

\begin{proposition}\label{prop:B2NVariation}
	For $f \in \Fc(\Bb, \Nb)$, the following properties hold:
	\begin{enumerate}
		\item $x, y \in \Bb$: $\bvar f(x \to y) = \xnor(\bvar(x \to y), f'(x))$.
		\item $x \in \Bb$, $\alpha \in \Nb$: $(\alpha f)'(x) = \alpha f'(x)$.
		\item $x \in \Bb$, $g \in \Fc(\Bb, \Nb)$: $(f + g)'(x) = f'(x) + g'(x)$.
	\end{enumerate}
\end{proposition}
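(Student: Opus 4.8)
The plan is to reduce all three parts to the mixed-type XNOR algebra of \cref{prop:XNORAlgebra}, after unfolding \cref{def:BoolFuncVar} in the case $\Db = \Nb$. For a numeric codomain the variation $\bvar f(x \to \neg x)$ is simply the difference $f(\neg x) - f(x) \in \Nb$ (consistent with the classical $f'(x) = f(x+1)-f(x)$ quoted above), so $f'(x) = \xnor(\bvar(x \to \neg x), f(\neg x) - f(x))$ is a mixed Boolean--numeric XNOR, which by \cref{prop:XNORAlgebra}(1) equals $\emb(\bvar(x \to \neg x))\cdot(f(\neg x)-f(x))$. The one structural fact I would lean on repeatedly is that for $x \in \Bb$ one always has $\neg x \neq x$, hence $\bvar(x \to \neg x)$ lies in $\Bb$ (it is never $0$), so $\emb(\bvar(x \to \neg x)) \in \{\pm 1\}$ and XNOR-ing by it is an involution: $\xnor(\bvar(x \to \neg x), \xnor(\bvar(x \to \neg x), t)) = t$ for every $t \in \Nb$.

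For part (1) I would split on whether $y = x$ or $y = \neg x$. If $y = x$, both sides vanish: $\bvar f(x \to x) = f(x) - f(x) = 0$, while $\xnor(\bvar(x \to x), f'(x)) = \xnor(0, f'(x)) = 0$ by the $|0|$-clause of \cref{def:MixedLogic}. If $y = \neg x$, I would apply $\xnor(\bvar(x \to \neg x), \cdot)$ to both sides of the defining identity $f'(x) = \xnor(\bvar(x \to \neg x), \bvar f(x \to \neg x))$ and cancel using the involution property, obtaining $\xnor(\bvar(x \to \neg x), f'(x)) = \bvar f(x \to \neg x) = \bvar f(x \to y)$.

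For parts (2) and (3) I would just evaluate the definition at $\neg x$ and push the scalar, resp. the sum, through the XNOR. For (2): $(\alpha f)'(x) = \xnor(\bvar(x \to \neg x), \alpha(f(\neg x) - f(x)))$, which by \cref{prop:XNORAlgebra}(4) equals $\alpha\,\xnor(\bvar(x \to \neg x), f(\neg x) - f(x)) = \alpha f'(x)$. For (3): $(f+g)'(x) = \xnor(\bvar(x \to \neg x),\, (f(\neg x) - f(x)) + (g(\neg x) - g(x)))$, which by \cref{prop:XNORAlgebra}(3) splits as $\xnor(\bvar(x \to \neg x), f(\neg x) - f(x)) + \xnor(\bvar(x \to \neg x), g(\neg x) - g(x)) = f'(x) + g'(x)$.

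I do not expect a genuine obstacle here; this is essentially bookkeeping, and parts (2)--(3) are the numeric-codomain analogues of \cref{prop:B2BVariation}(2) proved by the same pattern. The only spots requiring a little care are the degenerate case $y = x$ in part (1), where $\bvar(x \to \neg x) \in \Bb$ is irrelevant and one must invoke \cref{def:MixedLogic} directly, and making sure throughout that the ``variation defined in $\Nb$'' is read as the numeric difference, so that \cref{prop:XNORAlgebra} applies verbatim.
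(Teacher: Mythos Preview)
Your proposal is correct and follows essentially the same route as the paper: the case split in part~(1) and the use of \cref{prop:XNORAlgebra}(3)--(4) in parts~(2)--(3) are identical. The only cosmetic difference is in the non-trivial branch of part~(1): the paper inverts the defining equation by separating magnitude and logic via $|\cdot|$ and $\proj(\cdot)$, whereas you invoke the involution $\xnor(a,\xnor(a,t))=t$ for $a\in\Bb$ directly through \cref{prop:XNORAlgebra}(1), which is arguably tidier but equivalent.
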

\begin{proof}
	The proof is as follows:
	\begin{enumerate}
	\item For $x, y \in \Bb$. Firstly, the result is trivial if $y = x$. For $y \neq x$, i.e., $y = \neg x$, by definition:
		\begin{equation*}
			f'(x) = \xnor(\bvar(x \to \neg x), \bvar f(x \to \neg x)).
		\end{equation*}
		Hence, $|\bvar f(x \to \neg x)| = |f'(x)|$ since $|\bvar(x \to \neg x)| = 1$, and
		\begin{align*}
			\proj(f'(x)) & = \xnor(\bvar(x \to \neg x), \proj(\bvar f(x \to \neg x)))\\
			\Leftrightarrow \quad \proj(\bvar f(x \to \neg x)) & = \xnor(\bvar(x \to \neg x), \proj(f'(x))),
		\end{align*}
		where $\proj(\cdot)$ is the logic projector \cref{eq:Projector}. Thus, $\bvar f(x \to \neg x) = \xnor(\bvar(x \to \neg x), f'(x))$. Hence the result.
	\item Firstly $\forall x, y \in \Bb$, we have
		\begin{equation*}
			\bvar(\alpha f(x \to y)) = \alpha f(y) - \alpha f(x) = \alpha \bvar{f(x \to y)}.
		\end{equation*}
		Hence, by definition,
		\begin{align*}
			(\alpha f)'(x) & = \xnor(\bvar(x \to \neg x), \bvar(\alpha f(x \to \neg x))) \\
			& = \xnor(\bvar(x \to \neg x), \alpha\bvar{f(x \to \neg x)}) \\
			& = \alpha \, \xnor(\bvar(x \to \neg x), \bvar{f(x \to \neg x)}), \textrm{ due to \cref{prop:XNORAlgebra}(4)}\\
			& = \alpha f'(x).
		\end{align*}
	\item For $f, g \in \Fc(\Bb, \Nb)$,
		\begin{align*}
			(f+g)'(x) & = \xnor(\bvar(x \to \neg x), \bvar(f+g)(x \to \neg x))\\
			& = \xnor(\bvar(x \to \neg x), \bvar f(x \to \neg x) + \bvar g(x \to \neg x)) \\
			& \overset{(*)}{=} \xnor(\bvar(x \to \neg x), \bvar f(x \to \neg x)) + \xnor(\bvar(x \to \neg x), \bvar g(x \to \neg x)),\\
			& = f'(x) + g'(x),
		\end{align*}
		where $(*)$ is due to \cref{prop:XNORAlgebra}(3). \qedhere
	\end{enumerate}
\end{proof}

\begin{proposition}[Composition rules]\label{prop:Composition}
	The following properties hold:
	\begin{enumerate}
		\item For $\Bb \overset{f}{\to} \Bb \overset{g}{\to} \Db$: $(g \circ f)'(x) = \xnor(g'(f(x)), f'(x))$, $\forall x \in \Bb$.
		\item For $\Bb \overset{f}{\to} \Zb \overset{g}{\to} \Db$, $x \in \Bb$, if $|f'(x)| \leq 1$ and $g'(f(x)) =g'(f(x)-1)$, then:
		\begin{equation*}
			(g \circ f)'(x) = \xnor(g'(f(x)), f'(x)).
		\end{equation*}
	\end{enumerate}
\end{proposition}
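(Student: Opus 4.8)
The plan is to obtain part (1) by the exact chain-rule computation already carried out for \cref{prop:B2BVariation}(3), simply swapping its one logic-valued ingredient for the numeric one when the target $\Db$ is a number set, and to obtain part (2) by a three-way case split on the integer increment $f(\neg x)-f(x)$, the only delicate branch being the one that consumes the hypothesis $g'(f(x))=g'(f(x)-1)$.

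For part (1), fix $x\in\Bb$. By \cref{def:BoolFuncVar}, $(g\circ f)'(x)=\xnor\!\big(\bvar(x\to\neg x),\,\bvar g(f(x)\to f(\neg x))\big)$, and $\bvar(f(x)\to f(\neg x))=\bvar f(x\to\neg x)$ by the notation of \cref{def:BoolFuncVar}. The one extra fact needed is the pullback identity $\bvar g(a\to b)=\xnor\!\big(\bvar(a\to b),\,g'(a)\big)$ for all $a,b\in\Bb$, which holds by \cref{prop:B2BVariation}(1) when $\Db$ is a logic set and by \cref{prop:B2NVariation}(1) when $\Db$ is numeric (the integer target $\Db=\Zb$ being a special case of the latter). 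Substituting $a=f(x)$, $b=f(\neg x)$ and reordering the nested $\xnor$ by its commutativity and associativity (\cref{prop:XNORAlgebra}) gives
\[
(g\circ f)'(x)=\xnor\!\Big(g'(f(x)),\,\xnor\big(\bvar(x\to\neg x),\,\bvar f(x\to\neg x)\big)\Big)=\xnor\big(g'(f(x)),\,f'(x)\big).
\]

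For part (2), write $\delta:=f(\neg x)-f(x)\in\Zb$. Since $f'(x)=\xnor(\bvar(x\to\neg x),\delta)=\emb(\bvar(x\to\neg x))\,\delta$ by \cref{prop:XNORAlgebra}(1), the assumption $|f'(x)|\le1$ forces $\delta\in\{-1,0,1\}$, and throughout I identify $f'(x)\in\{-1,0,1\}$ with its logic value $\proj(f'(x))\in\Mb$, which is legitimate exactly because $|f'(x)|\le1$. If $\delta=0$, then $g\circ f$ is constant on $\{x,\neg x\}$, so $(g\circ f)'(x)=0$, while $f'(x)=0$ makes $\xnor(g'(f(x)),f'(x))=0$ too. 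If $\delta=1$, then $\bvar(g\circ f)(x\to\neg x)=\bvar g(f(x)\to f(x)+1)=g'(f(x))$ by the definition of the variation on $\Zb$, and $f'(x)\equiv\bvar(x\to\neg x)$, so commutativity of $\xnor$ closes this case. If $\delta=-1$, then $\bvar(g\circ f)(x\to\neg x)=\bvar g(f(x)\to f(x)-1)$ is the reversal (Boolean negation, or numeric sign change) of $\bvar g(f(x)-1\to f(x))=g'(f(x)-1)$, which by hypothesis equals the reversal of $g'(f(x))$; meanwhile $f'(x)\equiv\neg\bvar(x\to\neg x)$. Using that $\xnor$ anticommutes with reversal in each argument — for a logic target $\xnor(a,\neg b)=\xor(a,b)=\neg\xnor(a,b)$, for a numeric target $\xnor(a,-b)=\xor(a,b)=-\xnor(a,b)$, that is, \cref{prop:XNORAlgebra}(5) — both $(g\circ f)'(x)$ and $\xnor(g'(f(x)),f'(x))$ reduce to the reversal of $\xnor(\bvar(x\to\neg x),g'(f(x)))$, which proves the identity.

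I expect the $\delta=-1$ branch of part (2) to be the main obstacle: it is the only place the hypothesis $g'(f(x))=g'(f(x)-1)$ is used, and it requires keeping the ``reversal'' operation consistent across the two possible codomains — Boolean negation versus numeric sign change — while also exploiting $|f'(x)|\le1$ to pass freely between $f'(x)\in\{-1,0,1\}$ and its logic value under $\emb(\cdot)$. Part (1) and the $\delta\in\{0,1\}$ cases are mechanical applications of the pullback identity, the $\Zb$-variation definition, and the $\xnor$ algebra of \cref{prop:XNORAlgebra}.
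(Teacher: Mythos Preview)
Your proof is correct and follows essentially the same approach as the paper: part (1) is obtained by combining the pullback identities of \cref{prop:B2BVariation}(1) and \cref{prop:B2NVariation}(1) with the associativity of $\xnor$, and part (2) is handled by a case split on the increment $\delta=f(\neg x)-f(x)\in\{-1,0,1\}$ (the paper phrases this as a split on $\xnor(\bvar(x\to\neg x),f'(x))$, which is the same quantity), with the $\delta=-1$ branch invoking the hypothesis $g'(f(x))=g'(f(x)-1)$. Your uniform treatment of the ``reversal'' across logic and numeric codomains is slightly more careful than the paper's, which tacitly writes $-g'(f(x)-1)$ in both cases.
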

\begin{proof}
	The proof is as follows.
	\begin{enumerate}
		\item The case of $\Bb \overset{f}{\to} \Bb \overset{g}{\to} \Bb$ is obtained from \cref{prop:B2BVariation}(3). For $\Bb \overset{f}{\to} \Bb \overset{g}{\to} \Nb$, by using \cref{prop:B2NVariation}(1), the proof is similar to that of \cref{prop:B2BVariation}(3).
		\item By definition, we have
		\begin{equation}\label{eq:proofComposition2_1}
			(g \circ f)'(x) = \xnor(\bvar(x \to \neg x), \bvar g(f(x) \to f(\neg x))). 
		\end{equation} 
		Using property (1) of \cref{prop:B2NVariation}, we have:
		\begin{align}
			f(\neg x) & = f(x) + \bvar f(x \to \neg x) \nonumber\\
			& = f(x) + \xnor(\bvar(x \to \neg x), f'(x)). \label{eq:proofComposition2_2}
		\end{align}
		Applying \cref{eq:proofComposition2_2} back to \cref{eq:proofComposition2_1}, the result is trivial if $f'(x) = 0$. The remaining case is $|f'(x)| = 1$ for which we have $\xnor(\bvar(x \to \neg x), f'(x)) = \pm 1$. First, for $\xnor(\bvar(x \to \neg x), f'(x)) = 1$, we have:
		\begin{align}
			\bvar g(f(x) \to f(\neg x)) &= \bvar g(f(x) \to f(x) + 1) \nonumber\\
			& = g'(f(x)) \nonumber\\
			& = \xnor(g'(f(x)), 1) \nonumber\\
			& = \xnor(g'(f(x)), \xnor(\bvar(x \to \neg x), f'(x)) ) \label{eq:proofComposition2_3}.
		\end{align}
		Substitute \cref{eq:proofComposition2_3} back to \cref{eq:proofComposition2_1}, we obtain:
		\begin{align*}
			(g \circ f)'(x) & = \xnor(\bvar(x \to \neg x), \bvar g(f(x) \to f(\neg x))) \\ 
			& = \xnor(\bvar(x \to \neg x), \xnor(g'(f(x)), \xnor(\bvar(x \to \neg x), f'(x)) ) ) \\
			& = \xnor(g'(f(x)), f'(x)),
		\end{align*} 
		where that last equality is by the associativity of $\xnor$ and that $\xnor(x, x) = \True$ for $x \in \Bb$. 
		Similarly, for $\xnor(\bvar(x \to \neg x), f'(x)) = -1$, we have:
		\begin{align}
			\bvar g(f(x) \to f(\neg x)) &= \bvar g(f(x) \to f(x) - 1) \nonumber\\
			& = - g'(f(x)-1) \nonumber\\
			& = \xnor(g'(f(x)-1), -1) \nonumber\\
			& = \xnor(g'(f(x)-1), \xnor(\bvar(x \to \neg x), f'(x)) ) \label{eq:proofComposition2_4}.
		\end{align}
		Substitute \cref{eq:proofComposition2_4} back to \cref{eq:proofComposition2_1} and use the assumption that $g'(f(x)) = g'(f(x)-1)$, we have:
		\begin{align*}
			(g \circ f)'(x) & = \xnor(\bvar(x \to \neg x), \bvar g(f(x) \to f(\neg x))) \\ 
			& = \xnor(\bvar(x \to \neg x), \xnor(g'(f(x)-1), \xnor(\bvar(x \to \neg x), f'(x)) ) ) \\
			& = \xnor(g'(f(x)), f'(x)).
		\end{align*} 
		Hence the result. \qedhere
	\end{enumerate}
\end{proof}

The results of \cref{thm:MainRules} are from \cref{prop:B2BVariation,prop:B2NVariation,prop:Composition}.

\subsection{Convergence Proof}\label{sec:cvg}
To the best of our knowledge, in the quantized neural network literature and in particular \gls{BNN}, one can only prove the convergence up to an irreducible error floor \cite{Li2017}. This idea has been extended to SVRG \cite{de2018high}, and recently to SGLD in \cite{Zhang2022b}, which is also up to an error limit.

In this section we provide complexity bounds for Boolean Logic in a smooth non-convex environment. We introduce an abstraction to model its optimization process and prove its convergence.

\subsubsection{Continuous Abstraction}\label{sec:analysis-prel}
Boolean optimizer is discrete, proving its convergence directly is a hard problem. The idea is to find a continuous equivalence so that some proof techniques existing from the \gls{BNN} and quantized neural networks literature can be employed.
We also abstract the logic optimization rules as compressor $Q_0(), Q_1()$, and define gradient accumulator $a_t$ as $a_{t+1} = a_t + \varphi(q_t)$. When $\eta$ is constant, we recover the definition \eref{eq:Accum} and obtain $m_t= \eta a_t$. Our analysis is based on the following standard non-convex assumptions on $f$:
\begin{assumption} %
	\label{assum:uniflowerbound}
	Uniform Lower Bound: There exists $f_* \in \mathbb{R}$ s.t. $f(w) \geq f_*$, $\forall w \in \mathbb{R}^d$.
\end{assumption}
\begin{assumption} %
	\label{assum:smoothgradients}
	Smooth Derivatives: The gradient $\nabla f(w)$ is $L$-Lipschitz continuous for some $L>0$, i.e., $\forall w, \forall v \in \mathbb{R}^d$:
	$
		\left\|\nabla f(w)-\nabla f(v)\right\| \leq L\|w-v\| .
	$
\end{assumption}
\begin{assumption} %
	\label{assum:boundedvariance}
	Bounded Variance: The variance of the stochastic gradients is bounded by some $\sigma^2>0$, i.e., $\forall w \in \mathbb{R}^d$:
	$
		\E{\Tilde{\nabla} f(w)} = \nabla f(w)
	$ and $
		\E{\|\Tilde{\nabla} f(w)\|^2} \leq \sigma^2 .
	$
\end{assumption}
\begin{assumption}
	\label{assum:alwaysflip}
	Compressor: There exists $\gamma<1$ s.t. $\forall w, \forall v \in \mathbb{R}^d$, $\Vert Q_1(v,w)-v \Vert^2 \leq \gamma \Vert v \Vert^2$.
\end{assumption}
\begin{assumption}
	\label{assum:boundedaccumulator}
	Bounded Accumulator: There exists $\kappa \in \mathbb{R}^*_+$ s.t. $\forall t$ and $\forall i \in [d]$, we have $|a_t|_i \le \kappa$.
\end{assumption}
\begin{assumption}
	\label{assum:stoflip}
	Stochastic Flipping Rule: For all $w \in \mathbb{R}^d$, we have $\E{Q_0(w) | w}=w$.
\end{assumption}

In existing frameworks, quantity $\Tilde{\nabla} f(\cdot)$ denotes the stochastic gradient computed on a random mini-batch of data. Boolean framework does not have the notion of gradient, it however has an optimization signal (i.e., $\varphi$(q) or its accumulator $m_t$ \eref{eq:Accum}) that plays the same role as $\Tilde{\nabla} f(\cdot)$. Therefore, these two notions, i.e., continuous gradient and Boolean optimization signal, can be encompassed into a generalized notion. That is the root to the following continuous relaxation in which $\Tilde{\nabla} f(\cdot)$ standards for the optimization signal computed on a random mini-batch of data. 

Within this continuous relaxation framework, \cref{assum:stoflip} expresses our assumption that the flipping rule is stochastic and unbiased. Note that this assumption is standard in the literature related to (stochastic) quantization, see e.g., \citep{alistarh2017qsgd,Polyak1987,Wangni2018}.

For reference, the original Boolean optimizer as formulated in \cref{sec:Method} is summarized in \Cref{alg:BoolOptim} in which $\texttt{flip}(w_t, m_{t+1})$ flips weight and $\texttt{reset}(w_t, m_{t+1})$ resets its accumulator when the flip condition is triggered.
\begin{algorithm}[h]
	\caption{Boolean optimizer}\label{alg:BoolOptim}
	$m_{t+1} \gets \beta_{t} m_{t} + \eta \varphi(q_t)$ \;
	$w_{t+1} \gets \texttt{flip}(w_t, m_{t+1})$\;
	$m_{t+1} \gets \texttt{reset}(w_t, m_{t+1})$\;
\end{algorithm}

\begin{algorithm}[h]
	\caption{Equivalent formulation of Boolean optimizer}\label{alg:EquivOptim}
	\KwData{$Q_0, Q_1$ quantizer}
	$m_t \gets \eta \Tilde{\nabla} f(w_t) + e_t $\;
	$\Delta_t \gets Q_1(m_t, w_t)$\;
	$w_{t+1} \gets Q_0(w_t - \Delta_t)$\;
	$e_{t+1} \gets m_t - \Delta_t$\;
\end{algorithm}

\Cref{alg:EquivOptim} describes an equivalent formulation of Boolean optimizer. Therein, $Q_0$, $Q_1$ are quantizers which are specified in the following. Note that EF-SIGNSGD (SIGNSGD with Error-Feedback) algorithm from \cite{errorfeedback} is a particular case of this formulation with $Q_0()=\operatorname{Identity}()$ and $Q_1()=\sign()$.
For Boolean Logic abstraction, they are given by:

\begin{equation}
\label{eq:quantizers}
	\begin{cases}
		Q_1(m_t, w_t) = w_t (\textrm{ReLu}(w_t m_t-1)+\frac{1}{2} \sign(w_t m_t-1) +\frac{1}{2}), \\
		Q_0(w_t) = \sign(w_t).
	\end{cases}
\end{equation}

The combination of $Q_1$ and $Q_0$ is crucial to take into account the reset property of the accumulator $m_t$. Indeed in practice, $\Delta_t:=Q_1({m_t}, w_t)$ is always equal to $0$ except when $| m_t | > 1$ and $\sign(m_t)=\sign(w_t)$ (i.e., when the flipping rule is applied). As $w_t$ has only values in $\{\pm 1\}$, $Q_0$ acts as identity function, except when $\Delta_t$ is non-zero (i.e., when the flipping rule is applied). With the choices \eqref{eq:quantizers}, we can identify $\texttt{flip}(w_t, m_{t}) = Q_0(w_t - Q_1(m_t, w_t))$. We do not have closed-form formula for $\texttt{reset}(w_t, m_{t+1})$ from \Cref{alg:BoolOptim}, but the residual errors $e_t$ play this role. Indeed, $e_{t+1} = m_t$ except when $\Delta_t$ is non-zero (i.e., when the flipping rule is applied and $e_{t+1}$ is equal to $0$).

The main difficulty in the analysis comes from the parameters quantization $Q_0()$. Indeed, we can follow the derivations in Appendix B.3 from \cite{errorfeedback} to bound the error term $\E{\Vert e_t \Vert^2}$, but we also have additional terms coming from the quantity:
\begin{equation}\label{eq:ht}
	h_t = Q_0(w_t - Q_1(m_t, w_t)) - (w_t - Q_1(m_t, w_t)).
\end{equation}
As a consequence, assumptions \ref{assum:uniflowerbound} to \ref{assum:stoflip} enable us to obtain $\E{h_t} = 0$ and to bound the variance of $h_t$.

\begin{remark}
	Assumptions \ref{assum:uniflowerbound} to \ref{assum:boundedvariance} are standard. Assumptions \ref{assum:alwaysflip} to \ref{assum:stoflip} are non-classic but dedicated to Boolean Logic strategy. \Cref{assum:alwaysflip} is equivalent to assuming Boolean Logic optimization presents at least one flip at every iteration $t$. \Cref{assum:alwaysflip} is classic in the literature of compressed SGD \cite{errorfeedback, alistarh2017qsgd, philippenko2020bidirectional}. Moreover, \Cref{assum:boundedaccumulator} and \Cref{assum:stoflip} are not restrictive, but algorithmic choices. For example, rounding ($Q_0$ function) can be stochastic based on the value of the accumulator $m_t$. Similar to STE clipping strategy, the accumulator can be clipped to some pre-defined value $\kappa$ before applying the flipping rule to verify \Cref{assum:boundedaccumulator}.
\end{remark}

\begin{remark}
	Our proof assumes that the step size $\eta$ is constant over iterations. But in practice, we gently decrease the value of $\eta$ at some time steps. Our proof can  be adapted to this setting by defining a gradient accumulator $a_t$ such that $a_{t+1}=a_t+  \varphi(q_t)$. When $\eta$ is constant we recover the definition \eref{eq:Accum} and we obtain $m_t= \eta a_t$. In the proposed algorithm, gradients are computed on binary weight $w_t$ and accumulated in $a_t$. Then, one applies the flipping rule on the quantity $\Tilde{w}_t = \eta a_t$ ($\Tilde{w}_t=m_t$ when $\eta$ is constant), and one (may) reset the accumulator $a_t$.
\end{remark}

We start by stating a key lemma which shows that the residual errors $e_t$ maintained in \Cref{alg:EquivOptim} do not accumulate too much.

\begin{lemma}
    \label{lem:boundederror}
    Under \Cref{assum:boundedvariance} and \Cref{assum:alwaysflip}, the error can be bounded as $\E{\Vert e_t \Vert^2} \leq \frac{2\gamma}{(1-\gamma)^2}\eta^2\sigma^2$.
\end{lemma}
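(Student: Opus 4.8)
The plan is to reduce the statement to the familiar error-feedback recursion and then run the standard geometric-series argument. First I would read off the recursion obeyed by $e_t$ directly from \Cref{alg:EquivOptim}: there $m_t = \eta\Tilde{\nabla} f(w_t) + e_t$, $\Delta_t = Q_1(m_t, w_t)$, and $e_{t+1} = m_t - \Delta_t$, so that $e_{t+1} = m_t - Q_1(m_t, w_t)$. Applying the compressor contraction \Cref{assum:alwaysflip} with $v = m_t$ and $w = w_t$ gives
\[
	\|e_{t+1}\|^2 = \|m_t - Q_1(m_t, w_t)\|^2 \le \gamma \|m_t\|^2 = \gamma \bigl\|e_t + \eta\Tilde{\nabla} f(w_t)\bigr\|^2 .
\]
This is the only point at which the explicit form of $Q_1$ in \eqref{eq:quantizers} matters: it is engineered precisely so that \Cref{assum:alwaysflip} holds with some $\gamma < 1$, after which $Q_1$ may be treated as a black-box $\gamma$-contraction, exactly as in Appendix~B.3 of \cite{errorfeedback}.

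Second, I would split the right-hand side with the weighted triangle inequality $\|u+v\|^2 \le (1+\rho)\|u\|^2 + (1+\rho^{-1})\|v\|^2$, valid for any $\rho > 0$, taking $u = e_t$ and $v = \eta\Tilde{\nabla} f(w_t)$:
\[
	\|e_{t+1}\|^2 \le \gamma(1+\rho)\|e_t\|^2 + \gamma(1+\rho^{-1})\,\eta^2\,\|\Tilde{\nabla} f(w_t)\|^2 .
\]
This bound is pathwise, so taking full expectations and using the bounded-variance \Cref{assum:boundedvariance} (which gives $\E{\|\Tilde{\nabla} f(w)\|^2} \le \sigma^2$ for every $w$, hence $\E{\|\Tilde{\nabla} f(w_t)\|^2} \le \sigma^2$ by the tower property) turns it into the scalar recursion
\[
	\E{\|e_{t+1}\|^2} \le \gamma(1+\rho)\,\E{\|e_t\|^2} + \gamma(1+\rho^{-1})\,\eta^2\sigma^2 .
\]
A pleasant feature of using Young's inequality here is that no control of the cross term $\langle e_t, \Tilde{\nabla} f(w_t)\rangle$, and hence no conditioning argument, is required.

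Third, I would pick $\rho$ as a function of $\gamma$ so that the contraction factor $\gamma(1+\rho)$ is bounded away from $1$ — for instance $\rho = (1-\gamma)/(2\gamma)$ yields $\gamma(1+\rho) = (1+\gamma)/2 < 1$ — and then unroll from the initialization $e_0 = 0$:
\[
	\E{\|e_t\|^2} \le \gamma(1+\rho^{-1})\,\eta^2\sigma^2 \sum_{k=0}^{\infty}\bigl(\gamma(1+\rho)\bigr)^k = \frac{\gamma(1+\rho^{-1})}{1-\gamma(1+\rho)}\,\eta^2\sigma^2 .
\]
Substituting the chosen $\rho$ and simplifying (using $\gamma < 1$ to absorb the remaining lower-order factor in $\gamma$) gives $\E{\|e_t\|^2} \le \tfrac{2\gamma}{(1-\gamma)^2}\,\eta^2\sigma^2$, which is the claim.

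The argument is short, and the step I expect to require the most care is the last one: the free parameter $\rho$ must be tuned to land exactly the stated constant $2\gamma/(1-\gamma)^2$, which is the routine error-feedback "geometric-series" bookkeeping. Everything else follows immediately from \Cref{assum:alwaysflip} and \Cref{assum:boundedvariance}; in particular, \Cref{assum:boundedaccumulator} and \Cref{assum:stoflip} play no role in this lemma — they are needed only later, to obtain $\E{h_t}=0$ and to bound the variance of the quantity $h_t$ in \eqref{eq:ht}.
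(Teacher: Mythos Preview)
Your proposal is correct and follows essentially the same route as the paper: apply the compressor contraction to get $\|e_{t+1}\|^2 \le \gamma\|e_t + \eta\Tilde{\nabla} f(w_t)\|^2$, split with Young's inequality, take expectations using \Cref{assum:boundedvariance}, unroll the geometric series, and substitute the parameter $\rho = (1-\gamma)/(2\gamma)$ (the paper calls it $\beta$). Your side remark that \Cref{assum:boundedaccumulator} and \Cref{assum:stoflip} are not needed here is also accurate.
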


\begin{proof}
	We start by using the definition of the error sequence:
	\begin{align*}
		\Vert e_{t+1} \Vert^2 = \Vert Q_1(m_t, w_t)- m_t \Vert^2.
	\end{align*}
	Next we make use of \Cref{assum:alwaysflip}:
	\begin{align*}
		\Vert e_{t+1} \Vert^2 \leq \gamma \Vert m_t \Vert^2.
	\end{align*}
	We develop the accumulator update:
	\begin{align*}
		\Vert e_{t+1} \Vert^2 \leq \gamma \Vert e_t + \eta \Tilde{\nabla} f(w_t) \Vert^2.
	\end{align*}
	We thus have a recurrence relation on the bound of $e_t$. Using Young’s inequality, we have that for any $\beta>0$,
	\begin{align*}
		\Vert e_{t+1} \Vert^2 \leq \gamma (1+\beta) \Vert e_t \Vert^2 + \gamma (1+\frac{1}{\beta}) \eta^2 \Vert \Tilde{\nabla} f(w_t) \Vert^2.
	\end{align*}
	Rolling the recursion over and using \Cref{assum:boundedvariance} we obtain:
	\begin{align*}
		\E{\Vert e_{t+1} \Vert^2} \leq & \gamma (1+\beta) \E{\Vert e_t \Vert^2} + \gamma (1+\frac{1}{\beta}) \eta^2 \E{\Vert \Tilde{\nabla} f(w_t) \Vert^2} \\
		\leq                            & \gamma (1+\beta) \E{\Vert e_t \Vert^2} + \gamma (1+\frac{1}{\beta}) \eta^2 \sigma^2                                      \\
		\leq                            & \sum_r^t (\gamma (1+\beta))^r \gamma (1+\frac{1}{\beta}) \eta^2 \sigma^2                                                  \\
		\leq                            & \frac{\gamma(1+\frac{1}{\beta})}{1-\gamma(1+\beta)} \eta^2 \sigma^2.
	\end{align*}
	Take $\beta = \frac{1-\gamma}{2\gamma}$ and plug it in the above bounds gives:
	\begin{align*}
		\E{\Vert e_{t+1} \Vert^2} \leq \frac{2\gamma}{(1-\gamma)^2}\eta^2\sigma^2.
	\end{align*}
\end{proof}

Then, the next Lemma allows us to bound the averaged norm-squared of the distance between the Boolean weight and $w_t - Q_1(m_t, w_t)$. We make use of the previously defined quantity $h_t$ \eref{eq:ht} and have:
\begin{lemma}
    \label{lem:boundedht}
    Under assumptions \Cref{assum:boundedaccumulator} and \Cref{assum:stoflip}: $\E{\Vert h_t \Vert^2} \leq \eta d \kappa$.
\end{lemma}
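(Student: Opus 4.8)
The idea is to pass to coordinates and exploit the explicit quantizers of \eqref{eq:quantizers} together with the unbiasedness of $Q_0$. I would write $\Vert h_t \Vert^2 = \sum_{i=1}^d h_{t,i}^2$ and, for each $i$, condition on the $\sigma$-algebra generated by $(w_t, m_t)$, so that $u_{t,i} := \bigl(w_t - Q_1(m_t, w_t)\bigr)_i$ is deterministic and the only remaining randomness is the internal randomness of the (stochastic) rounding $Q_0$; along the way this same conditioning gives $\E{h_{t,i}\mid u_{t,i}} = 0$ via Assumption~\ref{assum:stoflip}, which is the companion fact used in the proof of \cref{thm:Convegence}. I then split according to whether the flipping rule fires at coordinate $i$. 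When it does not, the explicit form of $Q_1$ gives $\bigl(Q_1(m_t,w_t)\bigr)_i = 0$, hence $u_{t,i} = w_{t,i} \in \{\pm 1\}$ and $Q_0$ acts as the identity there, so $h_{t,i} = 0$. When it fires (i.e. $|m_{t,i}| > 1$ and $\sign(m_{t,i}) = \sign(w_{t,i})$), the same explicit formula evaluates to $\bigl(Q_1(m_t,w_t)\bigr)_i = m_{t,i}$ --- this is precisely the computation already recorded after \eqref{eq:quantizers} --- so $u_{t,i} = w_{t,i} - m_{t,i}$.

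On the flip event $Q_0(u_{t,i})$ takes values in $\{w_{t,i}, -w_{t,i}\} = \{\pm 1\}$, so $h_{t,i}$ is a two-valued random variable taking the values $m_{t,i}$ (no flip) and $m_{t,i} - 2w_{t,i}$ (flip), which lie at distance $2$. Since $\E{h_{t,i}\mid u_{t,i}} = 0$ (Assumption~\ref{assum:stoflip}), the flip probability is forced to be $p_i = |m_{t,i}|/2$, and therefore $\E{h_{t,i}^2 \mid u_{t,i}} = 4p_i(1-p_i) = |m_{t,i}|\,\bigl(2 - |m_{t,i}|\bigr)$; equivalently one computes it as the stochastic-rounding variance $1 - u_{t,i}^2$ and substitutes $u_{t,i} = w_{t,i} - m_{t,i}$ with $w_{t,i}^2 = 1$. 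The key observation is the cancellation: the flip condition forces $|m_{t,i}| > 1$, so $2 - |m_{t,i}| < 1$ and hence $\E{h_{t,i}^2 \mid u_{t,i}} \le |m_{t,i}|$. Combining with the no-flip case, $\E{h_{t,i}^2} \le |m_{t,i}|$ unconditionally.

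It then remains to bound $|m_{t,i}|$. Under the constant-step-size convention, $m_t = \eta a_t$ (as noted before \cref{alg:BoolOptim}), so Assumption~\ref{assum:boundedaccumulator} gives $|m_{t,i}| = \eta|a_{t,i}| \le \eta\kappa$, whence $\E{h_{t,i}^2} \le \eta\kappa$; summing over $i = 1,\dots,d$ yields $\E{\Vert h_t \Vert^2} \le \eta d\kappa$, as claimed. The step I expect to require the most care is the flip-case bookkeeping: extracting cleanly from the piecewise formula for $Q_1$ that $\bigl(Q_1(m_t,w_t)\bigr)_i = m_{t,i}$ on the flip event, checking that the stochastic interpretation of $Q_0$ is well posed there (which needs $|m_{t,i}| \le 2$, i.e. $p_i \le 1$, guaranteed by the accumulator clipping underlying Assumption~\ref{assum:boundedaccumulator}), and carrying out the second-moment computation so that the $|m_{t,i}|\bigl(2-|m_{t,i}|\bigr) \le |m_{t,i}|$ cancellation appears; everything else is routine.
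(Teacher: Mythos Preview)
Your proposal is correct and follows essentially the same route as the paper: both arguments work coordinatewise, use the unbiasedness of $Q_0$ (Assumption~\ref{assum:stoflip}) to reduce the per-coordinate second moment to $1-u_{t,i}^2$, split into the no-flip case ($Q_1=0$, contribution zero) and the flip case ($Q_1=m_{t,i}$, contribution $|m_{t,i}|(2-|m_{t,i}|)$), and then bound the latter by $|m_{t,i}|\le\eta\kappa$ via Assumption~\ref{assum:boundedaccumulator}. Your write-up is in fact a bit more careful than the paper's---you track the absolute values correctly (the paper writes $m_t(2-m_t)$ where it should be $w_t m_t(2-w_t m_t)=|m_{t,i}|(2-|m_{t,i}|)$ on the flip event) and you flag the well-posedness constraint $p_i\le 1$---but the structure is identical.
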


\begin{proof}
	Let consider a coordinate $i \in [d]$. $Q_0|_i$ as $-1$ or $+1$ for value with some probability $p_{i,t}$. For the ease of presentation, we will drop the subscript $i$. Denote $u_t := w_t - Q_1(m_t, w_t)$. Hence, $h_t$ can take value $(1-u_t)$ with some probability $p_t$ and $(-1-u_t)$ with probability $1-p_t$. Assumption \Cref{assum:stoflip} yields $2p_t-1=u_t$. Therefore, we can compute the variance of $h_t$ as follows:
	\begin{align*}
		\E{\Vert h_t \Vert^2} & = \E{\sum_i^d 1 + (w_t - Q_1(m_t, w_t))^2 -2Q_0(w_t - Q_1(m_t, w_t)(w_t - Q_1(m_t, w_t)} \\
		& = \sum_i^d ((1-u_t)^2p_t + (-1-u_t)^2(1-p_t)) \\
		& = \sum_i^d (1-u_t^2).
	\end{align*}
	The definition of $u_t$ leads to
	\begin{align*}
		1-u_t^2 & = 1-(1+Q_1(m_t, w_t)^2-2w_t Q_1(m_t, w_t)) \\
		& =Q_1(m_t, w_t)(2w_t-Q_1(m_t, w_t)).
	\end{align*}
	When $m_t < 1$, we directly have $Q_1(m_t, w_t)(2w_t-Q_1(m_t, w_t)) = 0 \leq \eta \kappa$. When $m_t \geq 1$, we apply the definition of $Q_1$ to obtain:
	\begin{align*}
		Q_1(m_t, w_t)(2w_t-Q_1(m_t, w_t)) & \leq m_t(2-m_t) \\
		& \leq \eta \kappa.
	\end{align*}
	Therefore, we can apply this result to every coordinate, and conclude that:
	\begin{equation}
		\E{\Vert h_t \Vert^2} \leq \eta d \kappa.
	\end{equation}
\end{proof}

\subsubsection{Proof of \cref{thm:Convegence}}

We now can proceed to the proof of \thmref{thm:Convegence}.

\begin{proof}
	Consider the virtual sequence $x_t=w_t-e_t$. We have:
	\begin{align*}
		x_{t+1} & = Q_0(w_t-\Delta_t) - (m_t-\Delta_t) \\
		& = (Q_0(w_t-\Delta_t) + \Delta_t - e_t) - \eta \Tilde{\nabla} f(w_t).
	\end{align*}
	Considering the expectation with respect to the random variable $Q_0$ and the gradient noise, we have:
	\begin{align*}
		\E{x_{t+1}|w_t} = x_t - \eta \nabla f (w_t).
	\end{align*}
	We consider $\Et{\cdot}$ the expectation with respect to every random process know up to time $t$. We apply the $L$-smoothness assumption \Cref{assum:smoothgradients}, and assumptions \Cref{assum:boundedvariance}, \Cref{assum:stoflip} to obtain:
	\begin{align*}
		\Et{f(x_{t+1})-f(x_t)} & \leq -\eta \langle \nabla f (x_t), \nabla f (w_t) \rangle + \frac{L}{2} \Et{\Vert (Q_0(w_t-\Delta_t) + \Delta_t) - \eta \Tilde{\nabla} f(w_t) - w_t \Vert^2}.
	\end{align*}
	We now reuse $h_t$ from \eref{eq:ht} and simplify the above:
	\begin{align*}
		\Et{f(x_{t+1})-f(x_t)} & \leq -\eta \langle \nabla f (x_t), \nabla f (w_t) \rangle + \frac{L}{2} \Et{\Vert h_t - \eta \Tilde{\nabla} f(w_t) \Vert^2} \\
		& \leq -\eta \langle \nabla f (x_t)-\nabla f (w_t) + \nabla f (w_t), \nabla f (w_t) \rangle + \frac{L}{2} \Et{\Vert h_t - \eta \Tilde{\nabla} f(w_t) \Vert^2}.
	\end{align*}
	Using Young’s inequality, we have that for any $\beta > 0$,
	\begin{align*}
		\Et{f(x_{t+1})-f(x_t)} \leq & -\eta \langle \nabla f (x_t)-\nabla f (w_t) + \nabla f (w_t), \nabla f (w_t) \rangle \\
		& + \frac{L}{2}(1+\beta) \Et{\Vert h_t\Vert^2} + \frac{L}{2}\eta^2(1+\frac{1}{\beta})\sigma^2.
	\end{align*}
	Making use again of smoothness and Young's inequality we have:
	\begin{align*}
		\Et{f(x_{t+1})-f(x_t)} \leq & -\eta \Vert \nabla f (w_t)\Vert^2 -\eta \langle \nabla f (x_t)-\nabla f (w_t), \nabla f (w_t) \rangle \\
		& + \frac{L}{2}(1+\beta) \Et{\Vert h_t\Vert^2} + \frac{L}{2}\eta^2(1+\frac{1}{\beta})\sigma^2 \\
		\leq & -\eta \Vert \nabla f (w_t)\Vert^2 + \frac{\eta \rho}{2} \Vert \nabla f (w_t) \Vert^2 + \frac{\eta}{2\rho} \Vert \nabla f(x_t) - \nabla f(w_t) \Vert^2 \\
		& + \frac{L}{2}(1+\beta) \Et{\Vert h_t\Vert^2} + \frac{L}{2}\eta^2(1+\frac{1}{\beta})\sigma^2 \\
		\leq & -\eta \Vert \nabla f (w_t)\Vert^2 + \frac{\eta \rho}{2} \Vert \nabla f (w_t) \Vert^2 + \frac{\eta L^2}{2\rho} \underbrace{\Vert x_t - w_t \Vert^2}_{\Vert e_t \Vert^2} \\
		& + \frac{L}{2}(1+\beta) \Et{\Vert h_t\Vert^2} + \frac{L}{2}\eta^2(1+\frac{1}{\beta})\sigma^2.
	\end{align*}
	Under the law of total expectation, we make use of \Cref{lem:boundederror} and \Cref{lem:boundedht} to obtain:
	\begin{align*}
		\E{f(x_{t+1})} - \E{f(x_t)} \leq & -\eta(1-\frac{\rho}{2}) \E{\Vert \nabla f (w_t)\Vert^2} + \frac{\eta L^2}{2\rho} \frac{4\gamma}{(1-\gamma)^2}\eta^2\sigma^2 \\
		& + \frac{L}{8}\eta (1+\beta) d\kappa + \frac{L}{2}\eta^2(1+\frac{1}{\beta})\sigma^2.
	\end{align*}
	Rearranging the terms and averaging over $t$ gives for $\rho < 2$ (we can choose for instance $\rho = \beta = 1$):
	\begin{align*}
		\frac{1}{T+1} \sum_{t=0}^T \Vert \nabla f(w_t) \Vert^2 \leq \frac{2(f(w_0)-f_*)}{\eta (T+1)} + 2L\sigma^2 \eta + 4L^2\sigma^2 \frac{\gamma}{(1-\gamma)^2} \eta^2 + \frac{L}{2}d\kappa.
	\end{align*}
\end{proof}

The bound in \Cref{thm:Convegence} contains 4 terms. The first term is standard for a general non-convex target and expresses how initialization affects convergence. The second and third terms depend on the fluctuation of the minibatch gradients. Another important aspect of the rate determined by \Cref{thm:Convegence} is its dependence on the quantization error. Note that there is an ``error bound'' of $\frac{L}{2}d\kappa$ that remains independent of the number of update iterations. The error bound is the cost of using discrete weights as part of the optimization algorithm. Previous work with quantized models also includes error bounds \cite{Li2017,Li2019b}.

\newpage

\section{Code Sample of Core Implementation}
\label{appendix:implementation}

In this section we provide example codes in Python of a Boolean linear layer that employs $\xor$ logic kernel.
This implementation is in particular based on PyTorch \cite{Paszke2019}.
The class of Boolean linear layer is defined in \cref{alg:code_xorlinear}; and its backpropagation mechanism, overwritten from \texttt{autograd}, is shown in \cref{alg:code_xorfunction}.
Here, we consider both cases of the received backpropagation signal $Z$ as described in \cref{fig:BpSignals}, which are either Boolean  (see \cref{alg:code_bpropbool}) or real-valued (see \cref{alg:code_bpropreal}).
An example code of the Boolean optimizer is provided in \cref{alg:code_optim}.

Notice that our custom \texttt{XORLinear} layer can be flexibly combined with any \gls{FP} PyTorch modules to define a model.
The parameters of this layer are optimized by the \texttt{BooleanOptimizer}, whereas those of \gls{FP} layers are optimized by a common \gls{FP} optimizer like Adam \cite{Kingma2014}. 

\begin{algorithm}[hb]
    \centering
    \caption{Python code of \textsc{xor} linear layer}\label{alg:code_xorlinear}
    \lstinputlisting[language=Python]{codes/xor_linear.py}
\end{algorithm}

\begin{algorithm}[hb]
    \centering
    \caption{Python code of the backpropagation logic of \textsc{xor} linear layer}\label{alg:code_xorfunction}
    \lstinputlisting[language=Python]{codes/xor_function.py}
\end{algorithm}

\newpage

\begin{algorithm}[hb]
    \centering
    \caption{Backpropagation logic with Boolean received backpropagation}\label{alg:code_bpropbool}
    \lstinputlisting[language=Python]{codes/backward_bool.py}
\end{algorithm}

\begin{algorithm}[hb]
    \centering
    \caption{Backpropagation logic with real received backpropagation}\label{alg:code_bpropreal}
    \lstinputlisting[language=Python]{codes/backward_real.py}
\end{algorithm}

\newpage

\begin{algorithm}[H]
    \centering
    \caption{Python code of Boolean optimizer}\label{alg:code_optim}
    \lstinputlisting[language=Python]{codes/booloptimizer.py}
\end{algorithm}

\section{Training Regularization} \label{appendix:train_regu}

\subsection{Assumptions and Notations}

We use capital letters to denote tensors. So, $W^l$, $X^l$, $S^l$, and $Z^l$ denote weight, input, pre-activation, and received backpropagation tensors of a layer $l$. We also use \cref{prop:Conversion} and the discussion therein for studying Boolean logic in its embedding binary domain. It follows that in this section $\Bb$ denotes the binary set $\{\mp 1\}$.

Let $\Nc(\mu, \sigma^2)$ denote the Gaussian distribution of mean $\mu$ and variance $\sigma^2$, and $\mathbf{B}(p)$ denote Bernoulli distribution on $\Bb$ with $p \in [0, 1]$. Note that for $X \sim \mathbf{B}(p)$, if $\E{X} = 0$, then $\E{X^2} = 1$.

We assume that the backpropagation signal to each layer follows a Gaussian distribution. In addition, according to experimental evidence, cf. \fref{fig:Mean2Std}, we assume that their mean $\mu$ can be neglected w.r.t. their variance $\sigma^2$. 

\begin{figure}[!bh]
	\centering
	\includegraphics[width=0.75\textwidth]{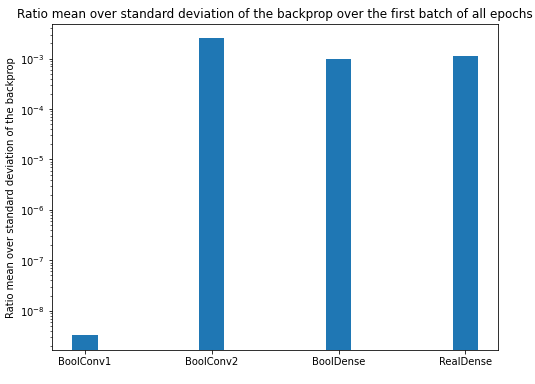}
	\caption{Empirical ratio of the mean to standard deviation of the backpropagation signal, experimented with CNN composed of BoolConv - BoolConv - BoolDense - RealDense layers and MNIST dataset.}
	\label{fig:Mean2Std}
\end{figure}

It follows that:
\begin{equation}
    Z^l \sim \Nc(\mu,\sigma^2), \quad  \textrm{with } \quad \mu \ll \sigma.
\end{equation}

We also assume that signals are mutually independent, i.e., computations are going to be made on random vectors, matrices and tensors, and it is assumed that the coefficients of these vectors, matrices and tensors are mutually independent. We assume that forward signals, backpropagation signals, and weights (and biases) of the layers involved are independent.

Consider a Boolean linear layer of input size $n$ and output size $m$, denote:
\begin{itemize}
    \item Boolean weights: $W^l \in \Bb^{m \times n}$;
    \item Boolean inputs: $X^l \in \Bb^n$;
    \item Pre-activation and Outputs: $S^l \in [|-n, n |]^m$, $X^{l+1} \in \Bb^m$;
    \item Downstream backpropagated signal: $Z^l \in \mathbb{R}^m$;
    \item Upstream backpropagated signal: $Z^{l-1} \in \mathbb{R}^n$.
\end{itemize}
In the forward pass, we have: $S^l = \xor(W^l, X^l)$, and $X^{l+1}=\maj(S^l)$.

With the following notations:

\begin{itemize}
    \item $W^l = (W^l_{ij})_{i<m,j<n} \sim (\mathbf{B}(p_{ij}))_{i<m,j<n}$ with $p_{ij} \in [0,1]$;
    \item $X^l = (X^l_{i})_{i<n} \sim (\mathbf{B}(q_{i}))_{i<n}$ with $q_{i} \in [0,1]$;
    \item $Z^l = (Z^l_{i})_{i<m} \sim (\Nc(\mu_i, \sigma_i^2))_{i<m}$;
    \item $\tilde Z$ stands for the truncated (with derivative of tanh) version of $Z$ for sake of simplicity.
\end{itemize}

And assuming that $\forall i, \sigma_i=\sigma, \mu_i=\mu$ and $\mu \ll \sigma$, we can derive scaling factors for linear layers in the next paragraph.

\begin{remark}
    The scaling factor inside a convolutional layer behaves in a similar fashion except that the scalar dot product is replace by a \textbf{full} convolution with the 180-rotated kernel matrix.
\end{remark}

\subsection{Backpropagation Scaling}\label{app:scaling_backprop}

We now compute the variance of the upstream backpropagated signal $Z^{l-1}$ with respect to the number of neurons and the variance of the downstream backpropagated signal:

\begin{align}
    \forall j<n, \var(Z^{l-1}_j) & = \sum_i^m \var(W^l_{ij} \tilde Z^l_i), \quad \text{($W$, $Z$ are self mutually independent)}\\
                                 & = \sum_i^m \E{{W^l_{ij}}^2 \tilde {Z^l_i}^2} - \E{W^l_{ij} \tilde{Z}^l_i}^2 \\
                                 & = \sum_i^m \E{{W^l_{ij}}^2} \E{\tilde {Z^l_i}^2} - \E{W^l_{ij}}^2 \E{\tilde Z^l_i}^2, \quad \text{($W$, $Z$ are independent)} \\
                                 & = \sum_i^m \E{\tilde {Z^l_i}^2} - (2p_{ij}-1)^2 \E{\tilde Z^l_i}^2 \\
                                 & \simeq m \E{\tilde {Z^l}^2}, \qquad\qquad\qquad\qquad\qquad\qquad\qquad \text{($\mu \ll \sigma)$} \\
                                 & = m \E{{Z^l}^2} \E{\frac{\partial \tanh}{\partial u}(u= \alpha W^l \cdot x^l)^2}, \qquad \text{(independence assump.)} \label{eq:variance}
\end{align}

Let us focus on the term $\E{\frac{\partial \tanh}{\partial u}(u)^2}$, where $u \in [|-m, m|]$ for m even. The probability $\mathbb{P}(u=l)$ is computed thanks to enumeration. The event ``$u=l$'' indicates that we have $k+l$ scalar values at level ``$+1$'' and $k$ at level ``$-1$'' such that $2k+l=m$. Hence,

\begin{equation}
    \mathbb{P}(u=l) = \binom{m}{\frac{m-l}{2}} \frac{1}{2}^\frac{m-l}{2} \frac{1}{2}^{m-\frac{m-l}{2}}.
\end{equation}
Thus,

\begin{align}
    \E{\frac{\partial \tanh}{\partial u}(u)^2} & = \sum_{u=-m}^m \frac{\partial \tanh}{\partial u}(u)^2 p(u)                                  \\
                                               & = 2 \sum_{u=0}^m \frac{\partial \tanh}{\partial u}(u)^2 p(u), \qquad \text{(with symmetry)}   \\
                                               & = \frac{1}{2^{m-1}} \sum_{u=0, u \; even}^m \binom{m}{\frac{m-l}{2}} (1- \tanh^2(\alpha u)).
\end{align}

The latter can be easily pre-computed for a given value of output layer size $m$, see Figure\ref{fig:tanhfactor}.

\begin{figure}
    \centering
    \includegraphics[width=0.6\textwidth]{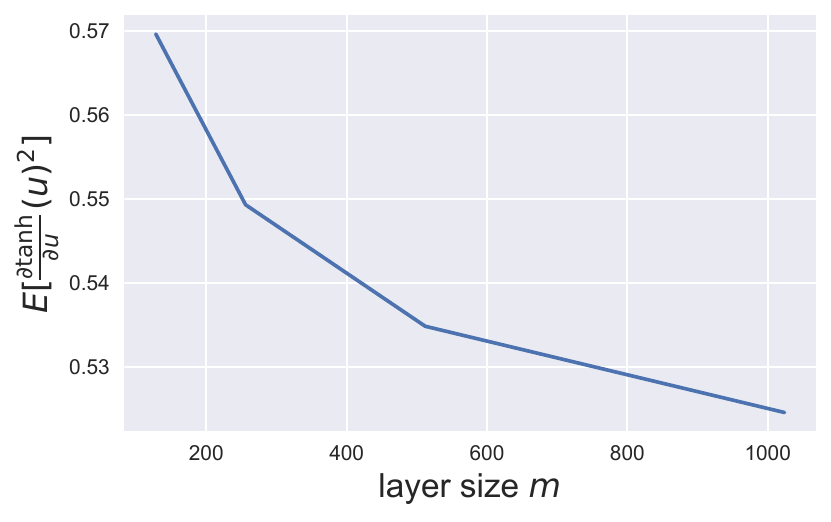}
    \caption{Expected value of $\tanh$ derivative with integer values as input, for several output sizes $m$.}
    \label{fig:tanhfactor}
\end{figure}

The above figure suggests that for reasonable layer sizes $m$, $\E{\frac{\partial \tanh}{\partial u}(u= \alpha W^l \cdot x^l)^2} \simeq \frac{1}{2}$. As a consequence we can make use of \eqref{eq:variance}, and approximate the variance of the backpropagated signal as:
\begin{equation}
    \var(Z^{l-1}) = \frac{m}{2} \var(Z^l).
\end{equation}

\begin{remark}
    The backpropagation inside a convolutional layer behaves in a similar fashion except that the tensor dot product is replace by a \textbf{full} convolution with the 180-rotated kernel matrix. For a given stride $v$ and kernel sizes $k_x, k_y$, the variance of the backpropagated signal is affected as follow:
    \begin{equation}
        \var(Z^{l-1}) = \frac{m k_x k_y}{2v} \var(Z^l).
    \end{equation}
\end{remark}

Let denote by MP the maxpooling operator (we assume its size to be $(2, 2)$). In the backward pass, one should not forget the impact of the $\tanh'(\alpha\Delta)$, \textbf{and} the MP operator so that:
\begin{equation}
    Z^{l-1} = \Conv_{\full}\pren{W_{\textrm{rot180}}^l, Z^l} \cdot \frac{\partial \tanh}{\partial u}(u=\MP[\Conv(\alpha W^l, X^l)]) \cdot \frac{\partial \MP}{\partial u}(u=\Conv(\alpha W^l, X^l)).
\end{equation}

Let us focus on the last term: $\frac{\partial \MP}{\partial u}(u=\Conv(\alpha W^l, X^l)) = \one{u=\max(\Conv(\alpha W^l, X^l))}$. Hence,
\begin{align}
    \E{\frac{\partial \MP}{\partial u}(u=\Conv(\alpha W^l, X^l))^2} & = \E{\frac{\partial \MP}{\partial u}(u=\Conv(\alpha W^l, X^l))} \\
                                                                    & = \frac{1}{4} \times 1 + 0.
\end{align}

As a consequence, for a given stride $v$ and kernel sizes $k_x, k_y$, the variance of the backpropagated signal is affected as follow:
\begin{equation}
    \var(Z^{l-1}) = \frac{1}{4} \frac{m k_x k_y}{2v} \var(Z^l).
\end{equation}

\subsection{BackPropagation Through Boolean Activation Function}\label{app:scaling_activation}

Due to the binary activation, the effect on the loss function by an action on weight $w$ diminishes with the distance $\Delta := |s - \tau|$ from threshold $\tau$ to pre-activation $s$ to which $w$ contributes. Throughout the step activation function, the backpropagation signal can be optionally re-weighted by a function which is inversely proportional to $\Delta$, for instance, $\tanh'(\Delta)$, $(1  + \Delta)^{-2}$, $\exp(-\Delta)$, or any other having this property. In our study, $\tanh'(\alpha \Delta)$ turns out to be a good candidate in which $\alpha$ is used to match the spreading in terms of standard deviation of this function and that of the pre-activation distribution. 

We start by computing the variance of the pre-activation signal $S^{l}$ with respect to the number of neurons, without considering the influence of backward $\tanh'$:
\begin{align}
    \forall j<n, \var(S^{l}_j) & = \sum_i^m \var(W^l_{ij} X^l_i), \quad \text{($W$, $X$ are self mutually independent)} \\
                               & = \sum_i^m \E{{W^l_{ij}}^2  {X^l_i}^2} - \E{W^l_{ij}  X^l_i]^2} 	\\
                               & = \sum_i^m \E{{W^l_{ij}}^2} \E{{X^l_i}^2} - \E{W^l_{ij}}^2 \E{X^l_i}^2, \quad \text{($W$, $X$ are independent)} \\
                               & = \sum_i^m \E{{X^l_i}^2} - (2p_{ij}-1)^2 \E{X^l_i}^2 \\
                               & \simeq m \E{{X^l}^2}, \qquad \text{since $\mu \ll \sigma$} \\
                               & = m.
\end{align}
This leads to
\begin{equation}\label{eq:ActivationScaling}
	\alpha = \frac{\pi}{2\sqrt{3m}}
\end{equation}
in order to have $\var(\alpha S)= \frac{\pi^2}{12}$, where $m$ is the range of the pre-activation, e.g., $m = c_{\textrm{in}} \times k_x \times k_y$ for a 2D convolution layer of filter dimensions $[c_{\textrm{in}}, k_x, k_y, c_{\textrm{out}}]$. 

\section{Experimental Details}\label{exp_setup}

\subsection{Image Classification}

\subsubsection{Training Setup}

The presented methodology and the architecture of the described Boolean \glspl{NN} were implemented in PyTorch \citep{Paszke2019} and trained on 8 Nvidia Tesla V100 GPUs. The networks thought predominantly Boolean, also contain a fraction of \gls{FP} parameters that were optimized using the Adam optimizer \citep{Kingma2014} with learning rate $10^{-3}$. For learning the Boolean parameters we used the Boolean optimizer (see \Cref{alg:code_optim}).
Training the Boolean networks for image classification was conducted with learning rates $\eta=150$ and $\eta=12$ (see \Cref{eq:Accum}), for architectures with and without batch normalization, respectively.
The hyper-parameters were chosen by grid search using the validation data.
During the experiments, both optimizers used the cosine scheduler iterating over 300 epochs.

We employ data augmentation techniques when training low bitwidth models which otherwise would overfit with standard techniques. In addition to techniques like random resize crop or random horizontal flip, we used RandAugment, lighting \cite{Liu2020} and Mixup \cite{Zhang2018a}. Following \cite{Touvron2019}, we used different resolutions for the training and validation sets. For \textsc{imagenet}, the training images were 192$\times$192 px and 224$\times$224 px for validation images. The batch size was 300 for both sets and the cross-entropy loss was used during training.

\subsubsection{\textsc{cifar10}}

\textsc{vgg-small} is found in the literature with different fully-connected \gls{FC} layers. Several works take inspiration from the classic work of \citep{Courbariaux2015}, which uses 3 \gls{FC} layers. Since other \gls{BNN} methodologies only use a single \gls{FC} layer, \Cref{TabBlocks} presents the results with the modified \textsc{vgg-small}.
\begin{table}[htp!]
    \centering
    \caption{Top-1 accuracy for different binary methodologies using the modified \textsc{vgg-small} (ending with 1 \gls{FC} layer) on the \textsc{cifar10} dataset.}
    \resizebox{0.5\columnwidth}{!}{
    \begin{tabular}{ l  c  c  c} \toprule
        \multirow{2}{*}{\textbf{Method}}    & \textbf{Forward}		 &  \textbf{Training} 		& \textbf{Acc.} \\
         & \textbf{Bit-width (W/A)}	 & \textbf{Bit-width (W/G)}& \textbf{(\%)} \\ \midrule\midrule
        \gls{FP}                                & 	$32/32$	&	$32/32$	&	93.8		\\ %
        \textsc{xnor-net}	\cite{Rastegari2016}	     &	$1/1$	&	$32/32$	&	87.4		\\ %
        \textsc{lab} \cite{Hou2016}                &	$1/1$	&	$32/32$	&	87.7		\\ %
        \textsc{rad} \cite{Ding2019}               &	$1/1$	&	$32/32$	&	90.0		\\ %
        \textsc{ir-net} \cite{Qin2020a}            &	$1/1$	&	$32/32$	&	90.4		\\ %
        \textsc{rbnn} \cite{lin2020rotated}       &	$1/1$	&	$32/32$	&	91.3		\\ %
        \textsc{slb} \cite{Yang2020}               &	$1/1$	&	$32/32$	&	92.0		\\ %
        \ours [Ours]                              &	$1/1$	&	$1/16$	&	90.8		\\ \bottomrule
    \end{tabular}}    
    \label{TabBlocks}
\end{table}

\subsubsection{Ablation Study on Image Classification}

The final block design for image classification was established after iterating over two models. The Boolean blocks examined were evaluated using the \textsc{resnet18} baseline architecture and  adjusting the training settings to improve performance. \Cref{fig:designs} presents the preliminary designs.

The Boolean Block I, Figure \ref{fig:block2}, is similar to the original \textsc{resnet18} block in that \gls{BN} operations are removed and ReLUs are replaced by the Boolean activation. This design always includes a convolution in the shortcut with spatial resolution being handled by the stride. Notice that for this block we add a Boolean activation after the Maxpool module in the baseline (also for the final baseline architecture). The Boolean Block II, Figure \ref{fig:block4}, is composed by two stacked residual modules. For downsampling blocks we use the reshaping operation to reduce the spatial resolution and enlarge the channel dimensions both by a factor of 2. The shortcut is modified accordingly with different operations in order to guarantee similar spatial dimensions before the summation.

\begin{figure}[!h]
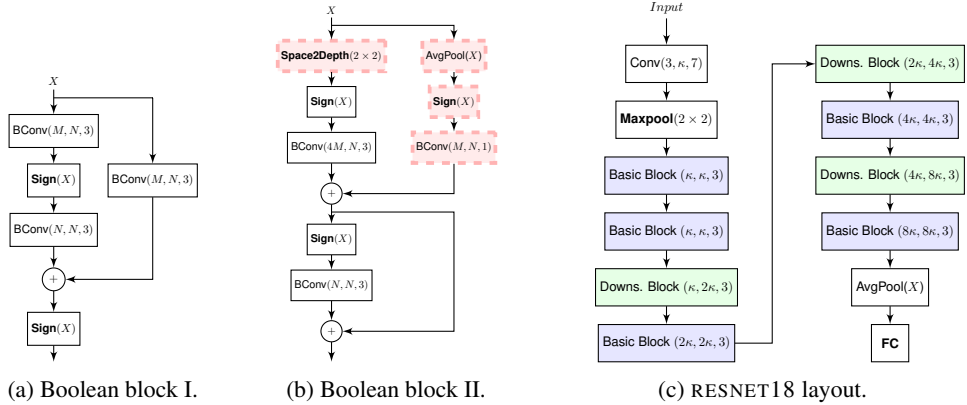

\tikzexternaldisable
	\begin{subfigure}[t]{.3\columnwidth}
		\centering		
		\includegraphics[width=.6\textwidth]{figures/classification_supp/BlockI}		
		\caption{Boolean block I.}
		\label{fig:block2}
	\end{subfigure}
	\hspace*{-6mm}
	\begin{subfigure}[t]{.3\columnwidth}
		\centering
		\includegraphics[width=.7\textwidth]{figures/classification_supp/BlockII}
		\caption{Boolean block II.}
		\label{fig:block4}
	\end{subfigure}
	\hspace*{5mm}
	\begin{subfigure}[t]{.32\columnwidth}
		\centering
		\includegraphics[width=1.1\textwidth]{figures/classification_supp/resnet_layout}
		\caption{\textsc{resnet18} layout.}
		\label{fig:resnet}
	\end{subfigure}
\tikzexternalenable
	\caption{Preliminary designs for the baseline architecture and the Boolean basic blocks. The dashed and red-shaded operations in the Boolean block II are introduced for downsampling blocks.}
	\label{fig:designs}
\end{figure}

Table \ref{TabBlocksAb} summarizes the results obtained with the proposed designs on \textsc{imagenet}. During our experimentation, we validated the hypothesis that increasing network capacity on the convolutional layers yielded higher accuracy values. However, similar to \gls{FP} \glspl{CNN}, we confirmed there is a limit by which the hypothesis ceases to be true, leading to overfitting. Incorporating a more severe training strategy had a sustained positive impact. Even so, for larger configurations, the compromise between accuracy and size can be cumbersome.

Among the strategies to reduce overfitting during training we included: mixup data-augmentation \cite{Zhang2018a}, image illumination tweaking, rand-augment and smaller input resolution for training than for validation \cite{Touvron2019}. All combined, increased the accuracy by $\sim$3 points (check results for Block II + base channel 230 with and w/o additional data augmentation).

\begin{table}[b!]
    \centering    
    \caption{Evaluation of the proposed blocks in \textsc{imagenet} and their respective configurations during training.}
    \resizebox{0.7\columnwidth}{!}{
    \begin{tabular}{ c  c  c  c  c  c} \toprule
    \textbf{Block}   & \textbf{Base}    & \textbf{$1^{st}$ Conv.} & \textbf{Shortcut} & \textbf{Data} & \textbf{Acc.} \\
    \textbf{Design}  & \textbf{Channel} & \textbf{Bit-width} & \textbf{Fil. Size}  & \textbf{Augmentation} & \textbf{(\%)}\\ \midrule \midrule
     \multirow{5}{*}{Block I} & 128 & 32 & $1\times1$ & Random Crop, Random Flip &  53.35 \\ %
     & 192 & 32 & $1\times1$ & Random Crop, Random Flip & 56.79 \\ %
     & 192 & 32 & $1\times1$ & Lighting, Mixup, RandAugment and \cite{Touvron2019} & 61.90 \\ %
     & 256 & 32 & $1\times1$ & Lighting, Mixup, RandAugment and \cite{Touvron2019} & 64.32 \\
     & 256 & 32 & $3\times3$ & Lighting, Mixup, RandAugment and \cite{Touvron2019} & \textbf{66.89} \\ \midrule
    \multirow{6}{*}{Block II} & 128 & 1 & $1\times1$ & Random Crop, Random Flip & 56.05 \\ %
     & 128 & 32 & $1\times1$ & Random Crop, Random Flip & 58.38 \\ %
     & 192 & 32 & $1\times1$ & Random Crop, Random Flip & 61.10 \\ %
     & 192 & 32 & $1\times1$ & Lighting, Mixup, RandAugment and \cite{Touvron2019} & 63.21 \\ %
     & 230 & 32 & $1\times1$ & Random Crop, Random Flip & 61.22 \\ %
     & 230 & 32 & $1\times1$ & Lighting, Mixup, RandAugment and \cite{Touvron2019} & \textbf{64.41}\\ \bottomrule
\end{tabular}}    
    \label{TabBlocksAb}
\end{table}

Compared to Block II, notice that the data streams in Block I are predominantly Boolean throughout the design. This is because it makes use of lightweight data types such as integer (after convolutions) and binary (after activations). In addition, it avoids the need of using a spatial transformation that may affect the data type and data distribution. In that regard, Block II requires 4 times more parameters for the convolution after reshaping, than the corresponding operation in Block I. This is exacerbated in upper layer convolutions, where the feature maps are deeper. Therefore, it makes sense to use Block I, as it is lighter and less prone to overfitting when the network capacity is expanded.

\subsubsection{Neural Gradient Quantization}\label{nn_quant}
For completeness, we also implemented neural gradient quantization to quantize it by using \textsc{int4} quantization with logarithmic round-to-nearest approach \cite{Chmiel2021} and statistics aware weight binning \cite{choi2018bridging}. Statistics aware weight binning is a method that seeks for the optimal scaling factor, per layer, that minimizes the quantization error based on the statistical characteristics of neural gradients. It involves per layer additional computational computations, but stays negligible with respect to other (convolution) operations. On \textsc{imagenet}, we recover the findings from \cite{Chmiel2021}: 4 bits quantization is enough to recover standard backpropagation performances.

\subsubsection{Basic Blocks SOTA BNNs for Classification}

Recent \gls{BNN} methodologies have proposed different mechanisms to improve performance. Most of them exploit full-precision operations to adjust datastreams within the network, like shift and scaling factors before binary activations \cite{Liu2020} or channel scaling through Squeeze-and-Excitation modules \cite{Martinez2020,Guo2022}. \Cref{fig:designsSota} shows the basic blocks of three methodologies that perform particularly well in \textsc{imagenet}. Together with \gls{BN} and regular activations, those techniques not only add an additional level of complexity but also lead to heavier use of computational resources and latency delays.

For comparison we also show the proposed block (\Cref{fig:blocka}) used in our experiments for Image Classification, Image Segmentation and Image Super-Resolution. Our block is compact in the sense that it only includes Boolean convolutions and Boolean activations, strategically placed to keep the input and output datastreams Boolean.
\begin{figure}[!h]
\tikzexternaldisable
	\centering
	\begin{subfigure}[t]{.2\columnwidth}
		\centering
		\includegraphics[width=\columnwidth]{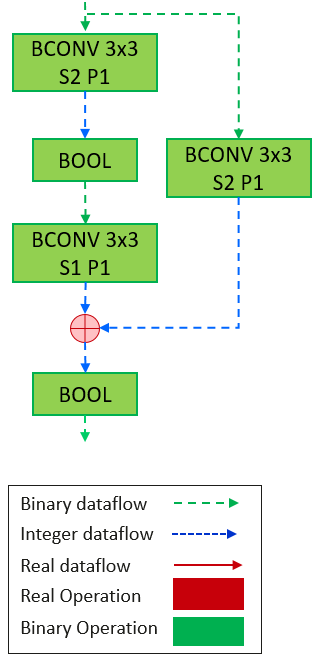}
		\caption{Boolean Block w/o \gls{BN}.}
		\label{fig:blocka}
	\end{subfigure}
	\hfill
	\begin{subfigure}[t]{.28\columnwidth}
		\centering
		\includegraphics[width=\columnwidth]{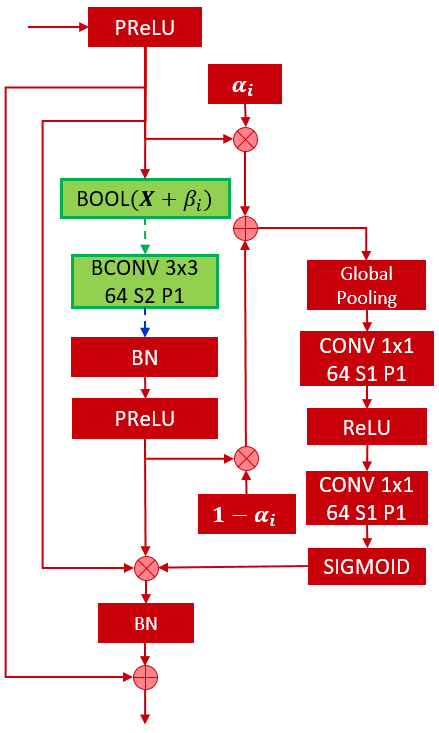}
		\caption{\textsc{bnext} sub-block \cite{Guo2022}.}
		\label{fig:blockb}
	\end{subfigure}
	\hfill
	\begin{subfigure}[t]{.23\columnwidth}
		\centering
		\includegraphics[width=\columnwidth]{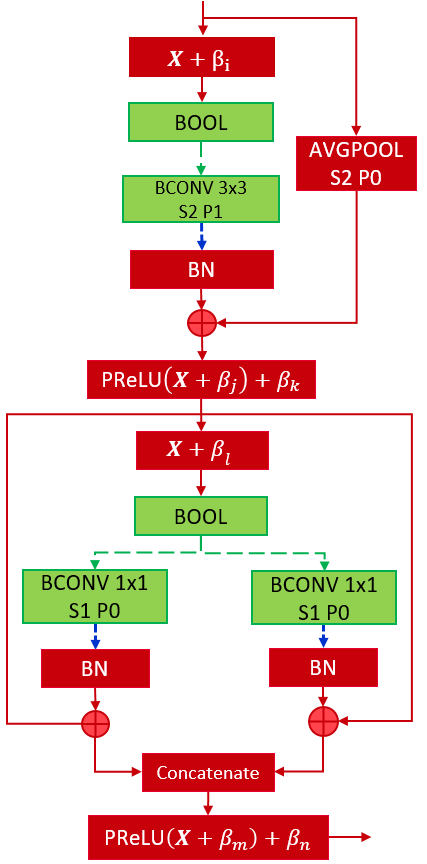}
		\caption{\textsc{reactnet} \cite{Liu2020}.}
		\label{fig:blockc}
	\end{subfigure}
	\hfill
	\begin{subfigure}[t]{.2\columnwidth}
		\centering
		\includegraphics[width=\columnwidth]{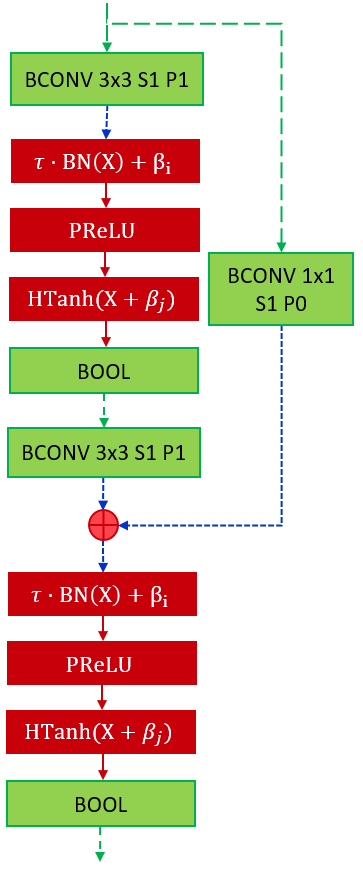}
		\caption{\textsc{bineal-net} \cite{Nie2022}.}
		\label{fig:blockd}
	\end{subfigure}
\tikzexternalenable
	\caption{Comparative graph of popular \gls{BNN} techniques and our Boolean module. Notice how multiple full-precision operations like \gls{BN}, PReLU, or Squeeze-and-Excitation are overly used on each \gls{BNN} block.}
	\label{fig:designsSota}
\end{figure}

\subsection{Image Super-resolution}

The seminal \textsc{edsr} \cite{Lim2017} method for super-resolution was used together with our Boolean methodology. In particular, the residual blocks are directly replaced by our Boolean basic block, see \Cref{fig:edsr_arch}. For all three tasks in super-resolution, (i.e. $\times2$, $\times3$, $\times4$), training was carried out with small patches of 96$\times$96 px (40 of them extracted randomly from each single image in the \textsc{div2k} dataset) and validated with the original full-resolution images. The learning rate for real and boolean parameters were $10^{-4}$ and $\eta=36$, respectively. The networks were trained by minimizing the $L_1$-norm between the ground-truth and the predicted upsampled image while using the Adam optimizer and Boolean optimizer (see \Cref{alg:code_optim}). In our experiments the batch size was 20. Some example images generated by our methodology are showed in \Cref{fig:sr1,fig:sr2}.
\begin{figure}[!h]
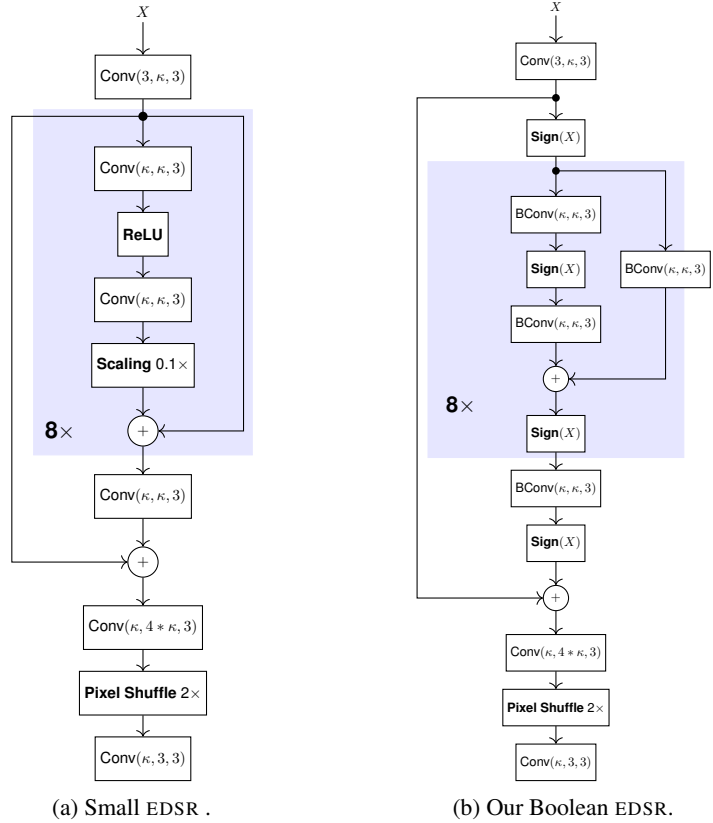

\tikzexternaldisable
	\centering	
	\begin{subfigure}[b]{.23\columnwidth}
		\centering
		\includegraphics[width=\columnwidth]{figures/super_resolution_supp/edsr_fp}
		\caption{Small \textsc{edsr} .}
		\label{fig:edsr}
	\end{subfigure}
	\hspace{2cm}
	\begin{subfigure}[b]{.28\columnwidth}
		\centering
		\includegraphics[width=\columnwidth]{figures/super_resolution_supp/edsr_bool}
		\caption{Our Boolean \textsc{edsr}.}
		\label{fig:booledsr}
	\end{subfigure}
\tikzexternalenable
	\caption{Small \textsc{edsr} for single scale $\times 2$ super-resolution and our Boolean version with Boolean residual blocks. In both architectures the channels dimensions are $\kappa=256$ and the shaded blocks are repeated $8\times$.}
	\label{fig:edsr_arch}
\end{figure}

\begin{figure}[!h]
\tikzexternaldisable
	\centering
	\begin{subfigure}[b]{.295\columnwidth}
		\centering
		\includegraphics[width=\columnwidth]{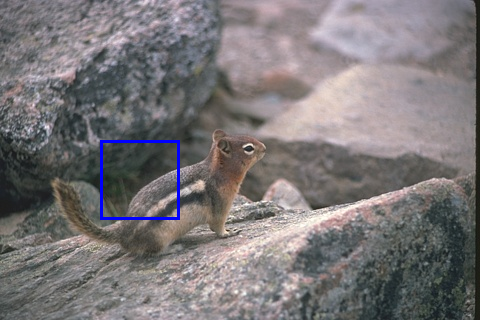}
		\label{fig:super_squirrela}
	\end{subfigure}
	\begin{subfigure}[b]{.195\columnwidth}
		\centering
		\includegraphics[width=\columnwidth]{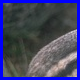}
		\label{fig:super_squirrelpa}
	\end{subfigure}
	\begin{subfigure}[b]{.295\columnwidth}
		\centering
		\includegraphics[width=\columnwidth]{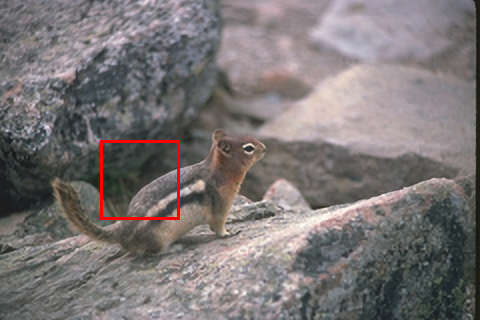}
		\label{fig:super_squirrelb}
	\end{subfigure}
	\begin{subfigure}[b]{.195\columnwidth}
		\centering
		\includegraphics[width=\columnwidth]{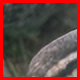}
		\label{fig:super_squirrelpb}
	\end{subfigure}
	\hfill
	\begin{subfigure}[b]{.295\columnwidth}
		\centering
		\includegraphics[width=\columnwidth]{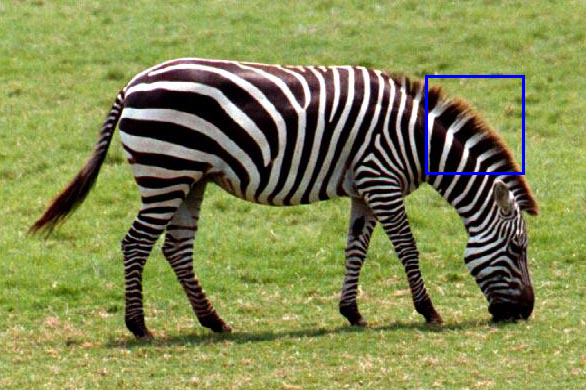}
		\caption{Ground-truth targets.}
		\label{fig:super_zebraa}
	\end{subfigure}
	\begin{subfigure}[b]{.195\columnwidth}
		\centering
		\includegraphics[width=\columnwidth]{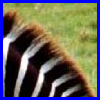}
		\caption{Enlarged crops.}
		\label{fig:super_zebrapa}
	\end{subfigure}
	\begin{subfigure}[b]{.295\columnwidth}
		\centering
		\includegraphics[width=\columnwidth]{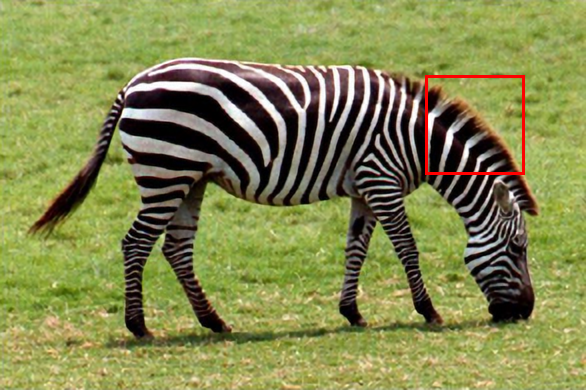}
		\caption{Predicted images.}
		\label{fig:super_zebrab}
	\end{subfigure}
	\begin{subfigure}[b]{.195\columnwidth}
		\centering
		\includegraphics[width=\columnwidth]{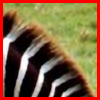}
		\caption{Enlarged crops.}
		\label{fig:super_zebrapb}
	\end{subfigure}
\tikzexternalenable
	\caption{Ground-truth high resolution images and the output of our Boolean super-resolution methodology. First row: image ``013'' from \textsc{bsd100}, with PSNR: 35.54 dB. Second row: image ``014'' from Set14, with PSNR: 33.92 dB.}
	\label{fig:sr1}
\end{figure}
\begin{figure}[!h]
\tikzexternaldisable
	\centering
	\begin{subfigure}[b]{.7\columnwidth}
		\centering
		\includegraphics[width=\columnwidth]{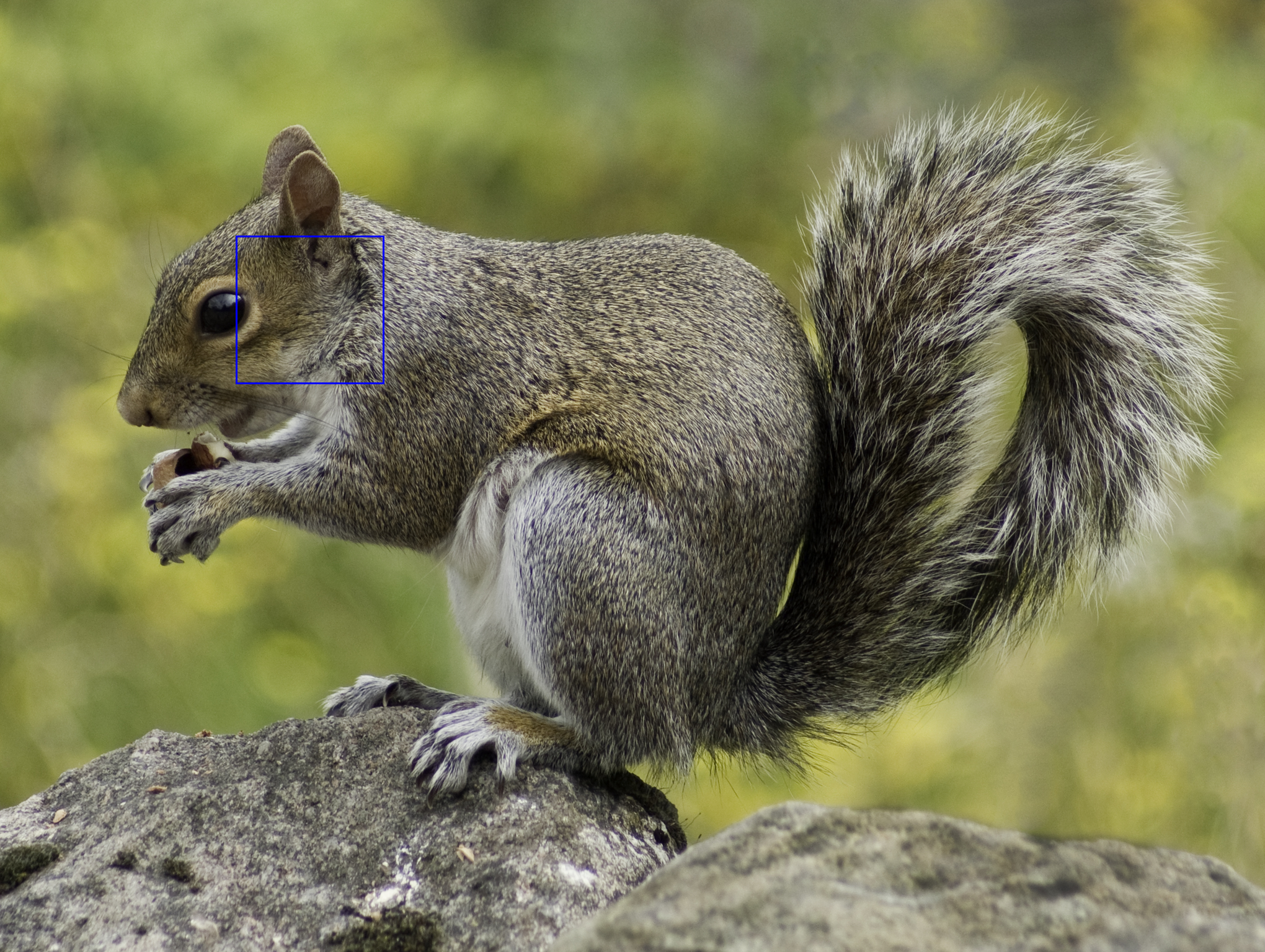}
		\caption{Ground-truth target.}
	\end{subfigure}
	\begin{subfigure}[b]{.25\columnwidth}
		\centering
		\includegraphics[width=\columnwidth]{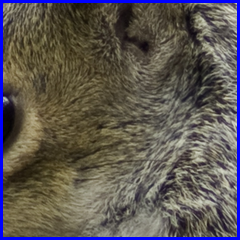}
		\caption{Enlarged crop.}
	\end{subfigure}
	\begin{subfigure}[b]{.7\columnwidth}
		\centering
		\includegraphics[width=\columnwidth]{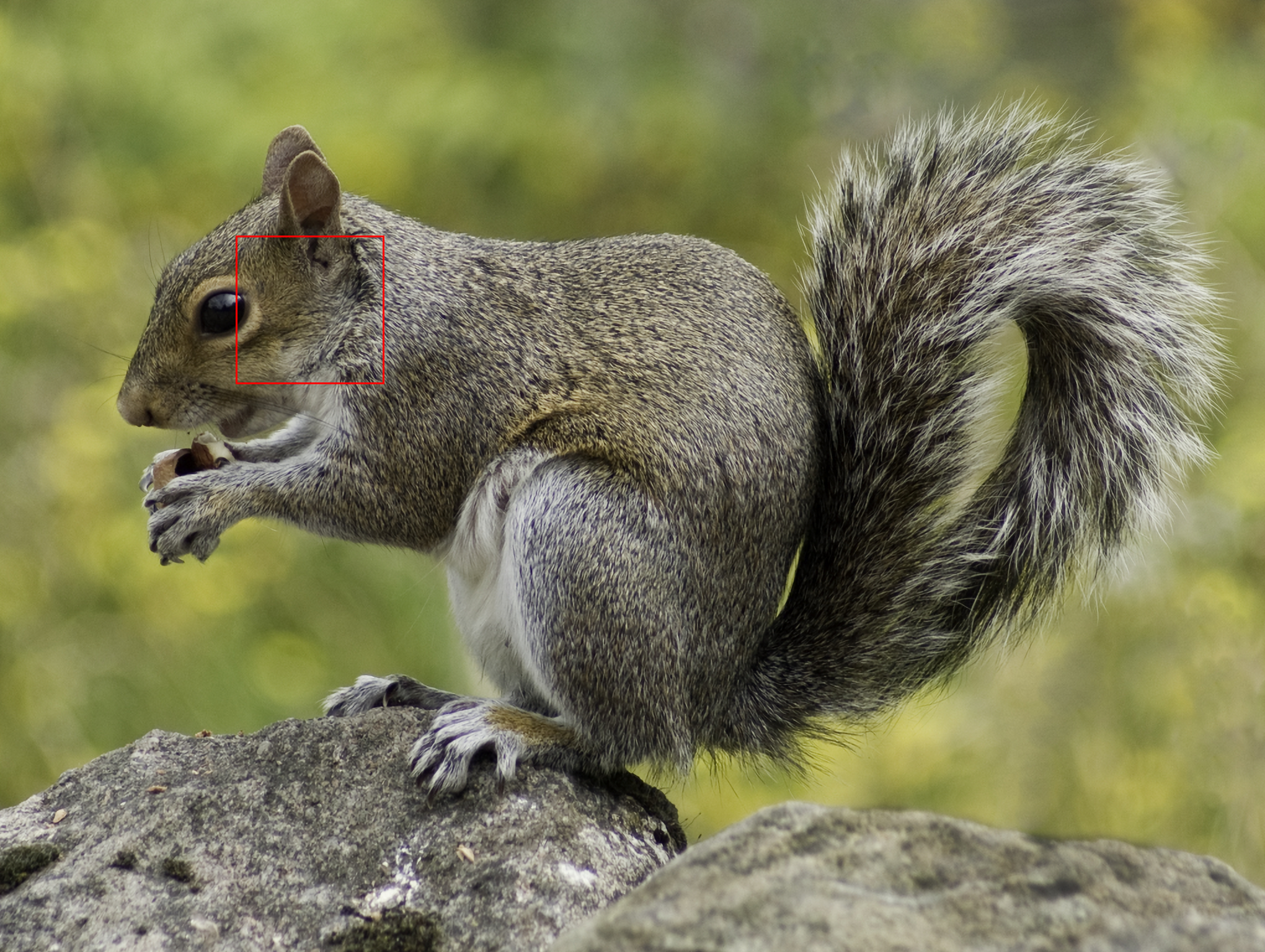}
		\caption{Predicted image.}
	\end{subfigure}
	\begin{subfigure}[b]{.25\columnwidth}
		\centering
		\includegraphics[width=\columnwidth]{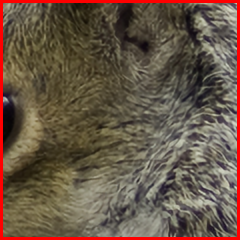}
		\caption{Enlarged crop.}
	\end{subfigure}
\tikzexternalenable
	\caption{Ground-truth high resolution target image (top) and the output of our Boolean super-resolution methodology (bottom). Image ``0810'' from the validation set of \textsc{div2k}, with PSNR: 34.90 dB}
	\label{fig:sr2}
\end{figure}

\subsection{Semantic Segmentation}\label{supp_seg}

\subsubsection{Network architecture}
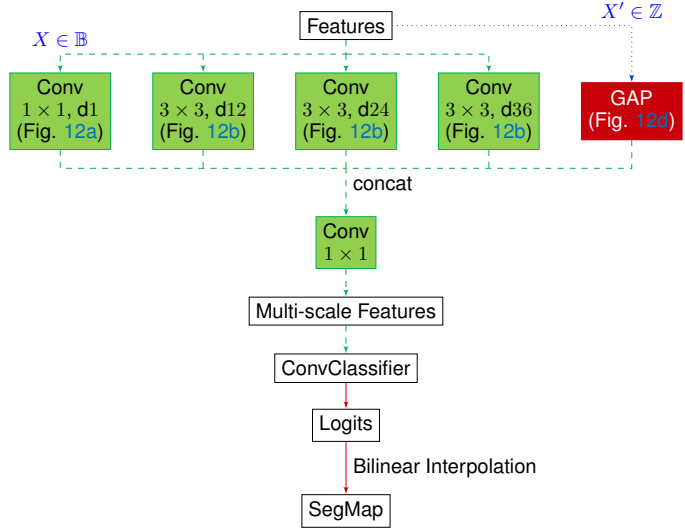
\begin{figure}[t]
\tikzexternaldisable
    \centering
	\resizebox{.65\textwidth}{!}{
    \begin{tikzpicture}[auto, node distance=2cm,>=latex']

        \tikzstyle{arrow} = [thick,->,>=stealth, line width=1.5pt]
        \tikzstyle{line} = [thick,-,>=stealth, line width=1.5pt]

        \node[rectangle, draw] (Input) at (0, -0/5) {Features};
        \node[rectangle, draw, b_df_c, fill=b_block_c, text=black, align=center, minimum width=1.5cm, minimum height=1cm] (D0) at (-5, -1.5) {Conv \\ $1\times 1$, d$1$ \\ (Fig. \ref{fig:binaspp_1x1})};
        \node[rectangle, draw, b_df_c, fill=b_block_c, text=black, align=center, minimum width=1.5cm, minimum height=1cm] (D12) at (-2.5, -1.5) {Conv \\ $3\times 3$, d$12$ \\ (Fig. \ref{fig:binaspp_3x3})};
        \node[rectangle, draw, b_df_c, fill=b_block_c, text=black, align=center, minimum width=1.5cm, minimum height=1cm] (D24) at (0, -1.5) {Conv \\ $3\times 3$, d$24$ \\ (Fig. \ref{fig:binaspp_3x3})};
        \node[rectangle, draw, b_df_c, fill=b_block_c, text=black, align=center, minimum width=1.5cm, minimum height=1cm] (D36) at (2.5, -1.5) {Conv \\ $3\times 3$, d$36$ \\ (Fig. \ref{fig:binaspp_3x3})};
        \node[rectangle, draw, fp_df_c, fill=fp_block_c, text=white, align=center, minimum width=1.5cm, minimum height=1cm] (GAP) at (5, -1.5) {GAP \\ (Fig. \ref{fig:binaspp_gap})};
        \node[rectangle, draw, b_df_c, fill=b_block_c, text=black, align=center] (Conv) at (0, -3.8) {Conv \\ $1\times 1$};
        \node[rectangle, draw] (Output) at (0, -5) {Multi-scale Features};

        \node[rectangle, draw, align=center] (Classifier) at (0, -6) {ConvClassifier};
        \node[rectangle, draw, align=center] (Logits) at (0, -7) {Logits};
        \node[rectangle, draw, align=center] (SegMap) at (0, -8.5) {SegMap};

        \draw [->, b_df_c, dashed] (Input.south) -- (0, -0.5) -| node[above, blue]{$X\in \mathbb{B}$} (D0.north);
        \draw [->, b_df_c, dashed] (-2.5, -0.5) -- (D12.north);
        \draw [->, b_df_c, dashed] (0, -0.5) -- (D24.north);
        \draw [->, b_df_c, dashed] (0, -0.5) -| (D36.north);
        \draw [->, int_df_c, dotted] (Input.east) -| node[above, blue] {$X'\in \mathbb{Z}$} (GAP.north);

        \draw [-, b_df_c, dashed] (D0.south) |- (0, -2.5); %
        \draw [-, b_df_c, dashed] (D12.south) -- (-2.5, -2.5); %
        \draw [-, b_df_c, dashed] (D24.south) -- (-0, -2.5); %
        \draw [-, b_df_c, dashed] (D36.south) -- (2.5, -2.5); %
        \draw [->, b_df_c, dashed] (GAP.south) -- (5, -2.5) -| node[right, black, align=center] {\\\\concat} (Conv.north);

        \draw [->, b_df_c, dashed] (Conv) -- (Output);
        \draw [->, b_df_c, dashed] (Output) -- (Classifier);
        \draw [->, fp_df_c] (Classifier) -- (Logits);
        \path (Logits) -- node[right, align=center]{Bilinear Interpolation} (SegMap);
        \draw [->, fp_df_c] (Logits) -- (SegMap);

    \end{tikzpicture}
	}
\tikzexternalenable
    \caption{Boolean segmentation architecture.}
    \label{fig:aspp_bool}
\end{figure}
Our Boolean architecture is based on \textsc{deeplabv3} \cite{chen2017rethinking}, which has shown great success in semantic segmentation. It is proven that using dilated or atrous convolutions, which preserve the large feature maps, instead of strided convolutions is prominent for this task. In our Boolean model with \textsc{resnet18} layout, we replace the strided convolutions in the last two \textsc{resnet18} layers with the non-strided version, and the dilated convolutions are employed to compensate for the reduced receptive field. Thus, the images are $8\times$ downsampled instead of $32\times$, preserving small object features and allowing more information flow through the Boolean network. As shown in \Cref{fig:block2}, in the Boolean basic block, a $3\times 3$ convolution instead of $1\times 1$ convolution is used to ensure the comparable dynamic range of pre-activations between the main path and the shortcut. Keeping these Boolean convolutional layers non-dilated naturally allows the backbone to extract multi-scale features without introducing additional computational cost.

\begin{figure}[t]
\tikzexternaldisable
	\centering
	\begin{subfigure}[b]{0.24\textwidth}
    \centering    
    \resizebox{.5\textwidth}{!}{
        \begin{tikzpicture}[auto, node distance=1.5cm,>=latex']
            \node (in){$X \in \mathbb{B}$};
            \node (mod1)[binary, b_df_c, fill=b_block_c, text=black, below of=in]{BConv$(M,N,1)$};
            \node (actv1)[binary, b_df_c, fill=b_block_c, text=black, below of=mod1]{\textbf{Sign}$(X)$};

            \draw[->, b_df_c, dashed] (in) -- (mod1);
            \draw[->, int_df_c, dotted] (mod1) -- (actv1);
            \draw[->, b_df_c, dashed] (actv1) -- ([yshift=-0.5cm]actv1.south);
        \end{tikzpicture}
    }
    \caption{ }%
    \label{fig:binaspp_1x1}
	\end{subfigure}
\hfil
\begin{subfigure}[b]{0.24\textwidth}
    \centering
    
    \resizebox{.5\textwidth}{!}{
        \begin{tikzpicture}[auto, node distance=1.5cm,>=latex']
            \node (in) {$X \in \mathbb{B}$};
            \node (mod1)[binary, b_df_c, fill=b_block_c, text=black, align=center, below of=in]{BConv$_{d}(M,N,3)$ \\ $d \in \{12, 24, 36\}$};
            \node (actv1)[binary, b_df_c, fill=b_block_c, text=black, below of=mod1]{\textbf{Sign}$(X)$};

            \draw[->, b_df_c, dashed] (in) -- (mod1);
            \draw[->, int_df_c, dotted] (mod1) -- (actv1);
            \draw[->, b_df_c, dashed] (actv1) -- ([yshift=-0.5cm]actv1.south);
        \end{tikzpicture}
    }
    \caption{ }%
    \label{fig:binaspp_3x3}
\end{subfigure}
\hfil
\begin{subfigure}[b]{0.24\textwidth}
    \centering    
    \resizebox{.5\textwidth}{!}{
        \begin{tikzpicture}[auto, node distance=1.5cm,>=latex']
            \node (in){$X \in \mathbb{B}$};
            \node (gap) [binary, fp_df_c, fill=fp_block_c, text=white, below of=in] {Global Avg Pool};
            \node (mod1)[binary, b_df_c, fill=b_block_c, text=black, below of=gap]{BConv$(M,N,1)$};
            \node (actv1)[binary, b_df_c, fill=b_block_c, text=black, below of=mod1]{\textbf{Sign}$(X)$};

            \draw[->, b_df_c, dashed] (in) -- (gap);
            \draw[->, fp_df_c] (gap) -- (mod1);
            \draw[->, fp_df_c] (mod1) -- (actv1);
            \draw[->, b_df_c, dashed] (actv1) -- ([yshift=-0.5cm]actv1.south);
        \end{tikzpicture}
    }
    \caption{ }%
    \label{fig:nv_binaspp_gap}
\end{subfigure}
\hfil
\begin{subfigure}[b]{0.24\textwidth}
    \centering
    \resizebox{.5\textwidth}{!}{
        \begin{tikzpicture}[auto, node distance=1.5cm,>=latex']
            \node (in){$X' \in \mathbb{Z}$};
            \node (bn1) [binary, fp_df_c, fill=fp_block_c, text=white, below of=in] {ReLU - BN};
            \node (gap) [binary, fp_df_c, fill=fp_block_c, text=white, below of=bn1] {Global Avg Pool};
            \node (mod1)[binary, b_df_c, fill=b_block_c, text=black, below of=gap]{BConv$(M,N,1)$};
            \node (actv1)[binary, fp_df_c, fill=fp_block_c, text=white, below of=mod1]{BN - \textbf{Sign}$(X)$};

            \draw[->, int_df_c, dotted] (in) -- (bn1);
            \draw[->, fp_df_c] (bn1) -- (gap);
            \draw[->, fp_df_c] (gap) -- (mod1);
            \draw[->, fp_df_c] (mod1) -- (actv1);
            \draw[->, b_df_c, dashed] (actv1) -- ([yshift=-0.5cm]actv1.south);
        \end{tikzpicture}
    }
\tikzexternalenable
\caption{ }%
\label{fig:binaspp_gap}
\end{subfigure}
	\vspace{.2cm}
	\caption{Boolean Atrous Spatial Pyramid Pooling (\textsc{bool-aspp}) architecture. (a) $1\times 1$ Conv branch. (b) $3\times 3$ dilated Conv branch with dilation rate of $d$. (c) Naive global average pooling branch. (d) Global average pooling branch.}
\end{figure}
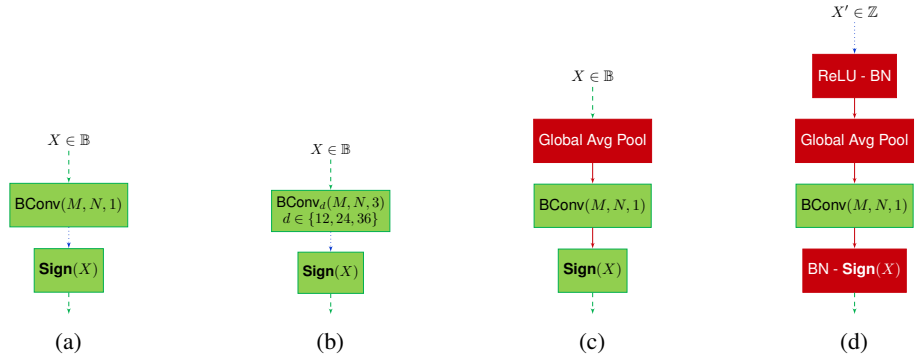

The Atrous Spatial Pyramid Pooling (\textsc{aspp}) consists of multiple dilated convolution layers with different dilation rates and global average pooling in parallel, which effectively captures multi-scale information. In the Boolean \textsc{aspp} (\textsc{bool-aspp}), we use one $1\times 1$ Boolean convolution and three $3\times 3$ Boolean dilated convolution with dilation rates of $\{12, 24, 36\}$ following by Boolean activation functions. The global average pooling (\textsc{gap}) branch in \textsc{aspp} captures image-level features, which is crucial for global image understanding as well as large object segmenting accuracy. However, in \textsc{bool-aspp}, as shown in \Cref{fig:nv_binaspp_gap}, the Boolean input $X$ leads to significant information loss before the global average pooling may cause performance degradation on large objects. Therefore, we keep the inputs integer for the \textsc{gap} branch as demonstrated in \Cref{fig:binaspp_gap}. To prevent numerical instability, batch normalization is used in the \textsc{gap} branch before each activation function. Using \textsc{bool-aspp} enhances the multi-scale feature extraction and avoids parameterized upsampling layers, e.g. transposed convolution.

\subsubsection{Training setup}
The model was trained on the \textsc{cityscapes} dataset for $400$ epochs with a batch size of $8$. The AdamW optimizer \cite{loshchilov2017decoupled} with an initial learning rate of $5\times 10^{-4}$ and the Boolean logic optimizer (see \Cref{alg:code_optim}) with a learning rate of $\eta=12$ were used respectively for real and Boolean parameters.
At the early training stage, parameters could easily be flipped due to the large backward signal; thus, to better benefit from the \textsc{imagenet}-pretrained backbone, we reduce the learning rate for Boolean parameters in the backbone to $\eta=6$. We employed the polynomial learning rate policy with $p = 0.9$ for all parameters. The cross-entropy loss was used for optimization. We did not employ auxiliary loss or knowledge distillation as these training techniques require additional computational cost, which is not in line with our efficient on-device training objective.

\subsubsection{Data sampling and augmentation}
\begin{table}[h!]
	\centering	
	\caption{Class per image and performance gap occurrence rates in \textsc{cityscapes} training set with naive \textsc{bool-aspp} design. \good{Class with low performance gap} and \bad{class with high performance gap}.}
	\vspace{.2cm}
	\resizebox{.45\textwidth}{!}{
	\begin{tabular}{@{}cSS[table-format=2.1]@{}}
		\toprule
		                  & {\textbf{Image ratio} (\%)} & {$\Delta$ \textbf{mIoU} (\%)} \\ \midrule
		\good{Road}       & 98.62              & 0.0                  \\
		\good{Sideway}    & 94.49              & 0.7                  \\
		\good{Building}   & 98.62              & 0.6                  \\
		Wall              & 32.61              & 7.4                  \\
		Fence             & 43.56              & 3.8                  \\
		Pole              & 99.13              & 1.8                  \\
		Light             & 55.73              & 6.5                  \\
		\good{Sign}       & 94.39              & 2.8                  \\
		\good{Vegetation} & 97.18              & 0.1                  \\
		\good{Terrain}    & 55.60              & 0.8                  \\
		\good{Sky}        & 90.29              & -0.2                 \\
		Person            & 78.76              & 1.5                  \\
		\bad{Rider}       & 34.39              & 7.4                  \\
		\good{Car}        & 95.19              & 0.3                  \\
		\bad{Truck}       & 12.07              & 6.9                  \\
		\bad{Bus}         & 9.21               & 12.8                 \\
		\bad{Train}       & 4.77               & 17.0                 \\
		\bad{Motor}       & 17.24              & 15.3                 \\
		bike              & 55.33              & 2.0                  \\ \bottomrule
	\end{tabular}
	}
	\label{tab:class_ratio_cs}
\end{table}

\begin{figure}[htb!]
	\centering
	\includegraphics[width=0.8\textwidth]{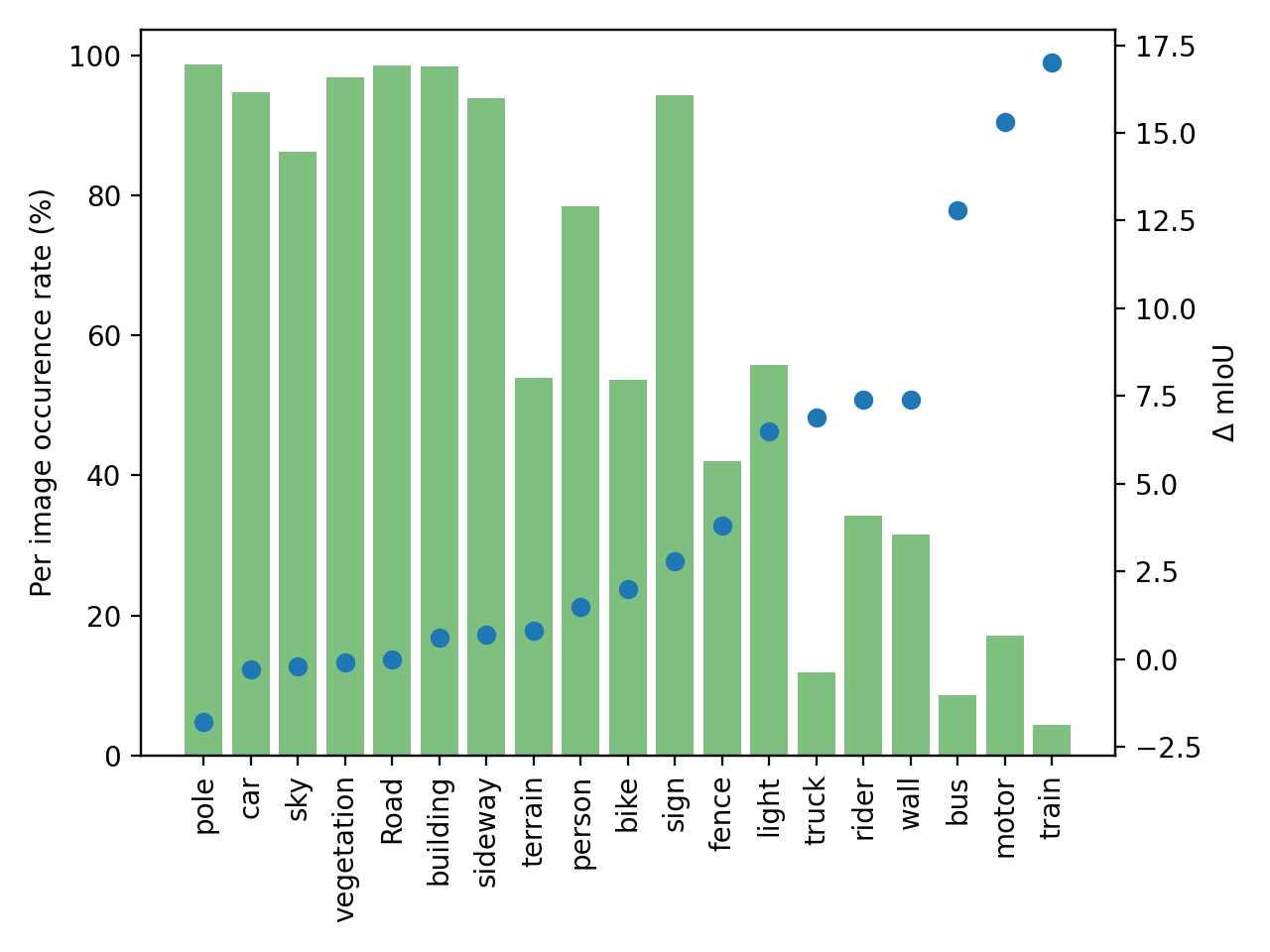}
	\vspace{.2cm}
	\caption{Class per image occurrence ratio and performance gap with naive \textsc{bool-aspp} design.}
	\label{fig:ratio_gap_cs}
\end{figure}

We aim to reproduce closely full-precision model performance in the semantic segmentation task with Boolean architecture and Boolean logic training. Due to the nature of the Boolean network, the common regularization method, e.g., weight decay, is not applicable. Moreover, with more trainable parameters,
the Boolean network can suffer from over-fitting. In particular, as shown in \Cref{tab:class_ratio_cs}, the imbalanced dataset for semantic segmentation aggravates the situation. There is a significant performance gap for several classes which has low occurrence rate, including \textit{rider (9.5\%), motor (11.2\%), bus (9.5\%), truck (6.9\%), train (17.0\%)}. We argue that the performance gap is due to the similarity between classes and the dataset's low occurrence rate, which is confirmed as shown in \Cref{fig:ratio_gap_cs}.

Data augmentation and sampling are thus critical for Boolean model training. Regarding data augmentation, we employed multi-scale scaling with a random scaling factor ranging from $0.5$ to $2$. We adopted a random horizontal flip with probability $p=0.5$ and color jittering. In addition, we used rare class sampling (RCS) \cite{hoyer2022daformer} to avoid the model over-fitting to frequent classes. For class $c$, the occurrence frequency in image $f_c$ is given by:
\begin{equation}
	f_c = \frac{\sum_{i=1}^N \one{c \in y_i}}{N},
\end{equation}
where $N$ is the number of samples and $y_i$ is the set of classes existing in sample $i$. The sampling probability of class $c$ is thus defined as:
\begin{equation}
	p_c = \frac{\exp\pren{\frac{1-f_c}{T}}}{\sum_{c'=1}^K\exp\pren{\frac{1-f_{c'}}{T}}},
\end{equation}
where $K$ is the number of classes, and $T$ is a hyper-parameter for sampling rate balancing. In particular, for the \textsc{cityscapes} dataset, we selected $T = 0.5$.

\begin{figure}[h!]
	\centering
	\begin{minipage}{0.24\textwidth}
		\centering
		\includegraphics[width=\textwidth]{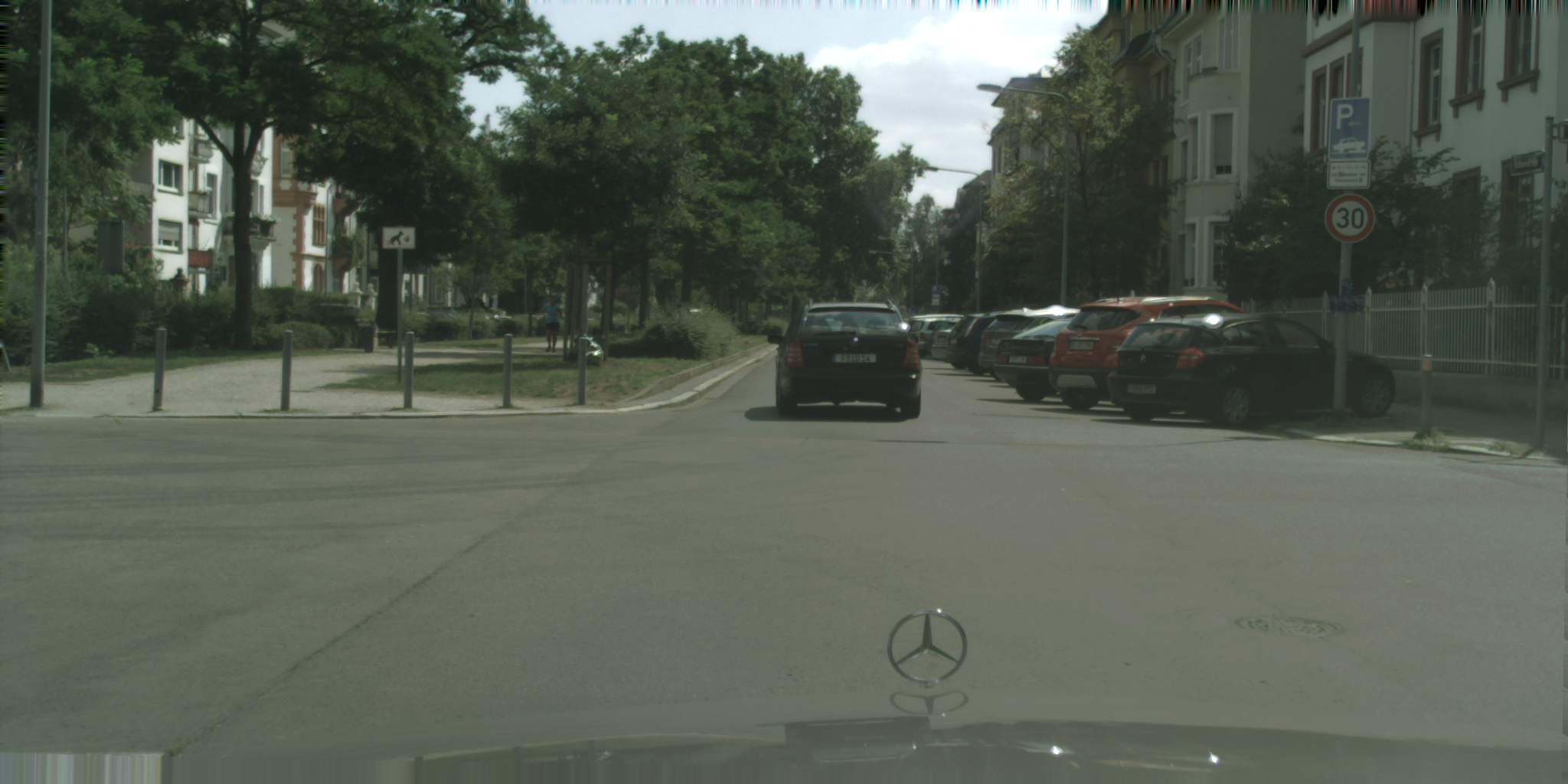}
	\end{minipage}
	\hfil
	\begin{minipage}{0.24\textwidth}
		\centering
		\includegraphics[width=\textwidth]{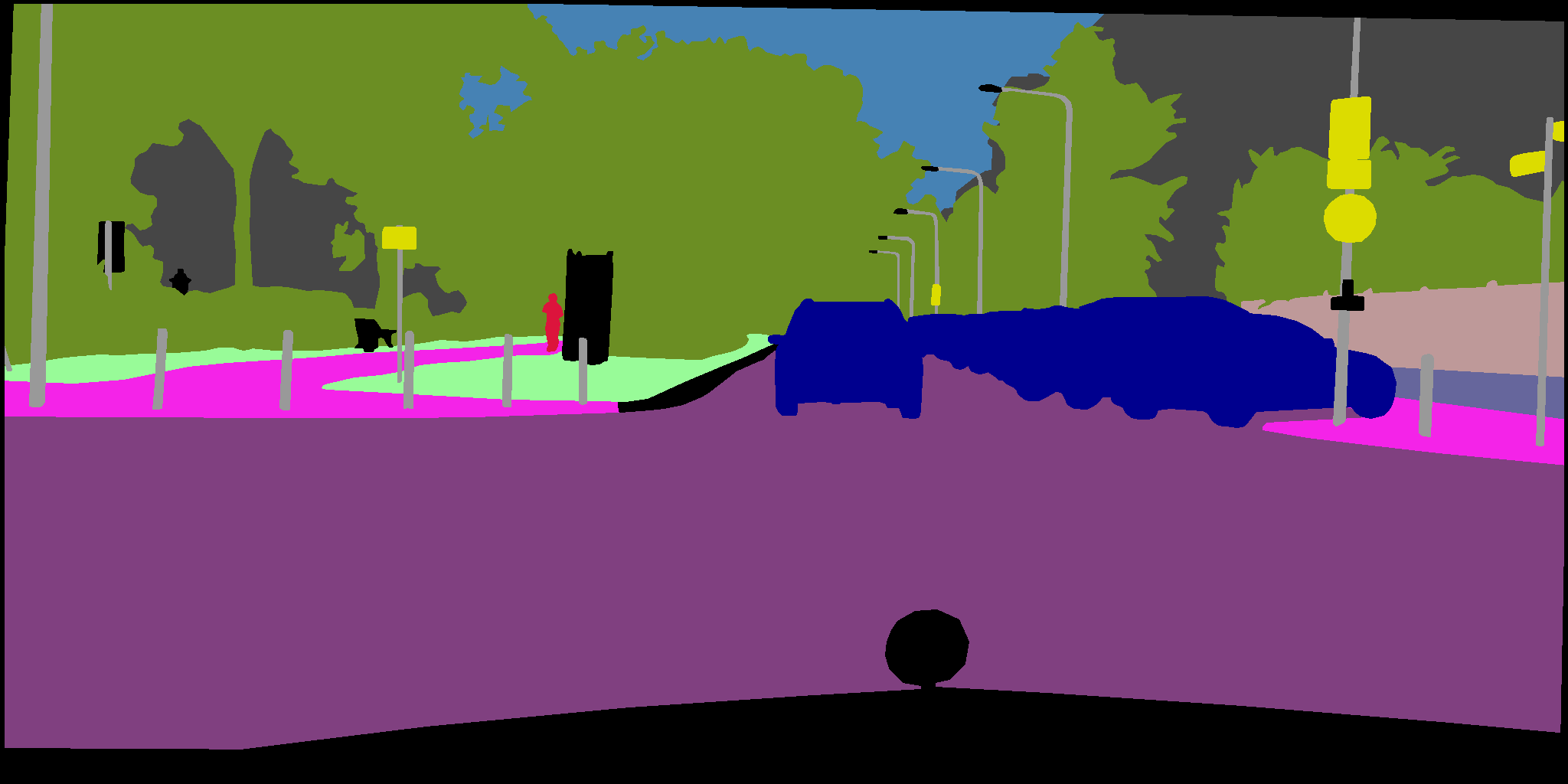}
	\end{minipage}
	\hfil
	\begin{minipage}{0.24\textwidth}
		\centering
		\includegraphics[width=\textwidth]{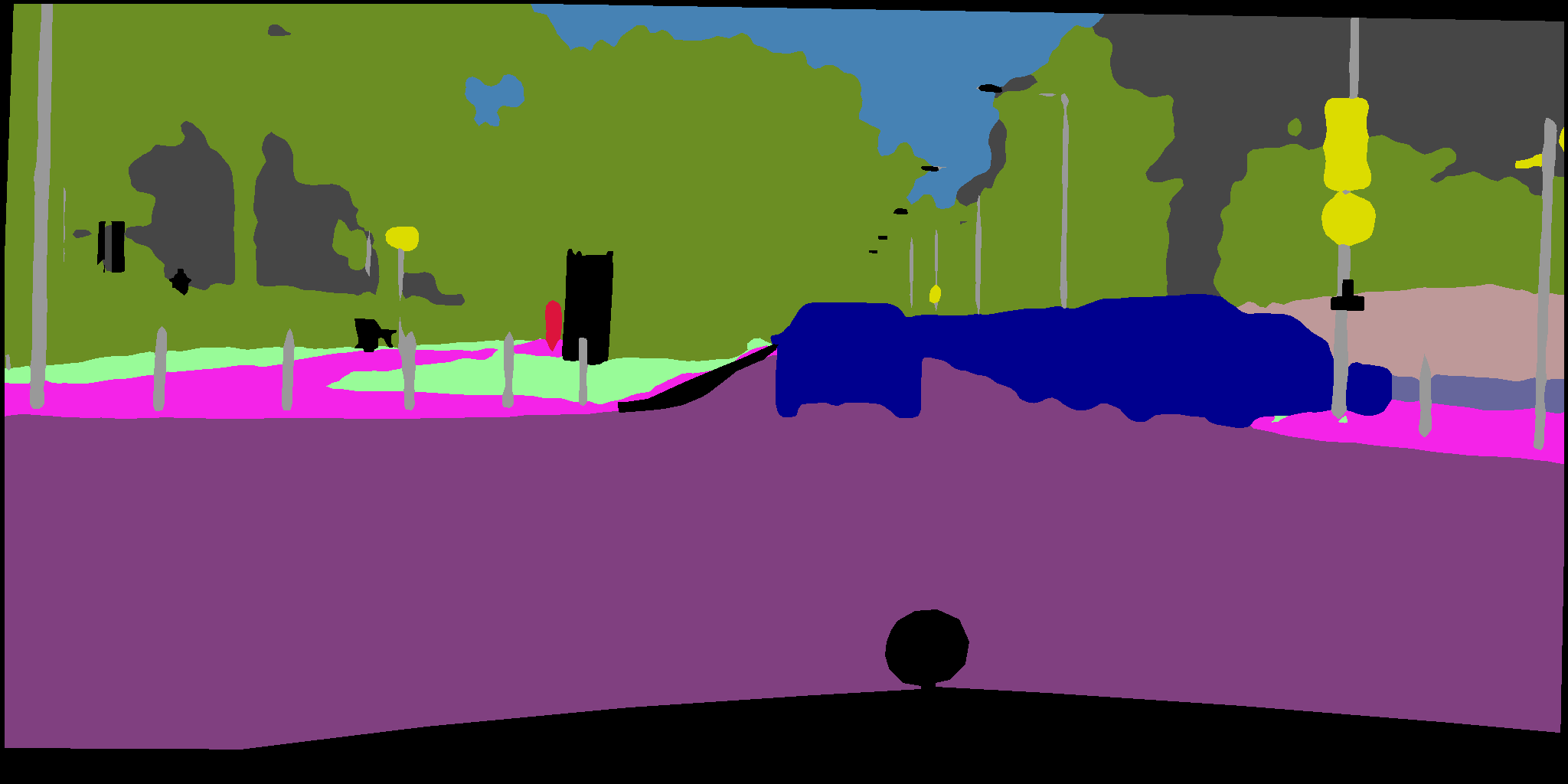}
	\end{minipage}
	\hfil
	\begin{minipage}{0.24\textwidth}
		\centering
		\includegraphics[width=\textwidth]{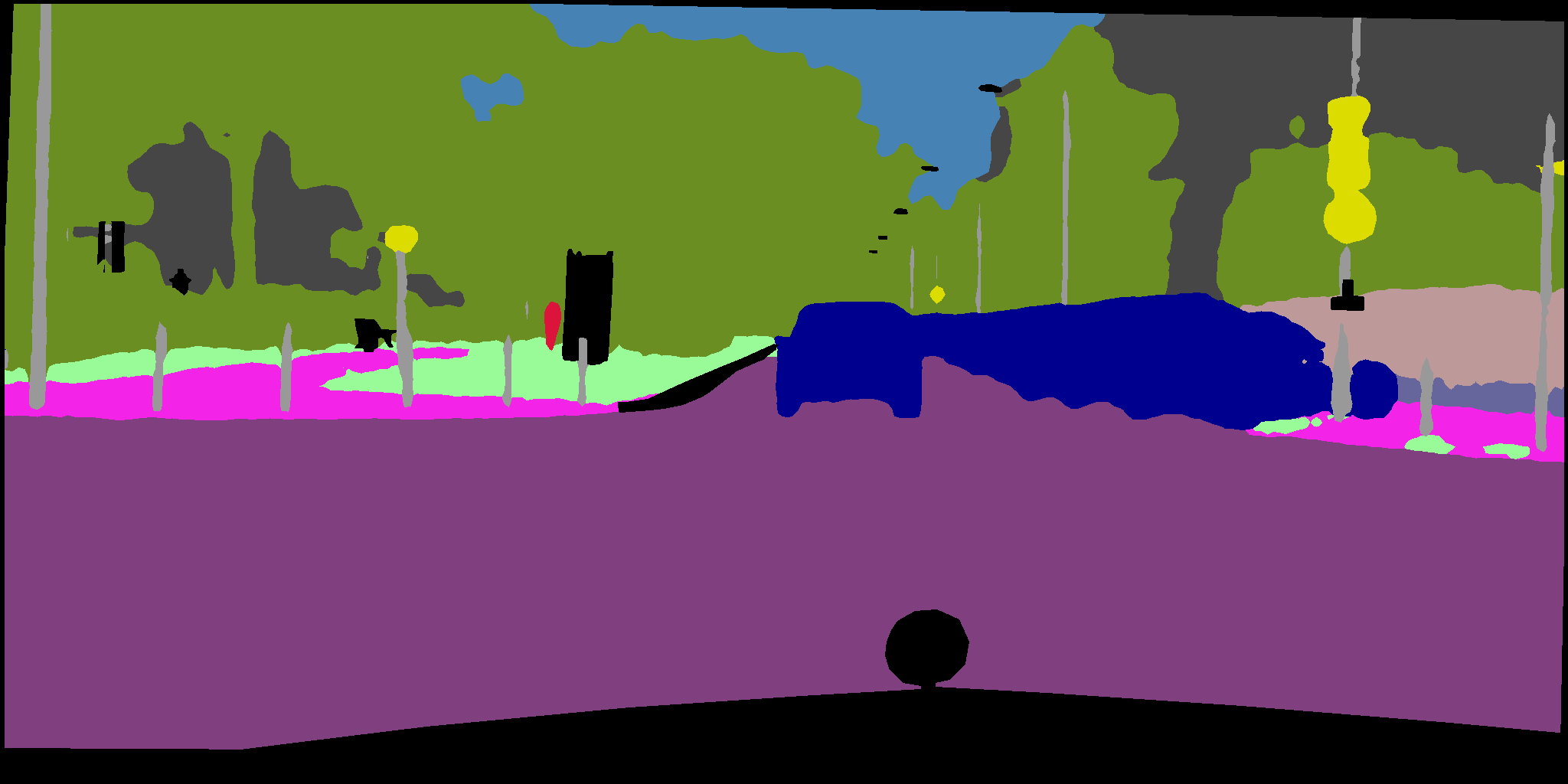}
	\end{minipage}

	\begin{minipage}{0.24\textwidth}
		\centering
		\includegraphics[width=\textwidth]{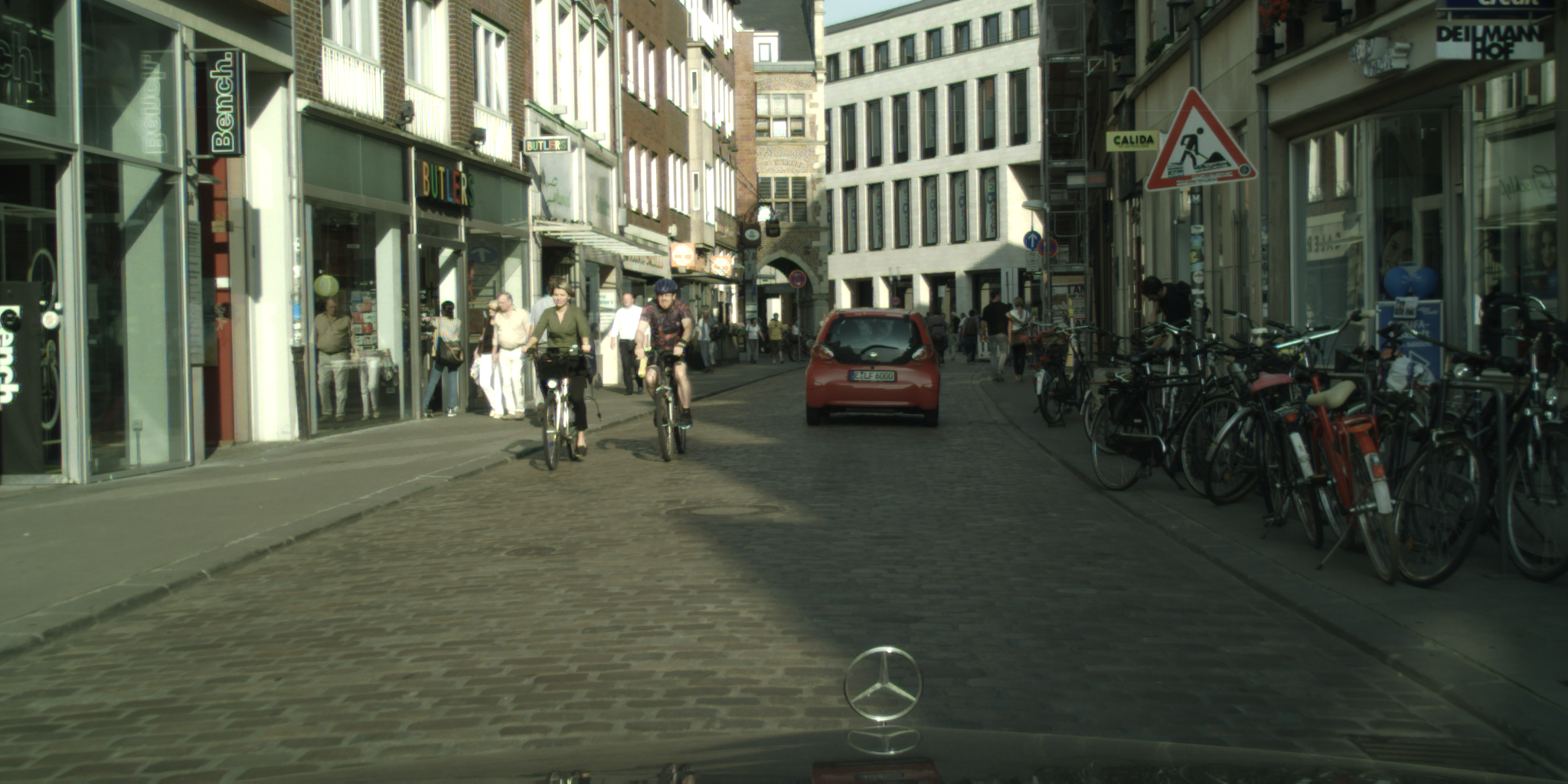}
	\end{minipage}
	\hfil
	\begin{minipage}{0.24\textwidth}
		\centering
		\includegraphics[width=\textwidth]{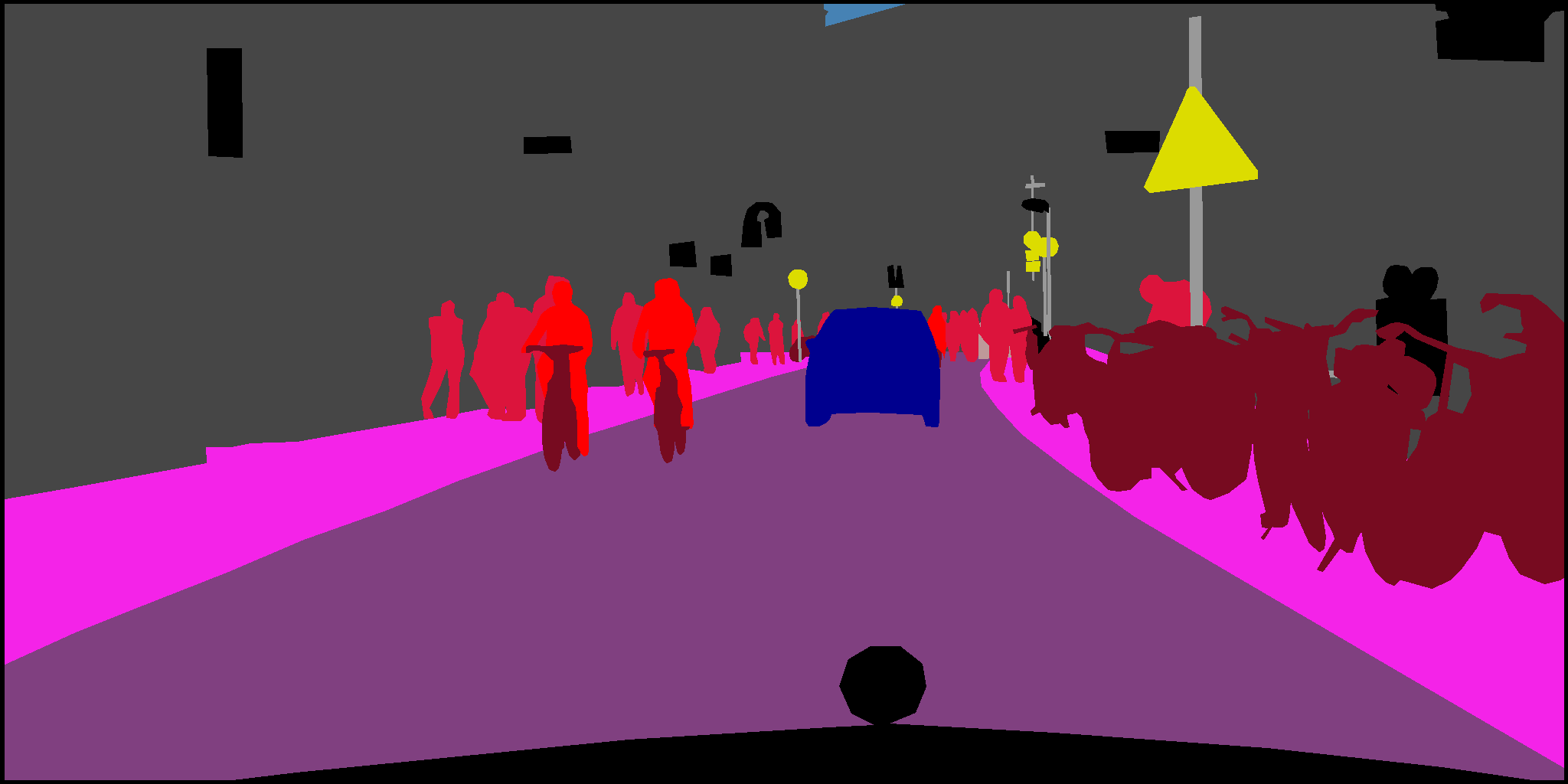}
	\end{minipage}
	\hfil
	\begin{minipage}{0.24\textwidth}
		\centering
		\includegraphics[width=\textwidth]{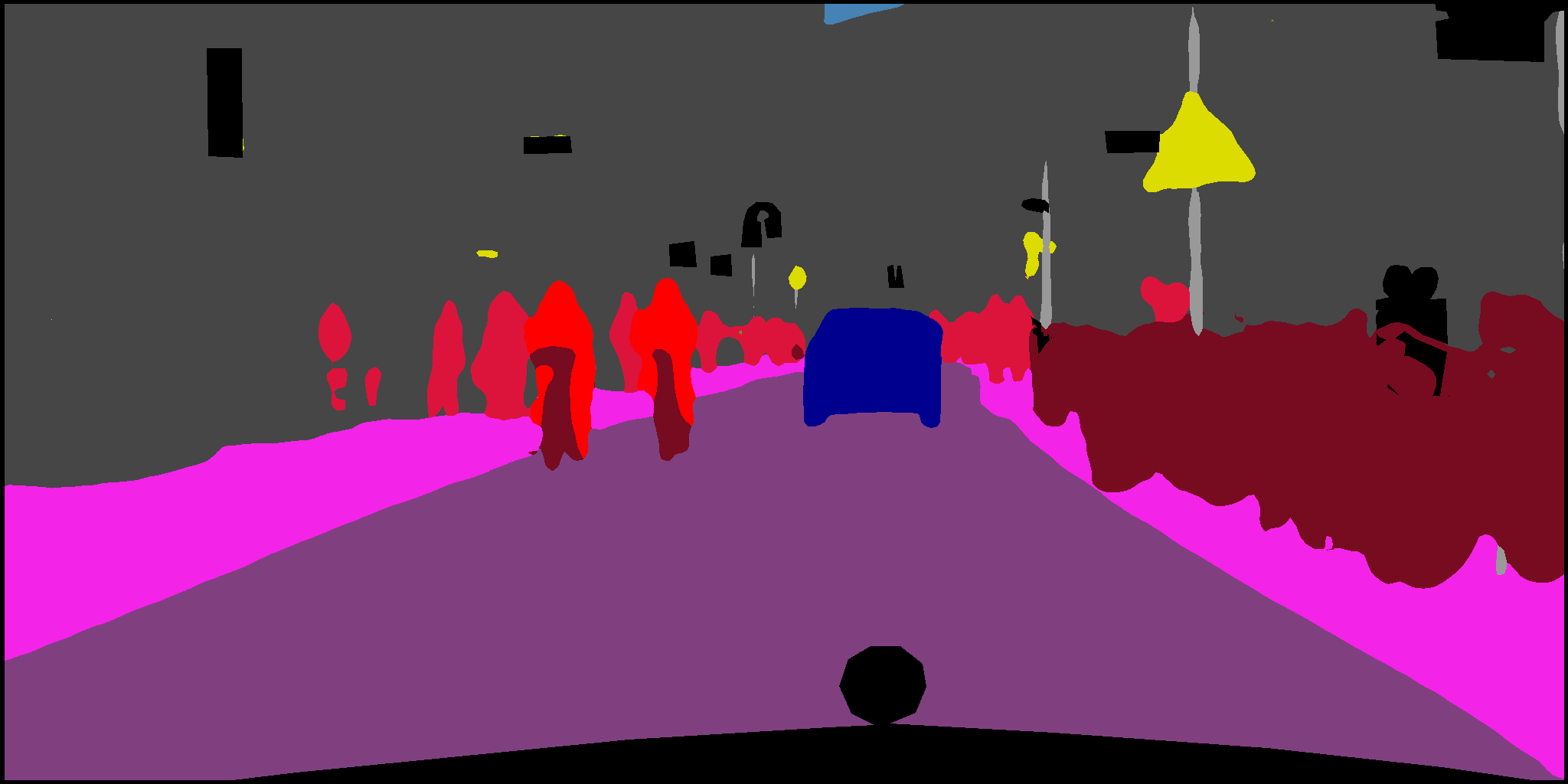}
	\end{minipage}
	\hfil
	\begin{minipage}{0.24\textwidth}
		\centering
		\includegraphics[width=\textwidth]{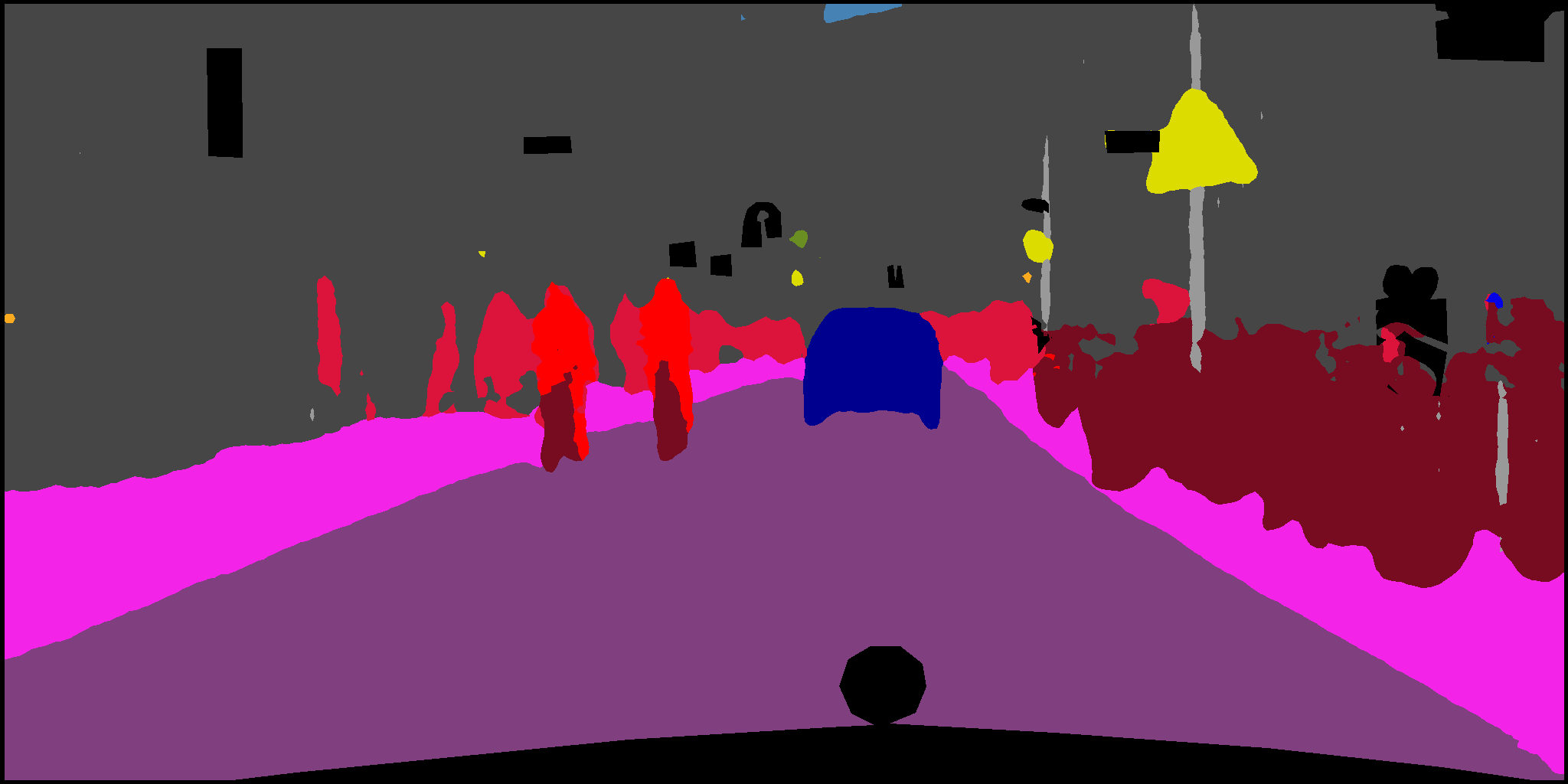}
	\end{minipage}

	\begin{minipage}{0.24\textwidth}
		\centering
		\includegraphics[width=\textwidth]{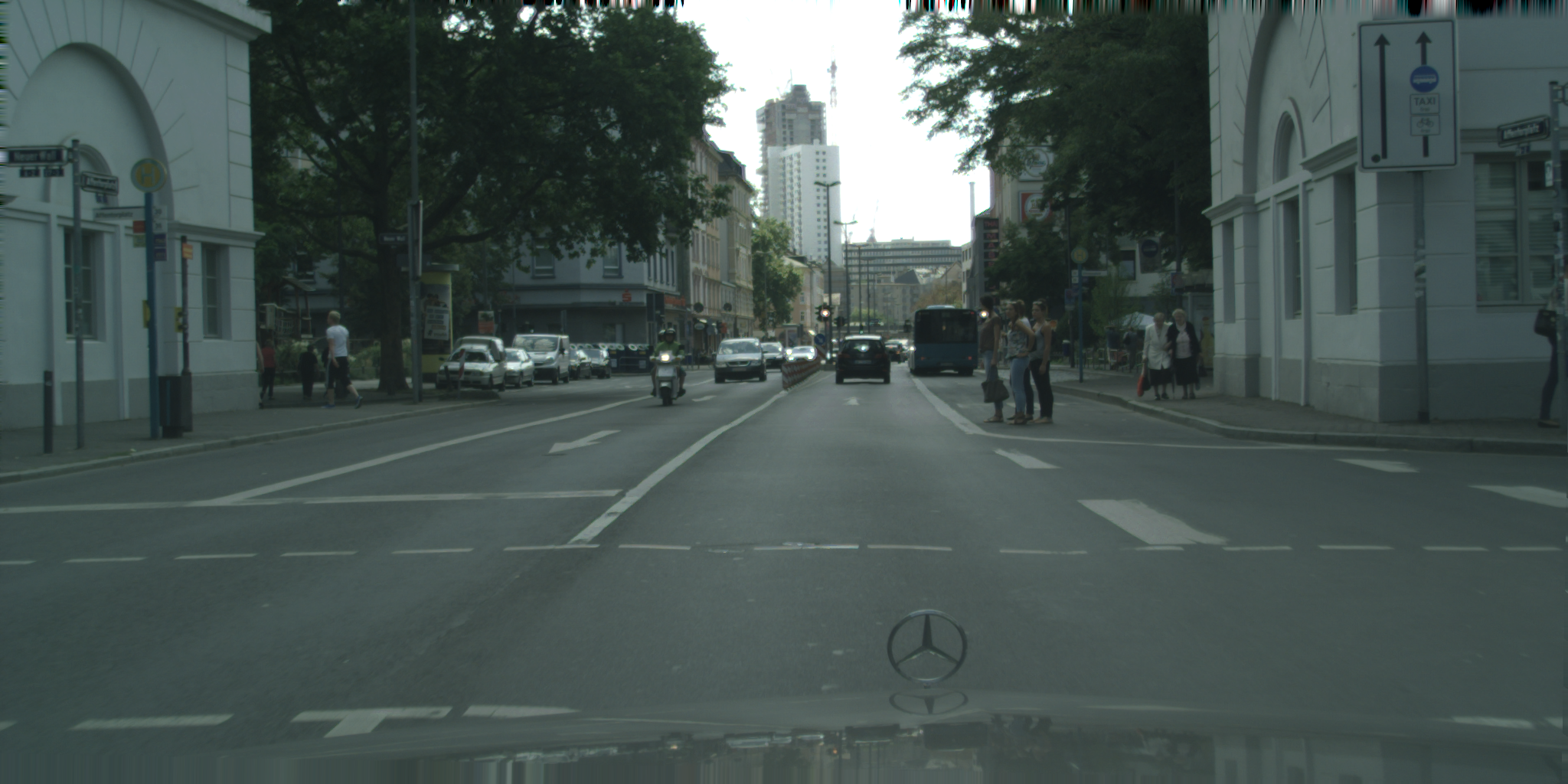}
	\end{minipage}
	\hfil
	\begin{minipage}{0.24\textwidth}
		\centering
		\includegraphics[width=\textwidth]{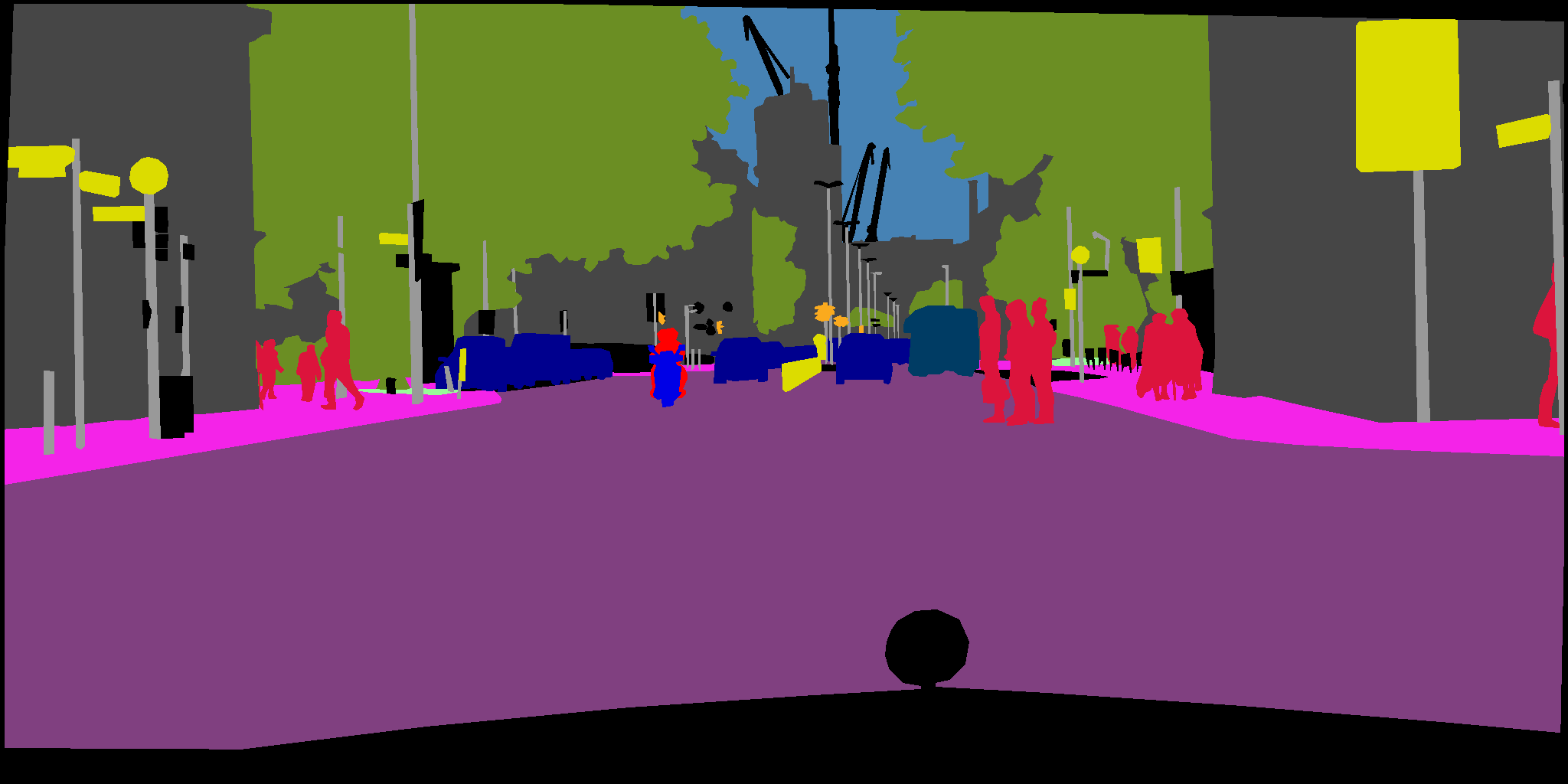}
	\end{minipage}
	\hfil
	\begin{minipage}{0.24\textwidth}
		\centering
		\includegraphics[width=\textwidth]{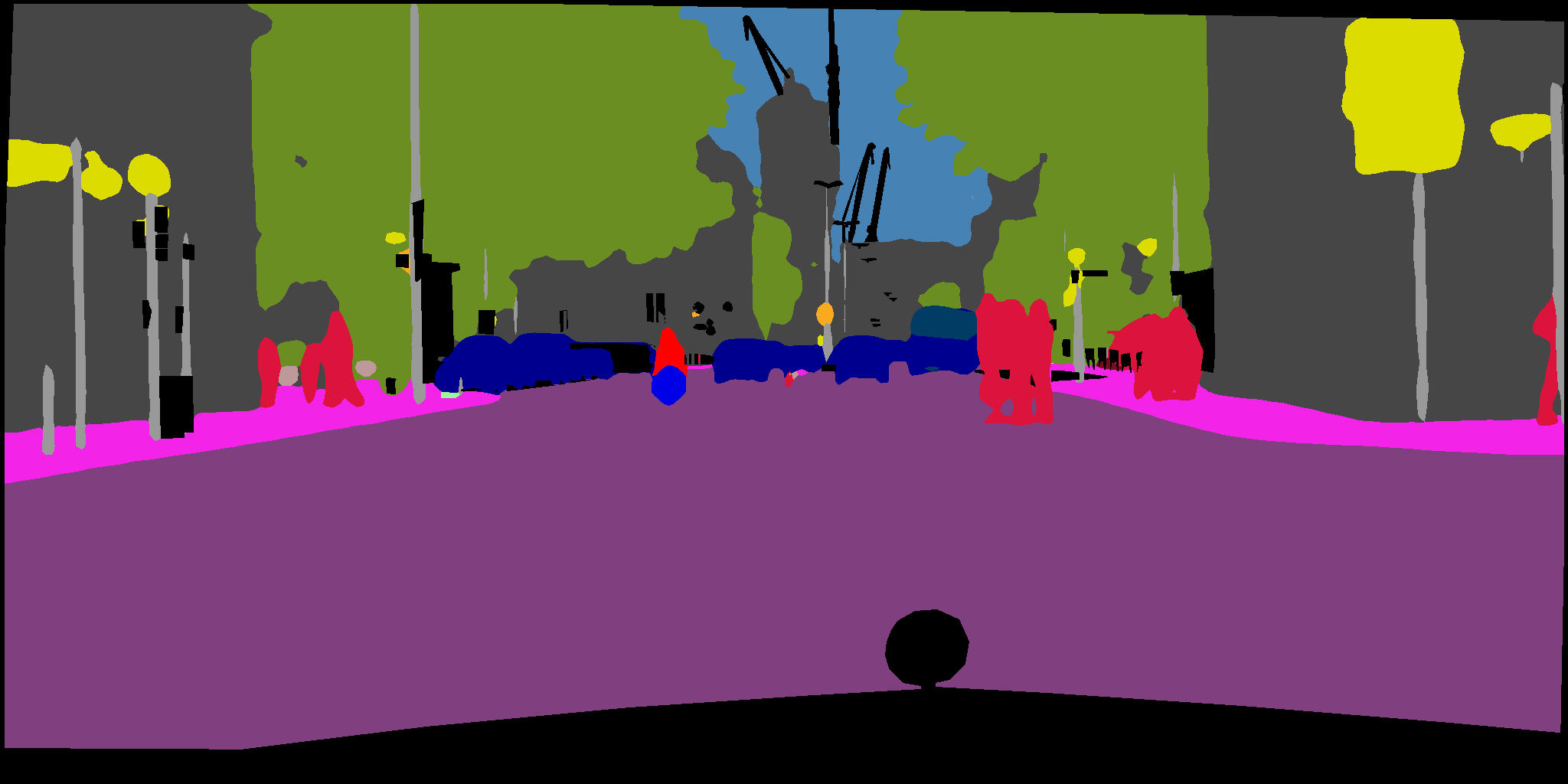}
	\end{minipage}
	\hfil
	\begin{minipage}{0.24\textwidth}
		\centering
		\includegraphics[width=\textwidth]{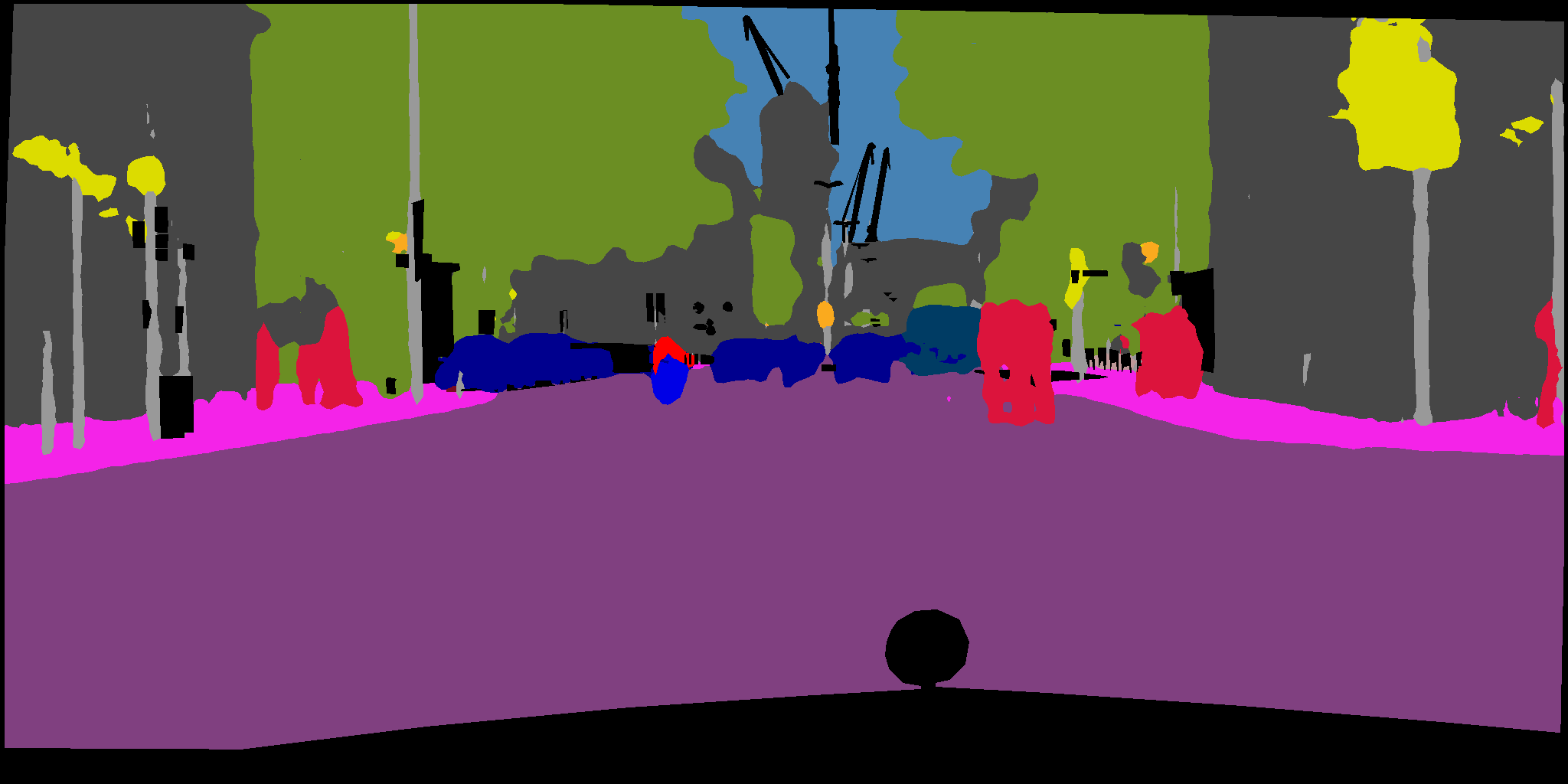}
	\end{minipage}

	\begin{minipage}{0.24\textwidth}
		\centering
		Input image
	\end{minipage}
	\hfil
	\begin{minipage}{0.24\textwidth}
		\centering
		Ground truth
	\end{minipage}
	\hfil
	\begin{minipage}{0.24\textwidth}
		\centering
		Full-precision
	\end{minipage}
	\hfil
	\begin{minipage}{0.24\textwidth}

		\centering
		\ours (Ours)
	\end{minipage}
	\vspace{.2cm}
	\caption{Qualitative comparison of Boolean model on \textsc{cityscapes} validation set.}
	\label{fig:quali_cs}

\end{figure}

\subsubsection{Qualitative analysis on cityscapes validation set}

The qualitative results of our Boolean network and the full-precision based are demonstrated in \Cref{fig:quali_cs}. Despite the loss of model capacity, the proposed Boolean network trained with Boolean logic optimizer has comparable performance with large objects in the frequent classes, even in the complicated scene.

\subsubsection{More experiments on semantic segmentation}

\begin{table}[b!]
	\centering	
	\caption{Class-wise IoU performance on \textsc{cityscapes} validation set. }
	\vspace{.2cm}
	\resizebox{\textwidth}{!}{%
		\begin{tabular}{@{}ccccccccccccccccccccc@{}}
			\toprule
			Methods         & \rotatebox{90}{road} & \rotatebox{90}{sideway} & \rotatebox{90}{building} & \rotatebox{90}{wall} & \rotatebox{90}{fence} & \rotatebox{90}{pole} & \rotatebox{90}{light} & \rotatebox{90}{sign} & \rotatebox{90}{vegetation} & \rotatebox{90}{terrain} & \rotatebox{90}{sky} & \rotatebox{90}{person} & \rotatebox{90}{rider} & \rotatebox{90}{car} & \rotatebox{90}{truck} & \rotatebox{90}{bus} & \rotatebox{90}{train} & \rotatebox{90}{motor} & \rotatebox{90}{bike} & mIoU         \\ \midrule\midrule
			FP baseline     & 97.3                 & 79.8                    & 90.1                     & 48.5                 & 55.0                  & 49.4                 & 59.2                  & 69.                  & 90.0                       & 57.5                    & 92.4                & 74.3                   & 54.6                  & 91.2                & 61.4                  & 78.3                & 66.6                  & 58.0                  & 70.8                 & 70.7         \\ \midrule
			Naive Bool ASPP & 97.3                 & 79.1                    & 89.5                     & 41.1                 & 51.2                  & 51.2                 & 52.7                  & 66.2                 & 90.1                       & 56.7                    & 92.6                & 72.8                   & 47.2                  & 91.5                & 54.5                  & 65.5                & 49.6                  & 42.7                  & 68.8                 & 66.3         \\
			$\Delta$        & \textbf{0.0}         & \textbf{0.7}            & 0.6                      & 7.4                  & 3.8                   & -1.8                 & 6.5                   & 2.8                  & -0.1                       & 0.8                     & -0.2                & \textbf{1.5}           & \textbf{7.4}          & -0.3                & 6.9                   & 12.8                & \textbf{17.0}         & 15.3                  & 2.0                  & 4.4          \\ \midrule
			\ours [Ours]            & 97.1                 & 78.                     & 89.8                     & 46.2                 & 51.3                  & 52.7                 & 53.3                  & 66.5                 & 90.2                       & 58.                     & 92.7                & 72.6                   & 45.1                  & 91.9                & 61.1                  & 68.8                & 48.7                  & 46.8                  & 69.1                 & 67.4         \\
			$\Delta$        & 0.2                  & 1.8                     & \textbf{0.3}             & \textbf{2.3}         & \textbf{3.7}          & \textbf{-3.3}        & \textbf{5.9}          & \textbf{2.5}         & \textbf{-0.2}              & \textbf{-0.5}           & \textbf{-0.3}       & 1.7                    & 9.5                   & \textbf{-0.7}       & \textbf{0.3}          & \textbf{9.5}        & 17.9                  & \textbf{11.2}         & \textbf{1.7}         & \textbf{3.3} \\ \bottomrule
		\end{tabular}%
	}
	\label{tab:comp_fp_bool}
\end{table}

We evaluated the effectiveness of \textsc{bool-aspp} by investigating the per-class performance gap to the full-precision model. As demonstrated in \Cref{tab:comp_fp_bool}, a significant gap exists between the Boolean architecture with naive \textsc{bool-aspp} design; i.e., using Boolean activations for \textsc{aspp} module as illustrated in \Cref{fig:nv_binaspp_gap}. However, the gap could be reduced by using \textsc{bool-aspp} and RCS. In particular, the \textsc{bool-aspp} improves the IoU of \textit{truck} from $54.5\%$ to $64.1\%$ and \textit{bus} from $65.5\%$ to $68.8\%$, \textit{bike} from $68.8\%$ to $69.1\%$ and \textit{motor} from $42.8\%$ to $46.8\%$. This indicates that combining proposed \textsc{bool-aspp} and RCS improves the model performance on low occurrence classes as well as similar classes with which are easy to be confused.

\subsubsection{Validation on \textsc{pascal voc 2012} dataset}

We also evaluated our Boolean model on the $21$-class \textsc{pascal voc 2012} dataset with augmented additional annotated data containing $10,582$, $1,449$, and $1,456$ images in training, validation, and test set, respectively. The same setting is used as in the experiments on the \textsc{cityscapes} dataset, except the model was trained for $60$ epochs.

As shown in \Cref{tab:seg_voc_val}, our model with fully Boolean logic training paradigm, i.e., without any additional intermediate latent weight, achieved comparable performance as the state-of-the-art latent-weight-based method. Our Boolean model improved performance by incorporating multi-resolution feature extraction modules to $67.3\%$ mIoU.

\begin{table}[htb!]
	\centering	
	\caption{Performance on \textsc{pascal voc 2012} val set.}
	\resizebox{.57\textwidth}{!}{
	\begin{tabular}{@{}cccccc@{}}
		\toprule
		\textbf{Seg. head}                                            & \textbf{Model}                                 & \textbf{mIoU} (\%)     & $\Delta$ \\ \midrule\midrule
		\multirow{3}{*}{\textsc{fcn-32s} \cite{long2015fully}}        & \textsc{fp} baseline                           & 64.9          & -        \\
		                                                     & \textsc{group-net} \cite{zhuang2019structured} & 60.5          & 4.4      \\
		                                                     & \ours [Ours]                                  & 60.1          & 4.8      \\ \midrule
		\multirow{2}{*}{\textsc{deeplabv3} \cite{chen2017rethinking}} & \textsc{fp} baseline                           & 72.1          & -        \\
		                                                     & \ours [Ours]                                  & \textbf{67.3} & 4.8      \\ \bottomrule
	\end{tabular}
	}
	\label{tab:seg_voc_val}
\end{table}

\subsection{Boolean BERT Fine-tuning} \label{supp:bert}
We conducted experiments on the \textsc{bert} model \cite{DevlinCLT19} drawing on the experimental framework proposed in \textsc{bit} \cite{Liu2022d}. For this experiment, our goal was to fine-tune the original pre-trained \textsc{bert} model using Boolean precision. To validate our method we used the \textsc{glue} benchmark \cite{Wang2019d} with 8 datasets. For our experiments, we modified the baseline \textsc{fp} architecture using the proposed methodology \Cref{alg:code_optim}. That is, core operations within the transformer are substituted by native Boolean components, e.g. Boolean activations and Boolean linear layers. The \textsc{fp} parameters were optimized using the Adam optimizer with the learning rates proposed in \cite{Liu2022d}. Correspondingly for Boolean weights, we used our Boolean optimizer with learning rate $\eta=100$.

\section{Energy Estimation}
\label{appendix:complexity}

Energy consumption is a fundamental metric for measuring hardware complexity.
However, it requires specific knowledge of computing systems and makes it hard to estimate.
Few results are available, though experimental-based and limited to specific tested models \citep[see, e.g.,][]{Gao2020, Shao2013,Mei2014,Bianco2018,Canziani2016,GarciaMartin2019}. Although experimental evaluation is precise, it requires considerable implementation efforts while not generalizing.
In addition, most relevant works are only limited to inference and not training \citep{Chen2016a, kwon2019understanding, yang2020interstellar}.

\subsection{Hardware Specification}

\paragraph{Ascend architecture.}
We intend to estimate the training energy consumption on Ascend chip architecture introduced in \cite{Liao2021} and dedicated to DNN computing.
The core design of Ascend is described in \cite{Liao2021}. 
Essentially, it introduces a 3D (cube) computing unit, providing the bulk of high-intensity computation and increasing data reuse. 
On the other hand, it provides multiple levels of on-chip memory.
In particular, memory L0, which is nearest to the computing cube, is tripled to boost further near-memory computing capability, namely L0-A dedicated to the left-hand-side (LHS) input data, L0-B dedicated to RHS input data, and L0-C for the output. For instance, in a convolution, L0-A, L0-B, and L0-C correspond to the input feature maps (\textsc{ifmaps}), \textsc{filters}, and output feature maps (\textsc{ofmaps}), respectively. 
In addition, the output results going through L0-C can be processed by a Vector Unit for in-place operations such as normalization and activation. 
\Cref{tab:energy_cost} shows energy efficiency and capacity of the memory hierarchy of a commercial Ascend architecture \cite{Liao2021}. 

\begin{table}[H]
	\centering
    \caption{Memory hierarchy and energy efficiency (EE) of an Ascend core \citep{Liao2021} used in our evaluation.}	
    \label{tab:energy_cost}

	\vspace{1ex}
	
    \resizebox{.7\columnwidth}{!}{	
		\begin{tabular}{ccccccc } 
			\toprule
			& \textbf{L3 (DRAM)} & \textbf{L2} & \textbf{L1} & \textbf{L0-A} & \textbf{L0-B} & \textbf{L0-C} \\ 
			\midrule\midrule
			EE [GBPS/mW] & 0.02 & 0.2 & 0.4 & 4.9 & 3.5 & 5.4 \\
			\midrule
			Capacity [KB] & $-$ & 8192 & 1024 & 64 & 64 & 256 \\				
			\bottomrule
		\end{tabular}
	}
\end{table}

\paragraph{Nvidia architecture.}
We also have estimated the energy consumption for	 the Nvidia GPU (Tesla V100). Similarly, this architecture utilizes different memory levels with varying read and write energy characteristics. For instance, the L2 cache size is 6 MB per GPU. The L1 cache is 64 KB per Streaming Multiprocessor (SM). Each Tesla V100 GPU has 80 SMs, so the total L1 cache is 5120 KB (5 MB). However, specific details on the read and write energy consumption for each memory level of the Tesla V100 GPU are proprietary and not publicly disclosed by Nvidia. Thus, we show the normalized energy consumption relative to the computation of a MAC at the arithmetic logic unit (ALU).
\cref{tab:nvidia_cost} shows the numbers for each storage level, which are extracted from a commercial 65 nm process \citep{Chen2016}.

\begin{table}[H]
    \centering
    \caption{Normalized energy cost relative to the computation of one MAC operation at ALU.}
    \label{tab:nvidia_cost}

    \vspace{1ex}

    \resizebox{.48\columnwidth}{!}{
        \begin{tabular}{ccccc} \toprule
            \textbf{DRAM} & \textbf{L2} & \textbf{L1} & \textbf{RF} & \textbf{1 MAC at ALU} \\ \midrule\midrule
            $200\times$   & $6\times$   & $2\times$   & $1\times$   & $1\times$             \\ \bottomrule
        \end{tabular}
    }
\end{table}

\subsection{Compute Energy}
Energy consumption is the sum of compute and memory energies.
\emph{Compute energy} is simply given by the number of arithmetic operations multiplied by their unit cost.
The number of arithmetic operations is directly determined from the layer's parameters. For the Ascend architecture, their unit cost is obtained by considering the compute efficiency at 1.7 TOPS/W \cite{Liao2021}. 
For Boolean logic operations, we follow the usual estimation that ADD INT-$n$ costs $(2n-1)$ logic operations where $n$ stands for bitwidth.

\subsection{Memory Energy}
On the other hand, \emph{memory energy} is all consumed for moving data between their storage through memory levels and the computing unit during the entire lifetime of the process.
Since energy consumed at each memory level is given by the number of data accesses to that level times per-access energy cost, it consists in determining the number of accesses to each level of all data streams (i.e., LHS, RHS, Output).
Besides taking into account the hardware architecture and memory hierarchy of chip, our approach to quantifying memory energy is based on existing methods \cite{Chen2016a, Sze2017, kwon2019understanding, yang2020interstellar, Horowitz2014,Yang2017a} for dataflow and energy evaluation.
Given the layer parameters and memory hierarchy, it amounts to:
\begin{enumerate}
	\item \emph{Tiling}: determining the tiling strategy for allocating data streams on each memory level.
	\item \emph{Movement}: specifying how data streams are reused or kept stationary to determine their access numbers.
\end{enumerate}
In the following, we present our method for the forward and backward passes by taking the example of a convolution layer, as convolutions are the main components of \glspl{CNN} and the primary source of complexity due to their high data reuse.
The parameters of 2D convolution layer are summarized in  \Cref{tab:shape_parameters}. 
Here, we denote \textsc{ifmaps}, \textsc{filters}, and \textsc{ofmaps} by $I$, $F$, and $O$, respectively. 

\begin{table}[H]
	\centering
	\caption{Shape parameters of a convolution layer.}
	\label{tab:shape_parameters}

    \vspace{1ex}

	\resizebox{0.47\columnwidth}{!}{
		{\setlength{\tabcolsep}{10pt}
			{
				\begin{tabular}{ cc } 
					\toprule
					\textbf{Parameter} & \textbf{Description} \\ 
					\midrule\midrule
					$N$ & batch size\\
					\midrule
					$M$ & number of \textsc{\textsc{ofmaps}} channels\\
					\midrule
					$C$ & number of \textsc{ifmaps} channels\\
					\midrule
					$H^I/W^I$ & \textsc{ifmaps} plane height/width\\
					\midrule
					
					$H^F/W^F$ & \textsc{filters} plane height/width\\
					\midrule
					$H^O/W^O$ & \textsc{\textsc{ofmaps}} plane height/width\\
					\bottomrule
		\end{tabular}}}
	}
	
\end{table}

\subsubsection{Tiling}
Since the \textsc{ifmaps} and \textsc{filters} are usually too large to be stored in buffers, the tiling strategy is aimed at efficiently transferring them to the computing unit. Determining tiling parameters, which are summarized in \Cref{tab:tile_parameters}, is an NP-Hard problem \cite{yang2020interstellar}.
\begin{table}[H]
	\centering
    \caption{Tiling parameters of a convolution layer.}
    \label{tab:tile_parameters}

    \vspace{1ex}

    \resizebox{0.58\columnwidth}{!}{
	{
		{
			\begin{tabular}{ cc } 
				\toprule
				\textbf{Parameter} & \textbf{Description} \\
				\midrule\midrule
				$M_2$ & number of tiling weights in L2 buffer \\
				\midrule
				$M_1$ & number of tiling weights in L1 buffer \\
				\midrule
				$M_0$ & number of tiling weights in L0-B buffer \\
				\midrule
				$N_2$ & number of tiling \textsc{ifmaps} in L2 buffer \\
				\midrule
				$N_1$ & number of tiling \textsc{ifmaps} in L1 buffer \\
				\midrule
				$N_0$ & number of tiling \textsc{ifmaps} in L0-A buffer \\
				\midrule
				$H^{I}_2/W^{I}_2$ & height/width of tiling \textsc{ifmaps} in L2 buffer \\
				\midrule
				$H^{I}_1/W^{I}_1$ & height/width of tiling \textsc{ifmaps} in L2 buffer \\
				\midrule
				$H^{I}_0/W^{I}_0$ & height/width of tiling \textsc{ifmaps} in L0-A buffer\\
				\bottomrule
	\end{tabular}}}
    }
    
\end{table}

An iterative search over possibilities subject to memory capacity constraint provides tiling combinations of \textsc{ifmaps} and \textsc{filters} on each memory level. Different approaches can be used and  \Cref{algo:TilingStrategy} shows an example that explores the best tiling parameters subjected to maximizing the buffer utilization and near compute stationary (\ie, as much reuse as possible to reduce the number of accesses to higher levels).
Therein, the amount of data stored in level $\mathit{i}$ is calculated as:
\begin{equation}
	\begin{aligned}
		Q^{I}_i & = N_i \times C_i \times H^{I}_i \times W^{I}_i \times b^I, \\
		Q^{F}_i & = M_i \times C_i \times H^{F} \times W^{F} \times b^F,
	\end{aligned}
\end{equation}
where $Q^{I}_i/Q^{F}_i$ and $b^{I}/b^{F}$ represent the memory and bitwidth of \textsc{ifmaps}/\textsc{filters}, respectively.

\begin{algorithm}[H]
	\caption{Loop tiling strategy in the $i$th level}
	\label{algo:TilingStrategy}
	\SetKwBlock{Initialize}{Initialize}{end}
	\SetAlgoLined
	\SetNoFillComment
	\KwIn{tiling parameters of \textsc{ifmaps} and \textsc{filters} at level $i+1$, and buffer capacity of level $i$.}
	\KwOut{tiling parameters of \textsc{ifmaps} and \textsc{filters} at level $i$.}
	\Initialize{$\Ec^{\textrm{min}} := \infty$;}
	\For{$M_i \gets M_{i+1}$ to $1$}
	{
		\For{$N_i \gets N_{i+1}$ to $1$}
		{
			\For{$H^{I}_i \gets H^{I}_{i+1}$ to $H^{F}$}
			{
				\For{$W^{I}_i \gets W^{I}_{i+1}$ to $W^{F}$}
				{
					Calculate $Q_i$, the required amount of \textsc{ifmaps} and \textsc{filters} to be stored in the $i$th level of capacity $Q^{\textrm{max}}_i$;\\
					Calculate $\Ec_i$, the energy cost of moving \textsc{ifmaps} and \textsc{filters} from the $i$th level;\\
					\If{($Q_i \leq Q^{\textrm{max}}_i$) and ($\Ec_i < \Ec^{\textrm{min}}$)}
					{
						Retain tiling parameters as best;\\
						$\Ec^{\textrm{min}} \gets \Ec_i$;			
					}
				}
			}
		}
	}
	\KwRet{Best tiling parameters}
\end{algorithm}

\subsubsection{Data movement}
For data movement, at level L0, several data stationary strategies, called \emph{dataflows}, have been proposed in the literature, notably weight, input, output, and row stationary \cite{Chen2016a}. Since Ascend chip provides tripled L0 buffers, partial sums can be directly stationary in the computing cube, hence equivalent to output stationary whose implementation is described in \cite{du2015shidiannao}. 
For the remaining levels, our question of interest is how to move \textsc{ifmaps} block $[N_{i+1}, C_{i+1}, H^I_{i+1}, W^I_{i+1}]$ and \textsc{filters} block $[M_{i+1}, C_{i+1}, H^F, W^F]$ from level $i+1$ to level $i$ efficiently. Considering that:
\begin{itemize}
	\item \textsc{ifmaps} are reused by the \textsc{filters} over output channels,
	\item \textsc{filters} are reused over the \textsc{ifmaps} spatial dimensions,
	\item \textsc{filters} are reused over the batch dimension,
	\item \textsc{ifmaps} are usually very large whereas \textsc{filters} are small,
\end{itemize}
the strategy that we follow is to keep \textsc{filters} stationary on level $i$ and cycle through \textsc{ifmaps} when fetching them from level $i+1$ as shown in \Cref{algo:DataMovement}. Therein, \textsc{filters} and \textsc{ifmaps} are read block-by-block of their tiling sizes, \ie, \textsc{filters} block $[M_i, C_i, H^F, W^F]$ and \textsc{ifmaps} block  $[N_i, C_i, H^I_i, W^I_i]$. Hence, the number of filter accesses to level $i+1$ is 1 whereas the number of \textsc{ifmaps} accesses to level $i+1$ equals the number of level-$i$ \textsc{filters} blocks contained in level $i+1$. Following this method, the number of accesses to memory levels of each data stream can be determined. Hence, denote by: 
\begin{itemize}
	\item $n_i^d$: number of accesses to level $i$ of data $d$,
	\item $\varepsilon_i$: energy cost of accessing level $i$, given as the inverse of energy efficiency from \Cref{tab:energy_cost}.
\end{itemize} 
Following \cite{Chen2016a}, the energy cost of moving data $d$ from DRAM (L3) into the cube is given as:
\begin{equation}\label{eq:data_reuse_energy}
	\Ec^d = n^d_3 \varepsilon_3 + n^d_3 n^d_2 \varepsilon_2 + n^d_3 n^d_2 n^d_1 \varepsilon_1 + n^d_3 n^d_2 n^d_1 n^d_0 \varepsilon_0.
\end{equation}
Regarding the output partial sums, the number of accumulations at each level is defined as the number of times each data goes in and out of its lower-cost levels during its lifetime. Its data movement energy is then given as: 
\begin{multline}\label{eq:psum_energy}
	\Ec^{O} = (2n^O_3-1) \varepsilon_3 + 2 n^O_3 (n^O_2-1) \varepsilon_2 + 2n^O_3 n^O_2 (n^O_1-1) \varepsilon_1 + 2n^O_3 n^O_2 n^O_1 (n^O_0-1) \varepsilon_0,
\end{multline}
where factor of 2 accounts for both reads and writes and the subtraction of 1 is because we have only one write in the beginning \cite{Chen2016a}.

\begin{algorithm}[H]
	\caption{Data movement from $i+1$ to $i$ levels}
	\label{algo:DataMovement}
	\SetAlgoLined
	\SetNoFillComment
	\KwIn{tiling parameters of \textsc{ifmaps} and \textsc{filters} at levels $i+1$ and $i$.}
	\Repeat{all \textsc{filters} are read into level $i$}{
		read next \textsc{filters} block of size $[M_i, C_i, H^F, W^F]$ from levels $i+1$ to $i$\;
		\Repeat{all \textsc{ifmaps} are read into level $i$}{
			read next \textsc{ifmaps} block of size $[N_i, C_i, H^I_i, W^I_i]$ from levels $i+1$ to $i$\;
			let the data loaded to $i$ be processed\;
		}
	}
\end{algorithm}

\subsubsection{Forward}
In the forward pass, there are three types of input data reuse:
\begin{itemize}
	\item For an $H^I \times W^I$ \textsc{ifmap}, there are $H^O \times W^O$ convolutions performed with a single $H^F \times W^F$ filter to generate a partial sum. The filter is reused $H^O \times W^O$ times, and this type of reuse is defined as \emph{filter convolutional reuse}. Also, each feature in the \textsc{ifmaps} is reused $H^F \times W^F$ times, and this is called \emph{feature convolutional reuse}.
	\item Each \textsc{ifmap} is further reused across $M$ filters to generate $M$ output channels. This is called \emph{\textsc{ifmaps} reuse}.
	\item Each \textsc{filter} is further reused across the batch of $N$ \textsc{ifmaps}. This type of reuse is called \emph{filter reuse}.
\end{itemize}

From the obtained tiling parameters, the number of accesses that is used for \eref{eq:data_reuse_energy} and  \eref{eq:psum_energy} is determined by taking into account the data movement strategy as shown in \Cref{algo:DataMovement}. As a result, \Cref{tab:nb_reuse_forward} summarizes the number of accesses to memory levels for each data type in the forward pass. Therein, $\alpha^{v}=H^{O}/H^{I}$, $\alpha^{h}=W^{O}/W^{I}$, $H^{O}_\mathit{i}/W^{O}_\mathit{i}$ define the height/width of tiling \textsc{ofmaps} in L$\mathit{i}$ buffers, $\alpha^{v}_\mathit{i}=H^{O}_\mathit{i}/H^{I}_\mathit{i}$, and $\alpha^{h}_\mathit{i}=W^{O}_\mathit{i}/W^{I}_\mathit{i}$ for $i=2,1,$ and 0.

\begin{table}[H]
	\centering
	\caption{Numbers of accesses at different memory levels of forward convolution.}
	\label{tab:nb_reuse_forward}

	\vspace{1ex}

	\resizebox{\columnwidth}{!}{
	{
		{
			\begin{tabular}{ ccccc } 
				\toprule
				\textbf{Data} & \textbf{DRAM (L3)} & \textbf{L2} & \textbf{L1} & \textbf{L0} \\
				\midrule\midrule
				$I$ ($n_i^I$) & $\big \lceil \frac{M}{M_2} \big \rceil \times \frac{\alpha^{v}}{\alpha^{v}_2} \times  \frac{\alpha^{h}}{\alpha^{h}_2}  $ & $\big \lceil \frac{M_2}{M_1} \big \rceil \times  \frac{\alpha^{v}_2}{\alpha^{v}_1} \ \times  \frac{\alpha^{h}_2}{\alpha^{h}_1}  $ & $\big \lceil \frac{M_1}{M_0} \big \rceil \times \frac{\alpha^{v}_1}{\alpha^{v}_0}  \times  \frac{\alpha^{h}_1}{\alpha^{h}_0} $ & $H^{F} \times W^{F} \times \alpha^{v}_0 \times \alpha^{h}_0$ \\
				\midrule
				$F$ ($n_i^F$) & 1 & $\big \lceil \frac{N}{N_2} \big \rceil \times \big \lceil \frac{H^{O}}{H^{O}_2} \big \rceil \times \big \lceil \frac{W^{O}}{W^{O}_2} \big \rceil$ & $\big \lceil \frac{N_2}{N_1} \big \rceil \times \big \lceil \frac{H^{O}_2}{H^{O}_1} \big \rceil \times \big \lceil \frac{W^{O}_2}{W^{O}_1} \big \rceil$ & $\big \lceil \frac{N_1}{N_0} \big \rceil \times \big \lceil \frac{H^{O}_1}{H^{O}_0} \big \rceil \times \big \lceil \frac{W^{O}_1}{W^{O}_0} \big \rceil$\\
				\midrule
				$O$ ($n_i^O$) & 1 & 1 & 1 & 1\\
				\bottomrule
	\end{tabular}}}
	}
\end{table}

\subsubsection{Backward}
For the backward pass, given that $\partial\Loss/\partial O$ is backpropagated from the downstream, it consists in computing  $\partial\Loss/\partial F$ and $\partial\Loss/\partial I$. Following the derivation of backpropagation in CNNs by \cite{zhang2016derivation}, it is given that:
\begin{align}
	\partial\Loss/\partial F & = \textrm{Conv}(I, \partial\Loss/\partial O), \label{eq:local_gradient_filter} \\
	\partial\Loss/\partial I & = \textrm{Conv}(\textrm{Rot}_\pi(F), \partial\Loss/\partial O), \label{eq:local_gradient_input}
\end{align}
where $\textrm{Rot}_\pi(F)$ is the filter rotated by $180$-degree. As a result, the backward computation structure is also convolution operations, hence follows the same process as detailed above for the forward pass. For instance, \Cref{tab:nb_reuse_backward} summarizes the number of accesses at each memory level in the backward pass when calculating the gradient $G^I = \partial\Loss/\partial I$. Therein, $C_\mathit{i}$ defines the number of tiling \textsc{ifmaps} in L$\mathit{i}$ buffer, $\beta^{v}=H^{I}/H^{O}$, $\beta^{h}=W^{I}/W^{O}$, $\beta^{v}_\mathit{i}=H^{I}_\mathit{i}/H^{O}_\mathit{i}$, and $\beta^{h}_\mathit{i}=W^{I}_\mathit{i}/W^{O}_\mathit{i}$ for $i=2,1,$ and 0.

\begin{table}[H]
	\centering
    \caption{Numbers of accesses at different memory levels for $\partial\Loss/\partial I$.}
    \label{tab:nb_reuse_backward}

    \vspace{1ex}

    \resizebox{\columnwidth}{!}{
	{
            \begin{tabular}{ ccccc } 
				\toprule
				\textbf{Data} & \textbf{DRAM (L3)} & \textbf{L2} & \textbf{L1} & \textbf{L0} \\
				\midrule\midrule
				$O$ ($n_i^O$) & $\big \lceil \frac{C}{C_2} \big \rceil \times \frac{\beta^{v}}{\beta^{v}_2} \times  \frac{\beta^{h}}{\beta^{h}_2}  $ & $\big \lceil \frac{C_2}{C_1} \big \rceil \times  \frac{\beta^{v}_2}{\beta^{v}_1} \ \times  \frac{\beta^{h}_2}{\beta^{h}_1}  $ & $\big \lceil \frac{C_1}{C_0} \big \rceil \times \frac{\beta^{v}_1}{\beta^{v}_0}  \times  \frac{\beta^{h}_1}{\beta^{h}_0} $ & $H^{F} \times W^{F} \times \beta^{v}_0 \times \beta^{h}_0$ \\
				\hline
				$F$ ($n_i^F$) & 1 & $\big \lceil \frac{N}{N_2} \big \rceil \times \big \lceil \frac{H^{I}}{H^{I}_2} \big \rceil \times \big \lceil \frac{W^{I}}{W^{I}_2} \big \rceil$ & $\big \lceil \frac{N_2}{N_1} \big \rceil \times \big \lceil \frac{H^{I}_2}{H^{I}_1} \big \rceil \times \big \lceil \frac{W^{I}_2}{W^{I}_1} \big \rceil$ & $\big \lceil \frac{N_1}{N_0} \big \rceil \times \big \lceil \frac{H^{I}_1}{H^{I}_0} \big \rceil \times \big \lceil \frac{W^{I}_1}{W^{I}_0} \big \rceil$\\
				\midrule
				$G^I$ ($n_i^{G^I}$) & 1 & 1 & 1 & 1\\
				\bottomrule
	\end{tabular}}}
\end{table}

\section{Broader Impacts}
\label{appendix:broader_impacts}

Our multidomain comprehensive examination confirms that it is possible to create high performing binary deep neural networks thanks to the proposed method. 
Our findings suggest the positive impacts in many domains, making deep learning more environmentally friendly, in particular reduce the complexity of huge models like \glspl{LLM}, and enabling new applications like online, incremental, on-device training, and user-centric AI models.
Given the prevalence of \glspl{LLM}, our approach can facilitate faster predictions on more affordable devices, contributing to the democratization of deep learning. 
On the other hand, computing architectures have been so far pushed far from its native logic by high-precision arithmetic applications, e.g., 16-bit floating-point is currently a most popular AI computing architecture. Boolean logic deep learning would motivate new software optimization and hardware accelerator architectures in the direction of bringing them back to the native Boolean logic computing. 
The proposed mathematical notion and its calculus could also benefit other fields such as circuit theory, binary optimization, etc.

\end{document}